\newtheorem{theorem}{Theorem}[]
\newtheorem{lemma}{Lemma}
\newtheorem{proposition}{Proposition}
\newtheorem{definition}{Definition}
\newtheorem{claim}{Claim}
\newtheorem{assumption}{Assumption}
\title{Unsupervised Anomaly Detection for Tabular Data\\ Using Noise Evaluation}
\author{
    Wei Dai, ~Kai Hwang, ~Jicong Fan\thanks{Corresponding author.}\\
}
\begin{document}

\maketitle

\begin{abstract}
 Unsupervised anomaly detection (UAD) plays an important role in modern data analytics and it is crucial to provide simple yet effective and guaranteed UAD algorithms for real applications. In this paper, we present a novel UAD method for tabular data by evaluating how much noise is in the data. Specifically, we propose to learn a deep neural network from the clean (normal) training dataset and a noisy dataset, where the latter is generated by adding highly diverse noises to the clean data. The neural network can learn a reliable decision boundary between normal data and anomalous data when the diversity of the generated noisy data is sufficiently high so that the hard abnormal samples lie in the noisy region. Importantly, we provide theoretical guarantees, proving that the proposed method can detect anomalous data successfully, although the method does not utilize any real anomalous data in the training stage.
Extensive experiments through more than 60 benchmark datasets demonstrate the effectiveness of the proposed method in comparison to 12 baselines of UAD. Our method obtains a 92.27\% AUC score and a 1.68 ranking score on average. Moreover, compared to the state-of-the-art UAD methods, our method is easier to implement.
\end{abstract}

\section{Introduction}
In the realm of data analysis, anomaly detection (AD) stands as a pivotal challenge with far-reaching implications across various domains, including cybersecurity \citep{siddiqui2019detecting, saeed2023anomaly}, healthcare \citep{yang2023deep, abououf2023explainable}, finance \citep{hilal2022financial}, and industrial processes \citep{fan2021kernel, roth2022towards}. 
Existing deep learning-based unsupervised AD methods often rely on an auxiliary learning objective such as auto-encoder, generative model, and contrastive learning. These methods indirectly detect anomalous data using other metrics such as reconstruction error, which lack generalizability and reliability guarantees \citep{hussain2023reliable}. Explicitly learning a one-class decision boundary may resolve this issue. Many well-known unsupervised AD methods assume the normal training data has a special structure in their data space or embedding space \citep{scholkopf2001estimating,tax2004support,ruff2018deep, goyal2020drocc,zhang2024deep,XIAO2025121435}. Such assumptions may not hold or be guaranteed in practice and sometimes place a burden on the model training \citep{cai2022perturbation,fu2024dense}. For instance, in deep SVDD \citep{ruff2018deep}, the optimal decision boundary in the embedding space may be very different from the learned hypersphere, leading to unsatisfactory detection performance \citep{zhang2024deep}.

Given that tabular data is probably the most common data type and other types of data such as images can be converted to tabular data using feature encoders or pretrained models, in this work, we focus on the tabular data only. We propose a novel unsupervised AD method for tabular data without making any assumption about the distribution of normal data. Since hard anomalies are often close to normal data, it is reasonable to hypothesize that hard anomalies are special cases of perturbed samples of normal data. Therefore, if the diversity of the perturbations or added noises on normal data is sufficiently high, we can obtain hard anomalies. Consequently, if a model can recognize highly diverse perturbations or noises, it can detect hard anomalies as well as easy anomalies successfully. By directly learning from the diverse noise patterns and the clean data patterns, we can learn an effective decision boundary around the normal data, generalizing well to unseen data.

Our contributions are highlighted as follows.
\begin{itemize}
    \item We propose a novel AD method for tabular data using noise evaluation. Our scheme generates highly diverse noise-augmented instances for the normal samples. By evaluating the noise magnitude, our method can accurately identify anomalies.
    \item The proposed method provides a simple yet effective scheme that does not make assumptions about the normal training data. In addition, the noise generation is straightforward without any extra training. Compared with \citep{wang2021auto, goyal2020drocc, yan2021learning, cai2022perturbation}, our method is more lightweight for training (requires less module for training.)
    \item We theoretically prove the generalizability and reliability of the proposed method.
    \item We conduct extensive empirical experiments on 47 real datasets in an unsupervised anomaly detection setting and 25 real-world tabular datasets in a one-class classification setting to demonstrate the performance of the proposed schemes. The results show that our method achieved superior performance compared with 12 baseline methods including the state-of-the-art.
\end{itemize}

\section{Related Work}

Unsupervised anomaly detection (UAD) is also known as the one-class classification (OCC) problem, in which all or most training samples are assumed to be normal. The learning objective is to learn a decision boundary that distinguishes whether a sample belongs to the same distribution of the normal training data or not. There is another similar problem setting known as outlier detection on contaminated dataset \cite{huang2024entropystop}. The goal is to detect noised samples or outliers within the training data \cite{ding2022hyperparameter}. This line of work is orthogonal to our UAD problem.

In the past decades, many UAD methods have been studied \cite{liu2008isolation, chang2023data}. Traditional methods like proximity-based \citep{breunig2000lof, angiulli2002fast, papadimitriou2003loci, he2003discovering}, probabilty-based \citep{yang2009outlier, zong2018deep, li2020copod}, and one-class support vector machine \citep{scholkopf2001estimating, tax2004support} approaches struggle with high dimensionality and complex data structures. Deep neural network-based methods have been proposed to address these issues. For instance, auto-encoder methods identify outliers by detecting high reconstruction errors, as outlier samples do not conform to historical data patterns \citep{aggarwal2016outlier, chen2017outlier, wang2021auto}. Generative model methods compare latent features or generated samples to spot anomalies \citep{schlegl2017unsupervised, liu2019generative, zhang2023unsupervised, tur2023exploring,XIAO2025121435}. For example, \cite{XIAO2025121435} proposed an inverse generative adversarial network that converts the data distribution to a compact Gaussian distribution, based on which the density of test data can be calculated for anomaly detection. Contrastive learning \citep{sohn2020learning, jin2021anemone, shenkar2022anomaly} leverages feature representation differences to detect anomalies. Unlike autoencoder-based methods that focus on reducing reconstruction error and reducing dimensionality to remove noise, our method aims to evaluate noise level, which is similar to the denoising diffusion model \cite{ho2020denoising}.

Some works explicitly build an anomaly detection or OCC objective \citep{ruff2018deep, goyal2020drocc, yan2021learning, chen2022deep, cai2022perturbation}. For instance, Deep SVDD \citep{ruff2018deep} trains a neural network to construct a hypersphere in the output space to enclose the normal training data. DROCC \citep{goyal2020drocc} assumes normal samples lie on low-dimensional manifolds and treats identifying the ideal hypersphere as an adversarial optimization problem. PLAD \citep{cai2022perturbation} outputs an anomaly score by learning a small perturbation of normal data as the negative sample with a classifier. Unlike PLAD, which uses extra additive and multiplicative perturbations requiring a perturbator, our method generates negative samples without extra parameters, making the training efficient.

It is worth mentioning that there are vision UAD methods utilizing synthetic anomalous data. However, the normality and abnormality can be visualized directly and there naturally exists prior knowledge about anomalies (e.g., visible spots or blurs) in visual data. Regarding tabular data, we do not have such prior knowledge. Vision AD methods require prior pre-trained models or external reference datasets to obtain the negative samples, such as DREAM \cite{zavrtanik2021draem} with the Describable Textures Dataset or AnomalyDiffusion \cite{hu2024anomalydiffusion} with a stable diffusion model. Such resources are costly and violate the principle of unsupervised learning. Tabular data, however, spans diverse domains (e.g., medical, industrial), making external datasets and pretrained model infeasible.

\section{Proposed Method}

\subsection{Problem Formulation and Notations}
Given a data set $\mathcal{X}=\{\boldsymbol{x}_1, \boldsymbol{x}_2, \ldots, \boldsymbol{x}_N\}$, where the element $\boldsymbol{x}_i$ are drawn from an unknown distribution $\mathcal{D}$ in $\mathbb{R}^d$ (deemed as a normal data distribution). In the context of UAD, the primary objective is to develop a function $f: \mathbb{R}^d \rightarrow \{0, 1\}$, which effectively discriminates between in-distribution (normal) and out-of-distribution (anomalous) instances. This discriminative function is formulated to assign a binary label, where $ f(x) = 1 $ indicates $\boldsymbol{x}$ does not belong to $\mathcal{D}$ and $f(x) = 0$ corresponds to $\boldsymbol{x}$ coming from $\mathcal{D}$.
The main notations used in this paper are shown in Table \ref{tab_notation}.

\begin{figure}
        \centering
        \includegraphics[width=0.75\linewidth]{ 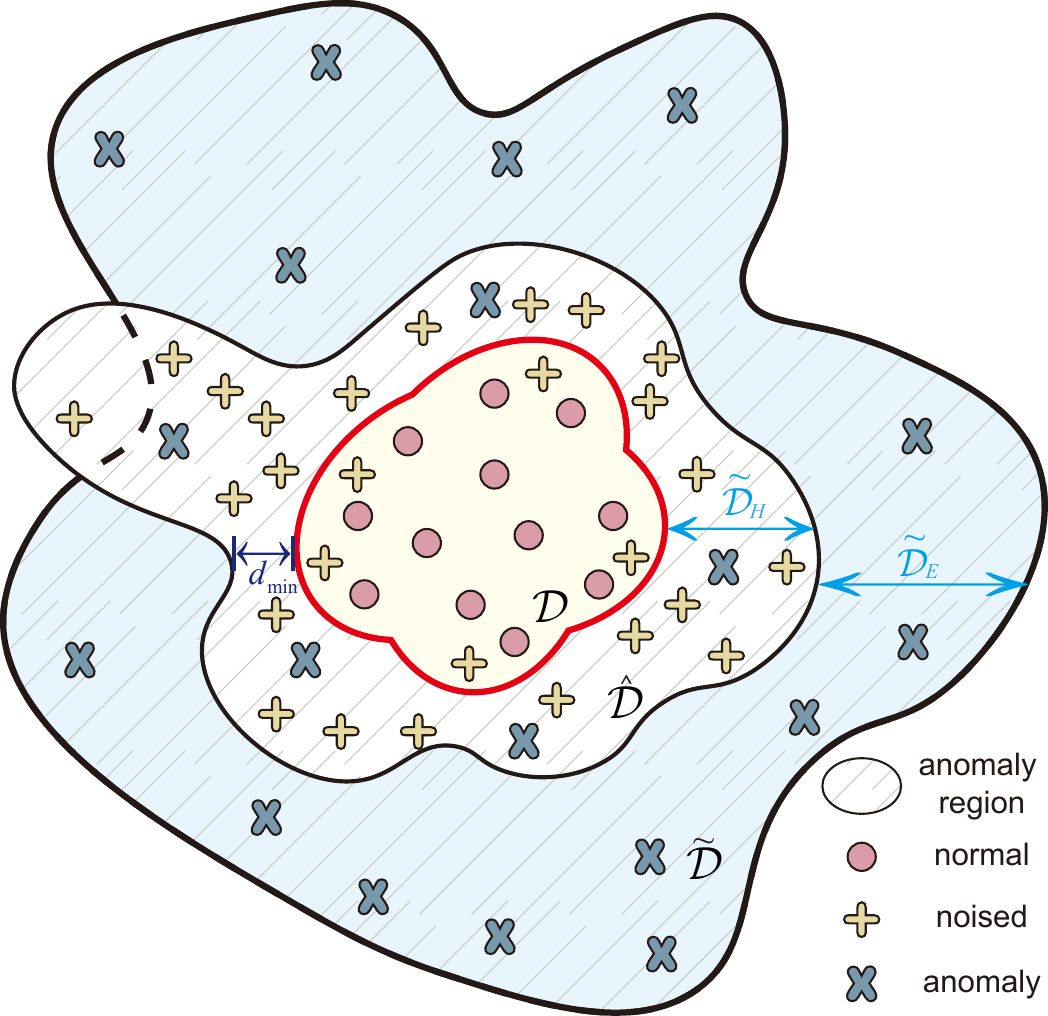}
        \caption{An illustration of the allocation of normal, noised, and true anomalous samples. $\mathcal{D}$, $\hat{\mathcal{D}}$, and $\tilde{\mathcal{D}}$ are the normal, noised, and anomalous distributions respectively. $\tilde{\mathcal{D}}$ is composed of a hard part $\tilde{\mathcal{D}}_H$ and an easy part $\tilde{\mathcal{D}}_E$. Theorem \ref{theorem: gap} and Theorem \ref{theorem_dmin} are for $\tilde{\mathcal{D}}_H$ and $\tilde{\mathcal{D}}_E$ respectively.}\label{fig: egg}
\end{figure}

 \begin{table}
    \centering
    \setlength\tabcolsep{0.9pt}
\resizebox{0.8\linewidth}{!}{
    \begin{tabular}{c|c|c|c} \hline
    Symbol & Description & Symbol & Description \\ \hline
        $x$ & a scalar & $\boldsymbol{x}_i$ & a vector with index $i$ \\
        $\mathcal{X}_i$ & a set with index $i$ & $[i]$ & the set $\{1,2,\ldots,i\}$\\ 
        $\mathcal{D}$ & a distribution & $\|\cdot\|_2$ & $\ell_2$ norm of vector \\
        $\|\cdot\|_1$& $\ell_1$ norm of vector & $|\boldsymbol{x}|$ & \small{element-wise absolute} \\
        $\cup$ & union of sets & $\subset$ & subset \\
        $H$ & entropy & $h_{\boldsymbol{\theta}}$ & \small{model parameterized by $\boldsymbol{\theta}$} \\ \hline
    \end{tabular}}
    \caption{Notations}
    \label{tab_notation}
\end{table}
\subsection{Anomalous Data Decomposition} \label{sec: method}

Let $\tilde{\mathcal{X}}$ be the set consisting of all anomalous data drawn from some unknown distribution $\tilde{\mathcal{D}}$ deemed as an anomalous distribution.
For any $\tilde{\boldsymbol{x}}\in \tilde{\mathcal{X}}$, we decompose it as 
\begin{equation}
    \tilde{\boldsymbol{x}}=\boldsymbol{x}+\boldsymbol{\epsilon},
\end{equation}
where $\boldsymbol{x}\in\mathcal{D}$ is a normal counterpart of $\tilde{\boldsymbol{x}}$ and $\boldsymbol{\epsilon}$ denotes the derivation of $\tilde{\boldsymbol{x}}$ from ${\boldsymbol{x}}$. The magnitude of $\boldsymbol{\epsilon}$, denoted as $\|\boldsymbol{\epsilon}\|_1$, measures how anomalous $\tilde{\boldsymbol{x}}$ is. Note that this decomposition is not unique and hence one may seek the one with the smallest $\|\boldsymbol{\epsilon}\|_1$. If we can learn a model $h$ to predict $\boldsymbol{\epsilon}$ for $\tilde{\boldsymbol{x}}$, i.e.,
\begin{equation}
    \boldsymbol{\epsilon}=h(\tilde{\boldsymbol{x}}),
\end{equation}
we will be able to determine whether $\tilde{\boldsymbol{x}}$ is normal or not according to $\|\boldsymbol{\epsilon}\|_1$. The challenge is that there is no available information about $\tilde{\mathcal{X}}$ in the training stage and we can only utilize $\mathcal{X}$.

Although $\tilde{\mathcal{X}}$ is unknown, we further theoretically partition $\tilde{\mathcal{X}}$ into two subsets without overlap, i.e.,
\begin{equation}
    \tilde{\mathcal{X}}\triangleq\tilde{\mathcal{X}}_{E}\cup \tilde{\mathcal{X}}_{H},~ \tilde{\mathcal{X}}_{E}\cap \tilde{\mathcal{X}}_{H}=\emptyset, {\tilde{\mathcal{X}}_{E}\sim\tilde{\mathcal{D}}_E,\tilde{\mathcal{X}}_{H}\sim\tilde{\mathcal{D}}_H}.
\end{equation}
$\tilde{\mathcal{X}}_E$ denotes an easy set, drawn from the easy part $\tilde{\mathcal{D}}_E$ of $\tilde{\mathcal{D}}$,
in which $\|\boldsymbol{\epsilon}\|_1$ for each sample is sufficiently large, 
while $\tilde{\mathcal{X}}_{H}$ denotes a hard set, drawn from the hard part $\tilde{\mathcal{D}}_H$ of $\tilde{\mathcal{D}}$,
in which $\|\boldsymbol{\epsilon}\|_1$ for each sample is small. 
After the partition, we can assert that $\tilde{\mathcal{X}}_{H}$ is closer to the normal data. Consequently, it is easier for a model to recognize samples in $\tilde{\mathcal{X}}_{E}$ than those in $\tilde{\mathcal{X}}_{H}$, as the $\|\boldsymbol{\epsilon}\|_1$ values of samples in $\tilde{\mathcal{D}}_E$ are significantly larger than those in $\tilde{\mathcal{D}}_H$. Figure \ref{fig: egg} provides an intuitive example. The ultimate goal is to learn the decision boundary around the normal data.

Here, we focus on how to detect the samples in $\tilde{\mathcal{X}}_{H}$ or drawn from $\tilde{\mathcal{D}}_{H}$. Since $\tilde{\mathcal{X}}_{H}$ is very close to the normal data, it is reasonable to hypothesize that hard anomalies are special cases of perturbed samples of normal data.
We propose to generate a noisy dataset $\hat{\mathcal{X}}\subset\mathbb{R}^d$ from $\mathcal{X}$ by adding various noise to $\mathcal{X}$, and assume $\hat{\mathcal{X}}$ is drawn from certain perturbed distribution $\hat{{\mathcal{D}}}$, i.e.,
\begin{equation}\label{eq_M_noise}
    \hat{\mathcal{X}}\leftarrow \text{Gen}({\mathcal{X}})\sim \hat{{\mathcal{D}}},
\end{equation}
where $\text{Gen}$ denotes the noisy data generator and $|\hat{\mathcal{X}}|\gg N$.  
Let the diversity of added noise is sufficiently large, such that
\begin{equation}\label{eq_XHX}
    \tilde{\mathcal{X}}_{H}\subset \hat{\mathcal{X}},
\end{equation}
i.e., anomaly patterns of the hard set $\tilde{\mathcal{X}}_{H}$ are included in $\hat{\mathcal{X}}$. Even if \eqref{eq_XHX} does not hold, as shown by Theorem \ref{theorem: gap}, it is still possible to obtain correct detection, provided that $\hat{{\mathcal{D}}}$ is not too far from $\tilde{\mathcal{D}}_H$, i.e.,
\begin{equation}
    \text{dist}(\hat{{\mathcal{D}}},\tilde{\mathcal{D}}_H)\leq \gamma,
\end{equation}
where $\text{dist}(\cdot,\cdot)$ is some distance or divergence measure between two distributions and $\gamma$ is not too large. Therefore, a model $h$ learned from $\hat{\mathcal{X}}$ is able to generalize to $\tilde{\mathcal{X}}_{H}$ and then detect anomaly.

\subsection{Noise Evaluation Model}
To generate $\hat{\mathcal{X}}$, we add random noise to the elements of each sample $\boldsymbol{x}\in{\mathcal{X}}$. This operation will reduce the quality of the data, that is, the quality of $\hat{\mathcal{X}}$ is lower than that of ${\mathcal{X}}$, supported by
\begin{proposition}\label{prop_1}
    Adding random noises independently to the entries of $\mathcal{X}$ makes the data more disordered (higher entropy).
\end{proposition}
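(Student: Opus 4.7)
The plan is to cast the claim in information-theoretic language and then invoke a standard property of (differential) entropy under convolution with independent noise. Let $\boldsymbol{x}$ be a random vector distributed as a generic sample from $\mathcal{D}$, and let $\boldsymbol{n}$ be the noise vector whose entries are drawn independently (and independently of $\boldsymbol{x}$) from some noise distribution, so the noisy sample is $\hat{\boldsymbol{x}}=\boldsymbol{x}+\boldsymbol{n}$. Since all quantities live in $\mathbb{R}^d$, I would take $H(\cdot)$ to denote differential entropy. The target is $H(\hat{\boldsymbol{x}})\geq H(\boldsymbol{x})$, with strict inequality whenever the noise is nondegenerate.

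The core step is the short chain
\[H(\boldsymbol{x}+\boldsymbol{n})\;\geq\;H(\boldsymbol{x}+\boldsymbol{n}\mid\boldsymbol{n})\;=\;H(\boldsymbol{x}\mid\boldsymbol{n})\;=\;H(\boldsymbol{x}),\]
where the first bound is ``conditioning reduces entropy,'' the first equality uses that once $\boldsymbol{n}$ is fixed only the randomness of $\boldsymbol{x}$ is left in $\boldsymbol{x}+\boldsymbol{n}$, and the second equality uses independence of $\boldsymbol{x}$ and $\boldsymbol{n}$. This already establishes non-decrease of entropy after perturbation, and applying the same argument coordinate-by-coordinate recovers the statement for individual entries of $\mathcal{X}$.

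To promote this to a \emph{strict} increase, I would invoke the entropy power inequality: for independent $\boldsymbol{x},\boldsymbol{n}\in\mathbb{R}^d$,
\[\exp\!\bigl(2H(\boldsymbol{x}+\boldsymbol{n})/d\bigr)\;\geq\;\exp\!\bigl(2H(\boldsymbol{x})/d\bigr)+\exp\!\bigl(2H(\boldsymbol{n})/d\bigr).\]
Whenever the injected noise admits a density (so its entropy power is strictly positive), the right-hand side strictly exceeds $\exp(2H(\boldsymbol{x})/d)$, giving $H(\hat{\boldsymbol{x}})>H(\boldsymbol{x})$ and hence a strictly more disordered noisy dataset.

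The main obstacle I foresee is the case in which the clean distribution is singular with respect to Lebesgue measure (e.g., $\mathcal{D}$ concentrated on a low-dimensional manifold), a common modeling assumption for real tabular data. Then $H(\boldsymbol{x})=-\infty$ and the EPI requires careful interpretation. I would handle this either by smoothing $\boldsymbol{x}$ with an infinitesimal Gaussian and passing to the limit, or by relying only on the conditioning argument (which still yields non-decrease) and complementing it with a support-enlargement observation: convolution with a density-bearing noise spreads mass over an open neighborhood of the support of $\boldsymbol{x}$, so the Shannon entropy of any fixed-grid discretization strictly increases, which is the operational content of ``more disordered.''
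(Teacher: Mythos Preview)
Your proposal is correct and the core argument---$H(\boldsymbol{x}+\boldsymbol{n})\geq H(\boldsymbol{x}+\boldsymbol{n}\mid\boldsymbol{n})=H(\boldsymbol{x}\mid\boldsymbol{n})=H(\boldsymbol{x})$ via ``conditioning reduces entropy,'' translation invariance, and independence---is exactly the route the paper takes (the paper writes it in discrete-entropy notation and also records the symmetric bound $H(\hat{X})\geq H(E)$, but the relevant step is identical). Your additions (the entropy power inequality for strictness and the discussion of singular $\mathcal{D}$) go beyond what the paper proves, which stops at the non-strict inequality.
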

\begin{figure}
        \vspace{-0pt}
        \includegraphics[width=\linewidth]{ 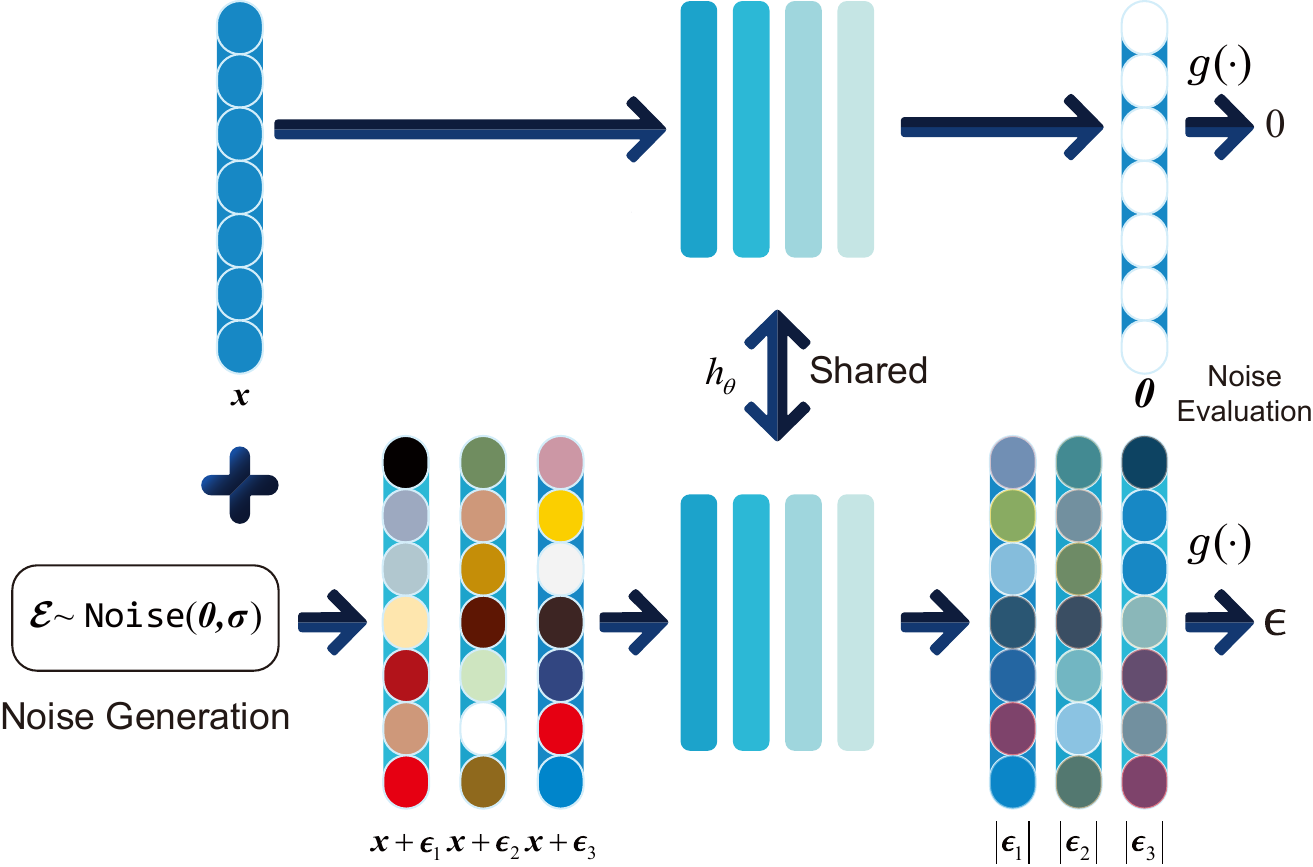}
        \caption{
        The training process of noise evaluation model. Noise with 0 mean and $\sigma$ standard deviation is added to the original data $\boldsymbol{x}$ to create noised versions $\hat{\boldsymbol{x}}= \boldsymbol{x}+\boldsymbol{\epsilon}$. The model $h_{\boldsymbol{\theta}}$ is trained to discern the zero vector for the original data and identify the noise vector $|\boldsymbol{\epsilon}|$ for the noised data. The final anomaly decision is made using an aggregation function $g(\cdot)$, where high-magnitude noise indicates abnormality.
        }\label{fig: overview}
        \vspace{-10pt}
\end{figure}
We would like to learn a deep neural network $h(\cdot): \mathbb{R}^d \rightarrow \mathbb{R}^d$ parameterized by $\boldsymbol{\theta}$ to quantify the quality of the input data. Its output is a vector with the same size as the input while each entry tells whether the input feature is noised or not. We generate the noised dataset by 
\begin{equation}\label{eq_xxxx}
    \hat{\mathcal{X}}=\hat{\mathcal{X}}_1\cup \hat{\mathcal{X}}_2\cdots\cup\hat{\mathcal{X}}_K,
\end{equation}
where each $\hat{\mathcal{X}}_k$ is composed of the samples generated by
\begin{equation}\label{eq_xxe}
  \hat{\boldsymbol{x}}=\boldsymbol{x}+\boldsymbol{\epsilon}, ~~\boldsymbol{\epsilon}\sim{\text{Noise}_{k}}(\boldsymbol{0}, \boldsymbol{\sigma}_{k}),~~\boldsymbol{x}\in\mathcal{X}.
\end{equation}
In \eqref{eq_xxe}, $\text{Noise}_k(\boldsymbol{0}, \boldsymbol{\sigma}_{k})$ (to be detailed in the next section) is a multivariate noise distribution with $\boldsymbol{0}$ mean value and $\boldsymbol{\sigma}_{k}$ standard deviation in $\mathbb{R}^d$, abbreviated as $\text{Noise}(\boldsymbol{\sigma}_{k})$. For instance, $\text{Noise}_k$ can be a Gaussian distribution. We also denote the standard deviation of the noise as \textit{noise level} \footnote{We use noise level and standard deviation of noise inter-changeably in this paper.}.
According to \eqref{eq_xxxx} and \eqref{eq_xxe}, we see that $\hat{\mathcal{X}}$ can contain different types of noise distributions with different standard deviations, and the diversity is controlled by $\sum_{k=1}^K{\|\sigma_k\|}$. Refer to Table \ref{tab: noise type} for the choices of the noise generator. Then, we write the learning objective as
\begin{align} \label{eq: final objective}
    \min_{\boldsymbol{\theta}} \sum_{\boldsymbol{x}_i\in\mathcal{X}}\left \| h_{\boldsymbol{\theta}}(\boldsymbol{x}_i) - \boldsymbol{0} \right \|_{2}^2 + \sum_{\hat{\boldsymbol{x}}_i\in\hat{\mathcal{X}}}\left \| h_{\boldsymbol{\theta}}(\hat{\boldsymbol{x}}_i) - \left|\boldsymbol{\epsilon}_i\right| \right \|_{2}^2,
\end{align}
where $\boldsymbol{0}$ is a $d$-dimension vector with all zero values, $\boldsymbol{\epsilon}$ is drawn from some $\text{Noise}(0, \boldsymbol{\sigma})$, and $\hat{\boldsymbol{x}}=\boldsymbol{x} + \boldsymbol{\epsilon}$ for some $\boldsymbol{x}\in\mathcal{X}$. Note that in \eqref{eq: final objective}, there is no hyperparameter to determine except the network structure, while many previous methods such as \citep{ruff2018deep, goyal2020drocc,cai2022perturbation,zhang2024deep} have at least one more crucial hyperparameter to tune in the learning objective.

In \eqref{eq: final objective}, the absolute value is used to let the model output a positive signal indicating an anomaly. Here, we only require the model to predict how much the noise is instead of the exact noise value, which also reduces the difficulty of the learning process. 
In addition, our method is closely related to the denoising score matching \citep{song2019generative, vincent2011connection}, which estimates the gradient of the probability density of the data distribution. We prove \eqref{eq: final objective} is a lower bound of denoising score matching learning objective in Appendix A. Hence, our method also learns data distribution $\mathcal{D}$ by proxy. 
Predicting the noise magnitude has the following advantages.

\begin{claim}
    Estimating the element-wise magnitude of noise enables the model to quantify the deviation of a sample from the normal data distribution $\mathcal{D}$. It improves the ability to identify specific features with abnormalities.
\end{claim}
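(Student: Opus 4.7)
The plan is to split the claim into its two conceptual halves and argue each via what the training objective in \eqref{eq: final objective} actually forces the learned map $h_{\boldsymbol{\theta}}$ to compute. For the first sentence, I would identify the Bayes-optimal minimizer of the population version of \eqref{eq: final objective}: viewing the clean-data term as a point mass at $\boldsymbol{\epsilon} = \boldsymbol{0}$ and the noised-data term as a draw $\boldsymbol{\epsilon}\sim \text{Noise}(\boldsymbol{\sigma}_k)$, the minimizer in the squared-loss sense is the coordinate-wise conditional expectation
\begin{equation*}
 h^\star(\boldsymbol{z}) \;=\; \mathbb{E}\bigl[\, |\boldsymbol{\epsilon}| \;\big|\; \boldsymbol{z} = \boldsymbol{x} + \boldsymbol{\epsilon} \,\bigr].
\end{equation*}
On a clean input this collapses to $\boldsymbol{0}$, while on any test point $\tilde{\boldsymbol{x}}$ admitting a decomposition $\tilde{\boldsymbol{x}} = \boldsymbol{x} + \boldsymbol{\epsilon}$ with $\boldsymbol{x}$ in the support of $\mathcal{D}$, the output tracks $|\boldsymbol{\epsilon}|$ entrywise.

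Next, I would compose $h_{\boldsymbol{\theta}}$ with an aggregation $g$ (say $g=\|\cdot\|_1$) and plug into the anomalous-data decomposition of Section~\ref{sec: method}, where $\|\boldsymbol{\epsilon}\|_1$ is explicitly declared to measure how anomalous $\tilde{\boldsymbol{x}}$ is. Then $g\bigl(h_{\boldsymbol{\theta}}(\tilde{\boldsymbol{x}})\bigr)$ is an estimator of $\|\boldsymbol{\epsilon}\|_1$, i.e., the deviation of $\tilde{\boldsymbol{x}}$ from $\mathcal{D}$. As a complementary justification I would invoke the denoising score-matching lower-bound connection promised in Appendix~A: up to that bound, \eqref{eq: final objective} estimates the gradient of the log-density of the noise-smoothed distribution, which gives a density-theoretic reading of ``deviation from $\mathcal{D}$'' that does not rely on any particular decomposition.

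For the second sentence (feature-level abnormality), I would exploit the separability of the squared loss across the $d$ output coordinates: the $j$-th component of $h^\star$ depends only on the $j$-th component of $|\boldsymbol{\epsilon}|$, so the $j$-th entry of $h_{\boldsymbol{\theta}}(\tilde{\boldsymbol{x}})$ reports the estimated perturbation magnitude of feature $j$ alone. A scalar anomaly score erases this resolution, whereas the vector output of $h_{\boldsymbol{\theta}}$ preserves it; this part is essentially a corollary of the coordinate-wise factorization of the loss together with the first half.

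The main obstacle will be making the first sentence airtight: ``deviation from $\mathcal{D}$'' is not formally defined, and the paper itself notes that the decomposition $\tilde{\boldsymbol{x}} = \boldsymbol{x} + \boldsymbol{\epsilon}$ is non-unique. Two routes are available and I would combine them. Route~(i): fix the minimum-$\|\boldsymbol{\epsilon}\|_1$ decomposition as canonical and show that $h_{\boldsymbol{\theta}}$ converges to it under the noise mixture \eqref{eq_xxxx}--\eqref{eq_xxe}; this needs mild regularity on $\mathcal{D}$ and a support/coverage condition on the noise family to rule out identifiability pathologies. Route~(ii): argue entirely through the score-matching equivalence, in which the estimated object is the well-defined score of the noise-smoothed density. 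Route~(i) makes the feature-level interpretation direct but requires assumptions, while route~(ii) is cleaner but less transparent for the element-wise reading; presenting route~(i) as the intuition and citing route~(ii) for the density-level underpinning appears to be the most economical organization.
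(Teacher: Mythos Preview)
The paper does not supply a formal proof of this Claim; it is stated as a design assertion and left without a dedicated argument. The only supporting material the paper offers is (a) the sentence immediately preceding the Claim pointing to the denoising score-matching lower bound in Appendix~A (``our method also learns data distribution $\mathcal{D}$ by proxy''), and (b) the short interpretability paragraph in the appendix Discussions, which simply repeats that a large value in coordinate $j$ of $h_{\boldsymbol{\theta}}(\tilde{\boldsymbol{x}})$ flags feature $j$ as abnormal. Neither passage is packaged as a proof.

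Your proposal is therefore considerably more formal than the paper's own treatment. Your route~(ii) is exactly the paper's Appendix~A justification, and your coordinate-wise separability remark is the content of the interpretability paragraph; in that sense you have reconstructed everything the authors actually invoke. Your route~(i)---identifying the Bayes minimizer $h^\star(\boldsymbol{z})=\mathbb{E}[\,|\boldsymbol{\epsilon}|\mid \boldsymbol{z}\,]$ under the training mixture and tying it to the minimum-$\|\boldsymbol{\epsilon}\|_1$ decomposition---goes beyond anything the paper attempts and would require the identifiability/coverage assumptions you flag, which the paper never states. One minor wording issue: when you write ``the $j$-th component of $h^\star$ depends only on the $j$-th component of $|\boldsymbol{\epsilon}|$,'' be careful to mean that the \emph{target} of $h^\star_j$ is $|\epsilon_j|$; the function $h^\star_j$ itself still depends on the full input $\boldsymbol{z}$, since the posterior over $\epsilon_j$ is conditioned on all observed coordinates. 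That distinction does not affect the conclusion but is worth stating precisely.
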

Since the model output is a $d$-dimension vector, we introduce an aggregation operation after the training, $g(\cdot)$, to map the output vector to a scalar for anomaly detection. The operation in our design can be maximum, minimum, mean, median, or a combination. The final anomaly score is determined by 
\begin{equation}\label{eq_score}
   \text{score}(\boldsymbol{x}):=g\left(h_{\boldsymbol{\theta}}(\boldsymbol{x})\right). 
\end{equation}
Taking the maximum as an example, we have $\text{score}(\boldsymbol{x})=\max_{i\in[d]}[h_{\boldsymbol{\theta}}(\boldsymbol{x})]_i$.
We need to determine a threshold $\tau>0$ for the anomaly score. If $\text{score}(\boldsymbol{x})>\tau$, $\boldsymbol{x}$ is anomalous. An overview of our model is shown in Figure \ref{fig: overview}. 

\subsection{Noise Generation and Model Training}\label{sec:noise_gen}
The noise generation in our design is not arbitrary. To cover the sampling space of the noised distributions as much as possible, we randomly generate the noise for each training sample in terms of noise level and position. An intuitive example is as follows:  
\begin{equation}
    \boldsymbol{\epsilon} = \left[\underbrace{\epsilon_1, \epsilon_2,}_{\mathclap{\quad\ \sim \text{Noise}(\sigma_1)}}
 \overbrace{\epsilon_3, \epsilon_4,}^{\mathclap{\sim\text{Noise}(\sigma_2)}} 
    ..., \underbrace{\epsilon_i, \epsilon_{i+1},}_{\mathclap{\sim\text{Noise}(\sigma_3)}} \overbrace{\epsilon_{i+2},}^{\mathclap{\sim\text{Noise}(\sigma_1)}} ..., \underbrace{\epsilon_{d-2},}_{\mathclap{\sim\text{Noise}(\sigma_m)}} \overbrace{\epsilon_{d-1}, \epsilon_d}^{\mathclap{\sim\text{Noise}(\sigma_1)\quad}}\right].
\end{equation}

\renewcommand{\algorithmicrequire}{\textbf{Input:}}
\renewcommand{\algorithmicensure}{\textbf{Output:}}

\begin{algorithm}
\caption{Noise Generation} \label{alg: noise generation}
\begin{algorithmic}[1] 
\Require maximum noise level $\sigma_{max}$, number of noise distributions $m$, 
batch size $b$, the dimensionality of data $d$.
\State $\Delta \gets [0, \frac{1}{m}\sigma_{max}, \frac{2}{m}\sigma_{max}, ...,\frac{m}{m}\sigma_{max}$] \Comment{make $m$ intervals}
\State Initialize $\boldsymbol{\mathcal{E}}$ with empty
\For{$j \in \{1,...,b\}$}
    \For{$i \in \{1,...,m\}$}
        \State \textit{//random noise level}
        \State $\hat{\sigma} \gets$ \texttt{Uniform}$(\Delta[i], \Delta[i+1])$ 
        \State $\boldsymbol{\epsilon}[\frac{i-1}{m}d: \frac{i}{m}d] \gets$ \texttt{Noise}$(0, \hat{\sigma})$ ~~ \textit{//generate noise from $m$ noise levels for $m$ parts}
    \EndFor 
    \State \textit{//shuffle position of noise elements}
    \State $\boldsymbol{\epsilon} \gets$ \texttt{Shuffle}$(\boldsymbol{\epsilon})$ 
    \State $\boldsymbol{\mathcal{E}}[j] \gets \boldsymbol{\epsilon}$
\EndFor
\Ensure Generated noise $\boldsymbol{\mathcal{E}}$ for a batch of input data
\end{algorithmic}
\end{algorithm}
For one type of distribution (e.g. Gaussian), there are often two hyper-parameters to control the noise generation process. The first one is $\sigma_{max}$, denoting the maximum noise level. The other is $m$, describing how many parts of the feature vector are added by the same level of noise. Therefore, for an $\boldsymbol{x}$, different elements may be corrupted by different levels of noise, and, if necessary, one $\boldsymbol{x}$ can produce multiple $\hat{\boldsymbol{x}}$ with different types of distribution. A detailed process for generating noised samples on a batch of data is in Algorithm \ref{alg: noise generation}. According to Algorithm \ref{alg: noise generation}, the noise generation time complexity is $\mathcal{O}(bd)$. In contrast, other methods involving perturbation \cite{cai2022perturbation, qiu2021neural} and adversarial sample \cite{goyal2020drocc} have time complexity with $\mathcal{O}(bdW)$, where $W$ is workload related to a neural network module. Compared with them, our noise generation is more efficient, where no learnable parameter is required. A comparative study on time cost is shown in Appendix I.

In Figure \ref{fig: overview}, the lower pathway illustrates the noise synthesis mechanism, where a noise vector $\boldsymbol{\epsilon}$ (mean 0, standard deviation $\boldsymbol{\sigma}$) is randomly generated and added to the input $\boldsymbol{x}$, producing a noise-augmented variant $\hat{\boldsymbol{x}} = \boldsymbol{x} + \boldsymbol{\epsilon}$. Multiple noised samples can be generated from a single input. Both $\boldsymbol{x}$ and $\hat{\boldsymbol{x}}$ are processed by the noise evaluation network \( h_\theta \), optimized to regress towards zero for $\boldsymbol{x}$ and estimate the noise vector $|\boldsymbol{\epsilon}|$ for $\hat{\boldsymbol{x}}$. Training details are in Appendix B\ref{app: training alg}, Algorithm \ref{alg: training}.
Notice that for each training epoch, we randomly generate a new noised instance for each training sample. This helps us enlarge the sampling number from the noise distribution, and avoid over-fitting on some ineffective noise. There are some optional noise generation schemes such as using different noise types, different noise levels, and different noise ratios, which are studied later. These options add several hyper-parameters to our methods. However, it is still simpler than many recent methods. We compare them in Appendix J.

\section{Theoretical Analysis} 
In the proposed method, we learn a one-class classification decision boundary closely around the normal samples. In Figure \ref{fig: egg}, we divide the anomaly region into two non-overlap distributions, easy $\tilde{\mathcal{D}}_E$, and hard $\tilde{\mathcal{D}}_H$, respectively. In this section, we theoretically show the anomaly detection ability of the model meeting easy anomaly samples $\tilde{\mathcal{X}}_E$ from $\tilde{\mathcal{D}}_E$ and hard anomaly samples $\tilde{\mathcal{X}}_H$ from $\tilde{\mathcal{D}}_H$ in Theorem \ref{theorem: gap} and Theorem \ref{theorem_dmin}, respectively.

Without loss of generality, we let $h_{\boldsymbol{\theta}}\in\mathcal{H}$, where $\mathcal{H}$ is the hypothesis space of ReLU-activated neural network with $L$ layers and the number of neurons at each layer is in the order of $p$. 
We give the following definition.
\begin{definition}\label{def_1}
    For the noise evaluation hypothesis $h_{\theta}$, the risk of the hypothesis on a distribution $\mathcal{D}$ is defined by the probability according to $\mathcal{D}$ that a processed hypothesis $I(g(h_{\boldsymbol{\theta}}(\boldsymbol{x})) >\tau)$ disagrees with a labeling function $\rho: \mathbb{R}^d \rightarrow \{0, 1\}$. Mathematically,
$$
\varepsilon_{\mathcal{D}}(h) := \mathbb{E}_{\boldsymbol{x}\sim\mathcal{D}}\left[\left|I\left(g\left(h_{\boldsymbol{\theta}}(\boldsymbol{x})\right) >\tau\right) - \rho(\boldsymbol{x})\right|\right],
$$
where $I(\cdot)$ is the indicator function, $\tau > 0$ is some threshold, and $\rho(\cdot)$ is the true labeling function. 
\end{definition}

Definition \ref{def_1} is a form of the disagreement metric \cite{hanneke2014theory}. Based on Definition \ref{def_1} and results of \citep{ben2010theory, bartlett2019nearly}, we can provide the following theoretical guarantee (proved in Appendix C\ref{app: proof_eed}) for the hard anomalous data $\tilde{\mathcal{X}}_H$.
\begin{theorem} \label{theorem: gap} (Generalization Error Bound)
Let $\varepsilon_{\tilde{\mathcal{D}}_H}(h)$ and $\varepsilon_{\hat{\mathcal{D}}}(h)$ be the risks of hypothesis $h$ on $\tilde{\mathcal{D}}_H$ and $\hat{\mathcal{D}}$, respectively.
If $\hat{\mathcal{X}}$ and $\tilde{\mathcal{X}}_H$ are unlabeled samples of size $N$ each, drawn from $\hat{\mathcal{D}}$ and $\tilde{\mathcal{D}}_H$ respectively, then for any $\delta \in (0, 1)$, with probability at least $1 - \delta$, for every $h \in \mathcal{H}$:
$$
\varepsilon_{\tilde{\mathcal{D}}_H}(h) \leq \varepsilon_{\hat{\mathcal{D}}}(h) + \tfrac{1}{2} \hat{d}_{\mathcal{H}\Delta\mathcal{H}}(\tilde{\mathcal{X}}_H, \hat{\mathcal{X}})+ 4 \sqrt{\tfrac{2d_{vc} \log(2N) + \log(\frac{2}{\delta})}{N}} + \lambda,
$$
where $d_{vc}=\mathcal{O}(p L \log (p L))$. $\lambda=\varepsilon_{\hat{\mathcal{D}}}(h^*) + \varepsilon_{\tilde{\mathcal{D}}_{H}}(h^*)$ where $h^*$ is the ideal joint hypothesis that minimize $\varepsilon_{\hat{\mathcal{D}}}(h) + \varepsilon_{\tilde{\mathcal{D}}_{H}}(h)$.
\end{theorem}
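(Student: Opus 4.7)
The plan is to invoke the domain-adaptation framework of Ben-David et al.\ (2010), treating the noisy distribution $\hat{\mathcal{D}}$ as the source domain and the hard-anomaly distribution $\tilde{\mathcal{D}}_H$ as the target. First I would note that after composing $h_{\boldsymbol{\theta}}$ with the aggregation $g$ and the indicator $I(\cdot>\tau)$, each hypothesis in $\mathcal{H}$ induces a binary classifier $\mathbb{R}^d\to\{0,1\}$, so by Definition~\ref{def_1} the quantity $\varepsilon_{\mathcal{D}}(h)$ is a standard 0/1 risk relative to the true labeling function $\rho$. This reduction lets me apply the population-level adaptation inequality
\[
\varepsilon_{\tilde{\mathcal{D}}_H}(h) \le \varepsilon_{\hat{\mathcal{D}}}(h) + \tfrac{1}{2}\, d_{\mathcal{H}\Delta\mathcal{H}}(\hat{\mathcal{D}},\tilde{\mathcal{D}}_H) + \lambda,
\]
where $\lambda=\varepsilon_{\hat{\mathcal{D}}}(h^*)+\varepsilon_{\tilde{\mathcal{D}}_H}(h^*)$ for the joint minimizer $h^*$, exactly as in the statement.

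The next step is to replace the population $\mathcal{H}\Delta\mathcal{H}$-divergence by its empirical counterpart on the two samples of size $N$. I would apply the standard VC-based concentration inequality (Lemma~1 of Ben-David et al.): with probability at least $1-\delta$,
\[
d_{\mathcal{H}\Delta\mathcal{H}}(\hat{\mathcal{D}},\tilde{\mathcal{D}}_H) \le \hat{d}_{\mathcal{H}\Delta\mathcal{H}}(\hat{\mathcal{X}},\tilde{\mathcal{X}}_H) + 8\sqrt{\tfrac{2 d_{vc}\log(2N)+\log(2/\delta)}{N}},
\]
valid for any hypothesis class $\mathcal{H}\Delta\mathcal{H}$ of finite VC dimension $d_{vc}$. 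Halving this contribution produces the $4\sqrt{\cdot}$ tail term shown in the theorem. To instantiate $d_{vc}$, I would invoke the tight VC-dimension bound of Bartlett, Harvey, Liaw, and Mehrabian (2019) for ReLU-activated networks with $L$ layers and $\mathcal{O}(p)$ neurons per layer, yielding $d_{vc}=\mathcal{O}(pL\log(pL))$, which is inherited by $\mathcal{H}\Delta\mathcal{H}$ up to constants. Substituting this into the previous inequality and combining with the adaptation bound delivers the stated inequality.

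The main obstacle I foresee is not the probabilistic concentration itself, which is off-the-shelf, but the reduction step: I must verify that the composed classifier $I\bigl(g(h_{\boldsymbol{\theta}}(\cdot))>\tau\bigr)$ actually inherits the same VC-dimension order as the underlying ReLU network. This is clean when $g\in\{\max,\min,\text{mean},\text{median}\}$ because each is a piecewise-linear operation on $\mathbb{R}^d$, and the final threshold only augments the piecewise-linear region count of the extended network by a constant factor, so the Bartlett et al.\ bound still applies with $d_{vc}=\mathcal{O}(pL\log(pL))$. A secondary bookkeeping issue is that the concentration step and the adaptation bound each can be made to hold with probability $1-\delta/2$; a union bound then gives the $1-\delta$ guarantee claimed in the theorem, and absorbing the $\log 2$ factor into the $\log(2/\delta)$ term matches the form stated.
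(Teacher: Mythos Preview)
Your proposal is correct and follows essentially the same route as the paper: the paper also invokes Ben-David et al.\ (2010) for the population-level adaptation inequality (re-derived there via the triangle-inequality chain $\varepsilon_{\tilde{\mathcal{D}}_H}(h)\le\varepsilon_{\tilde{\mathcal{D}}_H}(h^*)+\varepsilon_{\tilde{\mathcal{D}}_H}(h,h^*)\le\cdots$), then passes to the empirical $\mathcal{H}\Delta\mathcal{H}$-divergence via the VC concentration lemma, and finally instantiates $d_{vc}$ using Bartlett et al.\ (2019) together with the fact that $\mathrm{VC}(\mathcal{H}\Delta\mathcal{H})\le 2\,\mathrm{VC}(\mathcal{H})$. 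Your treatment is in fact slightly more careful than the paper's: you explicitly argue why the composed map $I(g(h_{\boldsymbol\theta}(\cdot))>\tau)$ inherits the ReLU-network VC bound and you flag the union-bound bookkeeping, both of which the paper leaves implicit.
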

The definition of the divergence $\hat{d}_{\mathcal{H}\Delta\mathcal{H}}$ is shown in Appendix C\ref{app_H_div} for simplicity. As shown in Figure \ref{fig: egg}, Theorem \ref{theorem: gap} describes that if the divergence $\hat{d}_{\mathcal{H}\Delta\mathcal{H}}$ between the perturbed training data $\hat{\mathcal{X}}$ and the hard test data $\tilde{\mathcal{X}}_H$ is small, the model can correctly identify the anomaly. In other words, the generated data $\hat{\mathcal{X}}$ can be very useful for learning a reasonable detection model $h_{\boldsymbol{\theta}}$. Since there is no available information about $\tilde{\mathcal{X}}_H$ during the training, we cannot obtain the divergence $\hat{d}_{\mathcal{H}\Delta\mathcal{H}}$ in real practice. Hence, we always standardize the normal training data and make the added noise level relatively small. In the ablation study, we show that a large noise level is harmful to the training. Note that this guarantee does not apply to $\tilde{\mathcal{X}}_E$ because $\tilde{\mathcal{D}}_E$ may be very far from $\hat{\mathcal{D}}$. The following context will provide a guarantee for detecting $\tilde{\mathcal{X}}_E$.
We make the following assumption and present the theoretical result (proved in Appendix C\ref{app: proof_dmin}).
\begin{assumption} \label{ass: gap}
    For any ${\boldsymbol{x}}$ and $\tilde{\boldsymbol{x}}$ drawn from ${\mathcal{D}}$ and $\tilde{\mathcal{D}}_E$ respectively, there exists a constant $c>0$ such that $c\|{\boldsymbol{x}} - \tilde{\boldsymbol{x}} \| \le |g(h_{\boldsymbol{\theta}}({\boldsymbol{x}})) - g(h_{\boldsymbol{\theta}}(\tilde{\boldsymbol{x}}))|$.
\end{assumption}

\begin{theorem}\label{theorem_dmin} 
Let $d_{\text{min}}=\inf_{{\boldsymbol{x}}\sim{\mathcal{D}},\tilde{\boldsymbol{x}}\sim\tilde{\mathcal{D}}_E}\|{\boldsymbol{x}}-\tilde{\boldsymbol{x}}\|$ (shown by Figure \ref{fig: egg}). Suppose $g(h_{\boldsymbol{\theta}}(\boldsymbol{x}))<\epsilon$ for any $\boldsymbol{x}\in\mathcal{D}$, where $\epsilon\geq 0$. Then, under Assumption \ref{ass: gap}, any anomalous samples drawn from $\tilde{\mathcal{D}}_E$ can be successfully detected if $d_{\min}>\max\{\epsilon/c,\tau/c\}$.
\end{theorem}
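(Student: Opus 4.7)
The plan is to argue directly: we show that for every $\tilde{\boldsymbol{x}}\sim\tilde{\mathcal{D}}_E$ the anomaly score $g(h_{\boldsymbol{\theta}}(\tilde{\boldsymbol{x}}))$ exceeds the threshold $\tau$, which is exactly the detection rule stated after \eqref{eq_score}. The argument is a short chain: combine the infimum separation $d_{\min}$, the reverse-Lipschitz lower bound of Assumption \ref{ass: gap}, and the hypothesis $g(h_{\boldsymbol{\theta}}(\boldsymbol{x}))<\epsilon$ on normal samples.

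First I would fix an arbitrary normal $\boldsymbol{x}\sim\mathcal{D}$ and an arbitrary anomalous $\tilde{\boldsymbol{x}}\sim\tilde{\mathcal{D}}_E$. By the definition of $d_{\min}$ as an infimum, $\|\boldsymbol{x}-\tilde{\boldsymbol{x}}\|\ge d_{\min}$. Assumption \ref{ass: gap} then gives $|g(h_{\boldsymbol{\theta}}(\boldsymbol{x}))-g(h_{\boldsymbol{\theta}}(\tilde{\boldsymbol{x}}))|\ge c\|\boldsymbol{x}-\tilde{\boldsymbol{x}}\|\ge cd_{\min}$, so one of the two signed inequalities $g(h_{\boldsymbol{\theta}}(\tilde{\boldsymbol{x}}))-g(h_{\boldsymbol{\theta}}(\boldsymbol{x}))\ge cd_{\min}$ or $g(h_{\boldsymbol{\theta}}(\boldsymbol{x}))-g(h_{\boldsymbol{\theta}}(\tilde{\boldsymbol{x}}))\ge cd_{\min}$ must hold.

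The next step is to rule out the second case. Because $h_{\boldsymbol{\theta}}$ is trained to regress the element-wise noise magnitude $|\boldsymbol{\epsilon}|$ and $g$ aggregates those non-negative outputs by max, min, mean, or median, the score $g(h_{\boldsymbol{\theta}}(\cdot))$ is itself non-negative. Together with $g(h_{\boldsymbol{\theta}}(\boldsymbol{x}))<\epsilon$, the second case would force $g(h_{\boldsymbol{\theta}}(\tilde{\boldsymbol{x}}))<\epsilon-cd_{\min}$, which is strictly negative as soon as $d_{\min}>\epsilon/c$, a contradiction. So the first case must hold, and then $g(h_{\boldsymbol{\theta}}(\tilde{\boldsymbol{x}}))\ge g(h_{\boldsymbol{\theta}}(\boldsymbol{x}))+cd_{\min}\ge cd_{\min}>\tau$, where the last strict inequality uses $d_{\min}>\tau/c$. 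Hence $\tilde{\boldsymbol{x}}$ is flagged as anomalous.

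I do not expect a real technical obstacle here; the proof is essentially a reverse-triangle computation riding on Assumption \ref{ass: gap}. The one conceptually delicate point — and the reason the $\max$ in the statement cannot be dropped — is resolving the sign inside the absolute value: the threshold $\tau/c$ alone would only tell us the score has moved by at least $\tau$, not necessarily upward out of the normal range, whereas $\epsilon/c$ forbids the downward branch via non-negativity of the score. Taking $\max\{\epsilon/c,\tau/c\}$ enforces both simultaneously, pushing the score past $\tau$ rather than down below $0$.
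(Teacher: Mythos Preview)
Your proof is correct and follows essentially the same route as the paper's: split the absolute value from Assumption~\ref{ass: gap} into the two signed cases, use non-negativity of the score together with $d_{\min}>\epsilon/c$ to exclude the downward branch, and then use $d_{\min}>\tau/c$ on the surviving branch to push the score above $\tau$. Your explicit justification of why $g(h_{\boldsymbol{\theta}}(\cdot))\ge 0$ and your closing remark on why both terms in the $\max$ are needed are slight elaborations beyond the paper's terse version, but the logical skeleton is identical.
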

This theorem provides a theoretical guarantee for our method to detect any anomalous sample in $\tilde{\mathcal{X}}_E$ or drawn from $
\tilde{\mathcal{D}}_E$ (depicted by the blue double-headed arrow in Figure \ref{fig: egg}) as anomaly successfully. When $c$ is larger, the detection is easier. 
We can derive a bound for $c$ with more assumptions: if the learned model $h$ is bijective and $L$-Lipschitz, then $h^{-1}$ is $1/L$-Lipschitz, meaning that $c=\alpha L$, where $\alpha=\inf_ {\boldsymbol{z} \in {dom}(g)}\Vert\nabla g(z)\Vert$. Using an invertible neural network \cite{behrmann2019invertible} could achieve this. For an $h$ with $q$ layers, spectral norm $\Vert\mathbf{W}_ i\Vert_ {\sigma}$ for layer $i$, and $\rho$-Lipschitz activation, we have $c=\alpha\prod_ {i=1}^q\rho^q\Vert\mathbf{W}_ i\Vert_ {\sigma}^q$. 
In our experiments, we used both MLP and ResMLP without dimensionality reduction. According to Theorem 1 of \cite{behrmann2019invertible}, ResMLP is invertible if each layer’s Lipschitz constant is under 1, which is easy to ensure.


To sum up, Theorem \ref{theorem: gap} and Theorem \ref{theorem_dmin} provide guarantees for detecting $\mathcal{X}_H$ and $\mathcal{X}_E$ respectively. Therefore, our method is theoretically guaranteed to detect $\tilde{\mathcal{X}}$, although we never use any real anomalous data in the training stage.

\section{Experiments}

\subsection{Experimental Settings} \label{sec: exp}

\paragraph{\textbf{Datasets}} 
We evaluate our method in two common settings: unsupervised anomaly detection and one-class classification. In the anomaly detection setting, where anomalous samples are few, we use 47 real-world tabular datasets\footnote{https://github.com/Minqi824/ADBench/} from \citep{han2022adbench}, covering domains like healthcare, image processing, and finance. For one-class classification setting, we collected 25 benchmark tabular datasets used in previous works \citep{pang2021deep, shenkar2022anomaly}. The raw data was sourced from the UCI Machine Learning Repository \citep{UCI} and their official websites. For categoric value, we use a one-hot encoding. We test on all classes in multi-class datasets, reporting the average performance score per class, which is similar to one-class classification on image dataset \citep{cai2022perturbation}. 
For datasets with validation/testing sets, we train on all normal samples. If only a training set is available, we randomly split 50\% of normal samples for training and use the rest with anomalous data for testing. Data is standardized using the training set's mean and standard deviation. Dataset details are in Table \ref{tab: dataset} of Appendix D.

\paragraph{\textbf{Baseline Methods}} We select 12 baseline methods for comparative analysis, including probabilistic-based, proximity-based, deep neural network-based, ensemble-based methods, and recent UAD methods that can be applied to tabular data. They are Isolated Forest (\textbf{IForest}) \citep{liu2008isolation}, \textbf{COPOD} \citep{li2020copod}, auto-encoder (\textbf{AE}) \citep{aggarwal2016outlier}, \textbf{KNN} \citep{angiulli2002fast}, Local Outlier Factor (\textbf{LOF}) \citep{breunig2000lof}, \textbf{DeepSVDD} \citep{ruff2018deep}, \textbf{AnoGAN} \citep{schlegl2017unsupervised}, \textbf{ECOD} \citep{li2022ecod}, \textbf{SCAD} \citep{shenkar2022anomaly}, \textbf{NeuTraLAD} \cite{qiu2021neural}, \textbf{PLAD} \cite{cai2022perturbation}, and \textbf{DPAD} \cite{fu2024dense}. For DPAD, PLAD, SCAD, and NeuTraLAD, we use the code provided by the authors.
For the other baseline methods, we utilize PyOD, a Python library developed by \citep{zhao2019pyod}. The default settings are adopted. We repeat 10 times for each baseline.

\begin{figure}
\centering
        \includegraphics[width=0.70\linewidth,trim={10 10 -15 0},clip]{ 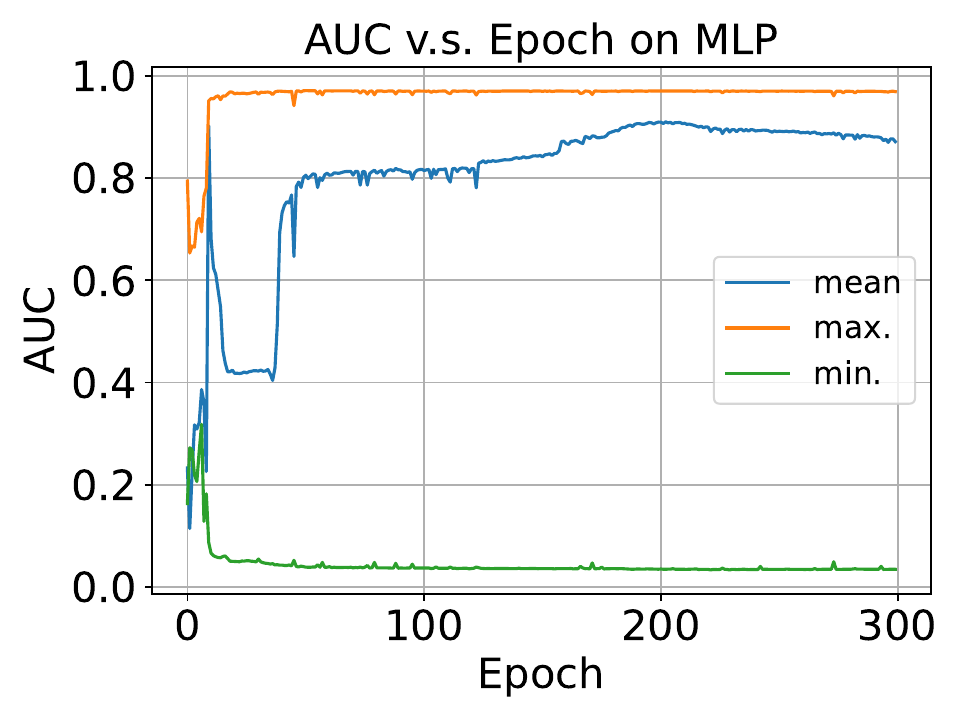}
        \caption{Comparison of different $g(\cdot)$, i.e. mean, maximum, and minimum, on KDD-CUP99, at each optimization epoch. 
        }\label{fig: metric}
        \vspace{-10pt}
\end{figure}

\begin{figure*}
\begin{subfigure}{0.49\linewidth}
  \centering
  \includegraphics[width=\linewidth]{ 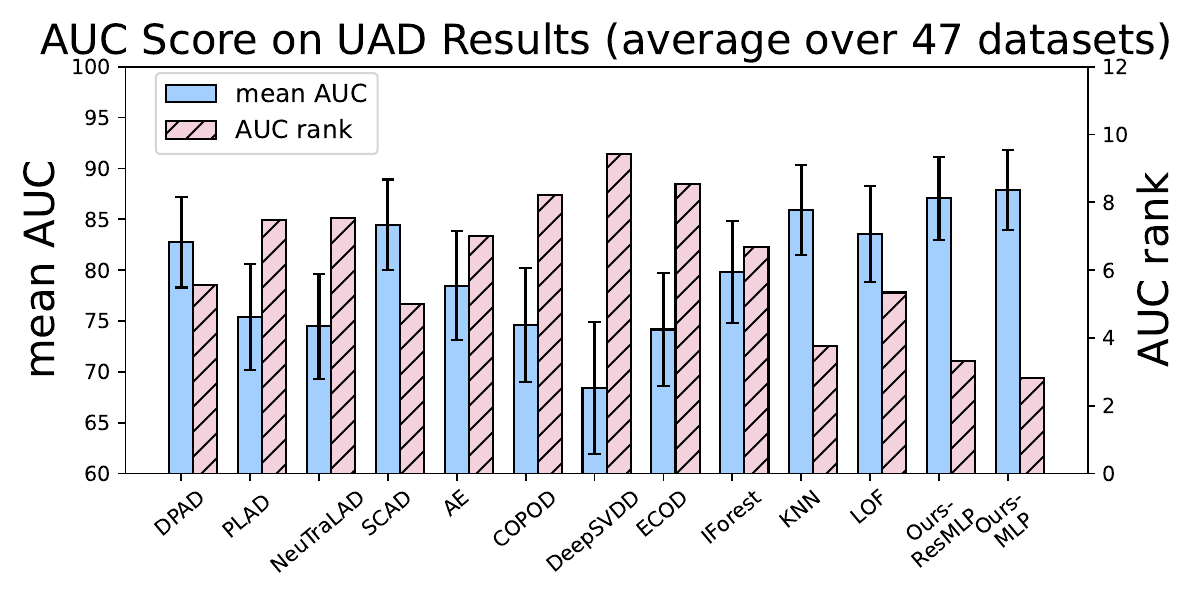}  
  \caption{AUC and Rank on UAD Results.}
\end{subfigure}
\begin{subfigure}{0.49\linewidth}
  \centering
  \includegraphics[width=\linewidth]{ 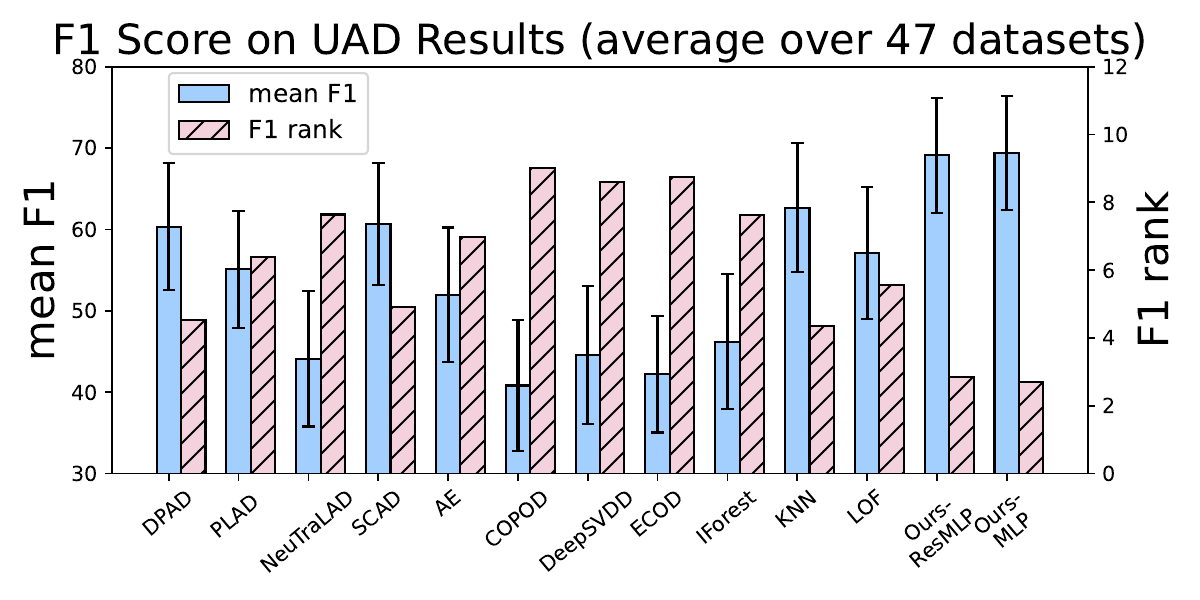}  
  \caption{F1 and Rank on UAD Results.}
\end{subfigure}
\caption{AUC (\%) and F1 (\%) score of the proposed method compared with 11 baselines on 47 benchmark datasets. Each experiment is repeated 10 times with random seed from 0 to 9, and mean value and 95\% confidence interval are reported. Rank (the lower the better) is calculated out of 12 tested methods.}
\label{fig: adbench results}
\end{figure*}

\paragraph{\textbf{Implementation}} All experiments are implemented by Pytorch \citep{paszke2017automatic} on NVIDIA Tesla V100 and Intel Xeon
Gold 6200 platform. We utilize two network architectures for the evaluation, VanillaMLP (\textbf{MLP}) and \textbf{ResMLP}. VanillaMLP is a plain ReLU-activated feed-forward neural network with 4 fully connected layers. ResMLP has a similar architecture to \citep{touvron2022resmlp} but has a simpler structure. Detailed architectures are in Appendix E\ref{app: network}. The network model is optimized by AMSGrad \citep{reddi2018convergence} with $10^{-4}$ learning rate and $5\times10^{-4}$ weight decay. In the results, we adopt Gaussian noise in the noise generation, maximum noise level $\sigma_{max}=2$, and the number of different noise distributions $m=3$. To enlarge the noised sample number, we generate 3 noise-augmented instances for the same input instance with 3 different noise ratios, $0.5$, $0.8$, and $1.0$ at each epoch. Unless specified, we train the model for $500$ epochs and manually decay the learning rate at the $100$-th epoch in a factor of $0.1$.

\paragraph{\textbf{Performance Metric}}
Since the output of $h$ is a noise evaluation vector, an operation $g(\cdot)$ is introduced, as explained by \eqref{eq_score}. Here we select the maximum. A comparative result on KDD-CUP99 among maximum, minimum, and mean is shown in Figure \ref{fig: metric} where we train the model for 300 epochs. The maximum reaches the highest performance and the fastest convergence speed. For the performance metric, we report the area under the ROC curve (AUC) and F1 score. The calculation of the F1 score and the determination of anomaly threshold $\tau$ are consistent with \citep{shenkar2022anomaly, qiu2021neural}. 

\begin{table*}[h!]
\setlength\tabcolsep{1.5pt}
\centering
\resizebox{\textwidth}{!}{
\begin{tabular}{l|cccccccccccc|cc}
\toprule
\hline
\textbf{Methods}& \makecell[c]{\textbf{DPAD}\\\textbf{(2024)}}& \makecell[c]{\textbf{PLAD} \\ \textbf{(2022)}}& \makecell[c]{\textbf{NeuTraLAD}\\\textbf{(2021)}} & \makecell[c]{\textbf{SCAD}\\\textbf{(2022)}} & \textbf{AE} & \textbf{AnoGAN} & \textbf{COPOD} & \textbf{DeepSVDD} & \makecell[c]{\textbf{ECOD} \\ \textbf{(2023)}} & \textbf{IForest} & \textbf{KNN} & \textbf{LOF} & \makecell[c]{\textbf{Ours-} \\\textbf{ResMLP}} & \makecell[c]{\textbf{Ours-} \\\textbf{MLP}} \\
\midrule
abalone   &70.64$\pm$13.2 & 64.23$\pm$15.4& 45.29$\pm$8.8   & 63.90$\pm$14.7 & \underline{71.69}$\pm$11.1 & 68.65$\pm$14.7 & 43.57$\pm$15.0 & 59.78$\pm$16.1 & 45.00$\pm$16.9 & 71.55$\pm$12.2 & 69.11$\pm$14.1 & 68.89$\pm$13.0 & \textbf{76.90}$\pm$11.0 & \textbf{76.78}$\pm$10.9 \\
arrhythmia & 76.38$\pm$1.0 & 63.15$\pm$5.1 & 52.13$\pm$0.0   & 69.60$\pm$2.0 & 76.12$\pm$1.8 & 69.61$\pm$2.6 & 75.23$\pm$1.4 & 72.82$\pm$2.2 & 75.21$\pm$1.5 & \underline{77.06}$\pm$1.8 & 75.87$\pm$2.0 & 75.85$\pm$1.9 & \textbf{77.34}$\pm$2.4 & 76.77$\pm$2.1 \\
breastw  & 98.36$\pm$0.3 & 87.4$\pm$15.0 & 78.77$\pm$3.0    & 97.29$\pm$0.7 & 98.83$\pm$0.4 & 99.11$\pm$0.3 & \textbf{99.29}$\pm$0.2 & 95.58$\pm$1.1 & 98.62$\pm$0.3 & \underline{99.36}$\pm$0.2 & 98.84$\pm$0.3 & 95.43$\pm$1.6 & 98.49$\pm$0.7 & 98.84$\pm$0.4 \\
cardio   & 82.13$\pm$3.3 & 74.82$\pm$8.9 & 55.84$\pm$2.6    & 69.37$\pm$1.8 & \underline{83.36}$\pm$0.8 & 76.78$\pm$12.2 & 65.98$\pm$0.7 & 53.12$\pm$7.7 & 78.03$\pm$0.6 & 81.74$\pm$2.0 & 73.96$\pm$1.0 & 77.37$\pm$1.5 & \textbf{85.59}$\pm$2.4 & \textbf{86.01}$\pm$3.3 \\
ecoli   & 92.25$\pm$4.4 & 68.40$\pm$2.6 & 48.46$\pm$12.0     & 88.17$\pm$5.7 & 93.18$\pm$4.0 & 90.86$\pm$6.2 & 49.16$\pm$22.2 & 87.63$\pm$5.8 & 50.98$\pm$11.6 & \underline{93.21}$\pm$3.6 & 93.19$\pm$4.2 & 91.42$\pm$5.7 & 92.98$\pm$3.7 & \textbf{93.53}$\pm$3.7 \\
glass   & 76.73$\pm$7.4 & 64.8$\pm$10.5 & 57.68$\pm$9.0     & \underline{78.89}$\pm$6.0 & 73.39$\pm$10.4 & 74.76$\pm$11.0 & 44.61$\pm$26.1 & 75.39$\pm$8.2 & 44.75$\pm$24.2 & 76.96$\pm$10.5 & 76.66$\pm$9.0 & 71.09$\pm$9.7 & \textbf{83.12}$\pm$10.1 & \textbf{84.41}$\pm$10.2 \\
ionosphere  & 96.66$\pm$0.4 & 64.26$\pm$15.2 & 90.77$\pm$2.4 & 96.83$\pm$0.7 & 90.40$\pm$1.2 & 86.19$\pm$5.9 & 80.05$\pm$1.6 & 95.56$\pm$1.1 & 74.33$\pm$1.6 & 89.53$\pm$2.9 & \underline{97.47}$\pm$0.8 & 95.44$\pm$1.3 & \textbf{97.53}$\pm$0.3 & 97.11$\pm$0.7 \\
kdd    & 81.76$\pm$11.9 & 90.87$\pm$6.6 & 93.6$\pm$1.7       & 91.56$\pm$4.0 & 95.12$\pm$3.3 & 92.71$\pm$2.5 & 76.51$\pm$1.2 & 72.81$\pm$27.8 & 78.75$\pm$1.3 & \underline{96.12}$\pm$0.3 & 94.45$\pm$0.2 & 85.05$\pm$0.2 & \textbf{95.69}$\pm$1.2 & \textbf{96.92}$\pm$0.6 \\
letter   & 93.39$\pm$3.3 & 59.05$\pm$15.3 & 90.95$\pm$4.6    & \underline{99.26}$\pm$0.5 & 88.91$\pm$4.3 & 80.23$\pm$8.5 & 50.03$\pm$15.8 & 94.25$\pm$2.4 & 50.09$\pm$15.4 & 91.15$\pm$3.4 & 97.95$\pm$1.6 & 95.61$\pm$2.6 & \textbf{99.42}$\pm$0.3 & \textbf{99.42}$\pm$0.4 \\
lympho  & 74.21$\pm$3.6 & 64.96$\pm$8.1 & 46.6$\pm$2.2     & 74.91$\pm$4.6 & 75.84$\pm$3.7 & 70.15$\pm$6.7 & 53.38$\pm$3.7 & 73.37$\pm$5.8 & 53.52$\pm$3.5 & 75.14$\pm$5.3 & \underline{77.86}$\pm$3.2 & 76.50$\pm$3.2 & \textbf{81.14}$\pm$4.7 & \textbf{82.37}$\pm$5.0 \\
mammo. & 88.32$\pm$1.7 & 82.38$\pm$2.3 & 65.57$\pm$2.3 & 78.18$\pm$2.2 & 85.65$\pm$1.1 & 78.67$\pm$13.4 & \underline{90.54}$\pm$0.1 & 69.87$\pm$7.1 & \textbf{90.63}$\pm$0.1 & 87.91$\pm$0.7 & 87.55$\pm$0.3 & 84.12$\pm$1.2 & 90.11$\pm$0.9 & 89.89$\pm$0.4 \\
mulcross   & \textbf{100.0}$\pm$0.0 & \underline{99.90}$\pm$0.0 & 76.59$\pm$11.3  & \textbf{100.0}$\pm$0.0 & \textbf{100.0}$\pm$0.0 & 99.89$\pm$0.1 & 93.24$\pm$0.0 & \textbf{100.0}$\pm$0.0 & 95.97$\pm$0.1 & \underline{99.90}$\pm$0.1 & \textbf{100.0}$\pm$0.0 & \textbf{100.0}$\pm$0.0 & \textbf{100.0}$\pm$0.0 & \textbf{100.0}$\pm$0.0 \\
musk  & 71.36$\pm$2.9 & 74.42$\pm$3.3 & 78.86$\pm$1.6       & 81.30$\pm$0.7 & 30.68$\pm$0.6 & 41.37$\pm$12.2 & 32.03$\pm$0.4 & 62.02$\pm$6.0 & 30.16$\pm$0.4 & 46.74$\pm$3.7 & \underline{81.94}$\pm$0.4 & 81.31$\pm$0.6 & \textbf{85.06}$\pm$0.5 & \textbf{83.69}$\pm$1.0 \\
optdigits & 93.43$\pm$4.3 & 80.43$\pm$15.5 & 67.3$\pm$15.0   & \underline{97.80}$\pm$1.8 & 95.69$\pm$2.6 & 90.04$\pm$6.6 & 71.72$\pm$8.8 & 94.86$\pm$3.7 & 68.69$\pm$10.1 & 97.55$\pm$1.7 & 97.25$\pm$2.2 & 97.17$\pm$2.3 & \textbf{97.93}$\pm$1.7 & \textbf{98.00}$\pm$1.7 \\
pendigits & 98.04$\pm$2.1 & 86.45$\pm$14.9 & 97.23$\pm$2.4   & \underline{99.65}$\pm$0.3 & 95.46$\pm$4.7 & 92.56$\pm$9.8 & 49.90$\pm$25.2 & 94.25$\pm$5.3 & 49.93$\pm$22.9 & 98.08$\pm$1.7 & 99.59$\pm$0.5 & 99.11$\pm$1.0 & \textbf{99.82}$\pm$0.2 & \textbf{99.84}$\pm$0.2 \\
pima   & 70.5$\pm$1.7 & 63.67$\pm$5.7 & 48.63$\pm$4.6     & 67.97$\pm$1.3  & 69.73$\pm$1.5  & 70.94$\pm$5.0 & 65.58$\pm$1.0  & 58.18$\pm$2.9  & 59.46$\pm$1.3  & 73.33$\pm$1.4  & \textbf{74.63}$\pm$1.0 & 71.06$\pm$1.5  & \underline{72.42}$\pm$2.6  & \underline{73.53}$\pm$2.9  \\ 
satimage  & 92.83$\pm$5.1 & 54.83$\pm$19.4 & 47.98$\pm$12.0  & 93.35$\pm$4.8  & 93.39$\pm$5.4  & 91.98$\pm$6.8             & 69.24$\pm$26.8 & 75.10$\pm$7.7   & 62.75$\pm$22.8 & \textbf{95.33}$\pm$3.8 & \underline{94.85}$\pm$4.4  & 89.67$\pm$6.6  & 94.51$\pm$3.6  & 93.72$\pm$4.7  \\ 
seismic  & \textbf{74.81}$\pm$0.8 & 72.59$\pm$2.8 & 67.75$\pm$1.2   & 72.47$\pm$0.8  & 71.13$\pm$0.5  & 72.82$\pm$1.5 & 73.87$\pm$0.5  & 54.32$\pm$10.2 & 70.17$\pm$0.4  & 73.48$\pm$0.6  & \underline{74.57}$\pm$0.5 & 60.87$\pm$1.6  & 71.38$\pm$1.3  & 72.62$\pm$1.9  \\ 
shuttle & \underline{98.93}$\pm$1.4 & 88.55$\pm$14.2 & 97.42$\pm$2.4    & 98.30$\pm$2.0   & 92.49$\pm$8.8  & 88.42$\pm$16.2             & 31.08$\pm$31.4 & 96.51$\pm$6.5  & 35.53$\pm$30.8 & 91.55$\pm$6.7  & 98.66$\pm$1.4 & 95.60$\pm$6.2   & \textbf{99.35}$\pm$0.8  & \textbf{99.40}$\pm$1.0   \\ 
speech  & 54.38$\pm$2.7 & 57.32$\pm$4.9 & 56.31$\pm$4.8    & \underline{57.85}$\pm$2.4 & 47.08$\pm$0.5  & 50.33$\pm$5.4 & 49.15$\pm$0.6  & 52.20$\pm$3.8   & 47.08$\pm$0.5  & 46.88$\pm$1.6  & 48.67$\pm$0.7  & 49.81$\pm$0.5  & 57.14$\pm$2.6  & \textbf{58.38}$\pm$2.1  \\ 
thyroid   & 91.96$\pm$1.0 & \underline{97.02}$\pm$1.7 & 80.02$\pm$3.4  & 86.53$\pm$1.4  & 83.38$\pm$0.6  & 72.63$\pm$4.3            & 78.46$\pm$0.0  & 57.35$\pm$3.5  & 79.17$\pm$0.0  & 90.08$\pm$1.4  & 91.71$\pm$0.0 & 88.02$\pm$0.0  & \textbf{97.47}$\pm$0.1  & \textbf{96.91}$\pm$0.2  \\ 
vertebral & 87.31$\pm$2.6 & 56.47$\pm$28.6 & 50.35$\pm$3.5  & 78.56$\pm$3.3  & 88.58$\pm$1.3 & 87.96$\pm$3.2 & 78.63$\pm$2.0  & 80.43$\pm$3.8  & 60.02$\pm$2.9  & 86.08$\pm$2.3  & 87.18$\pm$1.0  & \underline{88.20}$\pm$1.5   & \textbf{90.91}$\pm$1.4  & \textbf{89.91}$\pm$1.2  \\ 
vowels  & 80.22$\pm$4.1 & 79.77$\pm$17.1 & 97.25$\pm$1.1    & \textbf{99.52}$\pm$0.2 & 62.80$\pm$0.8   & 59.32$\pm$7.8 & 49.73$\pm$0.8  & 72.01$\pm$6.5  & 59.39$\pm$0.7  & 78.17$\pm$2.2  & 97.16$\pm$0.4  & 95.53$\pm$0.7  & \underline{99.42}$\pm$0.3  & \underline{99.14}$\pm$0.2  \\ 
wbc    & 95.10$\pm$1.1 & 68.07$\pm$15.0 & 53.01$\pm$12.6      & 93.11$\pm$1.3  & 95.63$\pm$0.6  & 94.01$\pm$2.5 & 86.53$\pm$1.0  & 90.22$\pm$3.3  & 62.46$\pm$1.4  & \underline{95.74}$\pm$0.8  & 94.68$\pm$0.9  & 95.09$\pm$0.9  & \textbf{96.98}$\pm$1.1  & 96.73$\pm$1.1  \\ 
wine  & 95.68$\pm$4.5 & 67.13$\pm$7.8 & 43.95$\pm$12.0      & 86.21$\pm$7.8  & 97.00$\pm$4.1   & 94.47$\pm$10.4   & 49.37$\pm$7.8  & 95.46$\pm$5.5  & 49.37$\pm$11.5 & 96.23$\pm$3.1  & 97.81$\pm$2.8  & \underline{97.98}$\pm$2.4  & \textbf{98.76}$\pm$1.8  & \textbf{98.33}$\pm$2.3  \\ \hline
mean   &88.05$\pm$12.4 & 67.42$\pm$19.6 & 71.45$\pm$22.6   & 88.59$\pm$15.3 & 85.68$\pm$13.2& 80.17$\pm$14.8  & 53.38$\pm$22.4& 80.85$\pm$19.1& 52.77$\pm$20.9 & 87.0$\pm$12.8& \underline{90.37}$\pm$13.5 & 88.72$\pm$13.4& \textbf{92.68}$\pm$11.0& \textbf{92.27}$\pm$11.1\\ 
mean rank  &4.96& 9.16 & 10.16 & 5.96 & 6.00 & 8.12 & 10.12 & 8.68 & 10.40 & 4.96 & 3.92 & 5.88 & \textbf{2.04} & \textbf{1.68}\\ \hline
p-value (ResMLP) &0.0003&0.0000&0.00000& 0.0001 & 0.0066 & 0.0002 & 0.0000 & 0.0000 & 0.0000 & 0.0054 & 0.0025 & 0.0000 & - & 0.47 \\
p-value (MLP) &0.0002&0.0000&0.0000&  0.0001 & 0.0051 & 0.0002 & 0.0000 & 0.0000 & 0.0000 & 0.0036 & 0.0022 & 0.0000 & 0.47 & -\\ \hline
\bottomrule
\end{tabular}}
\caption{AUC (\%) score of the proposed method compared with 12 baselines on 25 benchmark datasets. Each experiment is repeated 10 times with random seed from 0 to 9, and mean value and standard deviation are reported. Mean is the average AUC score under all experiments. Mean rank (the lower the better) is calculated out of 10 tested methods. Mammo. refers to mammography.}
\label{tab: main result}
\end{table*}

\subsection{Unsupervised Anomaly Detection Results}
The average result under the unsupervised anomaly detection setting of ten runs is reported in Figure \ref{fig: adbench results}. We conducted a comparative analysis of 11 tabular UAD methods across 47 datasets, evaluating average AUC, average F1, average ranking in AUC, and average ranking
in F1. It is seen that our methods not only significantly outperform the AE methods, but also reach the highest ranking. The detailed results on each dataset and p-value from paired t-test are reported in Appendix F\ref{app: ad results} (Tables \ref{tab: adbench auc} and \ref{tab: adbench f1}), which emphasizes the statistical significance of the improvements achieved by our methods. 

\subsection{One-Class Classification Results}

We compared our noise evaluation method with 12 baseline methods on the OCC dataset setting. The results in Table \ref{tab: main result} show that our anomaly detection techniques, Ours-ResMLP and Ours-MLP, consistently outperform baselines across various tabular datasets, with mean AUC values of $92.68\pm11.10$ and $92.27\pm11.1$, indicating greater effectiveness and lower variability. While traditional methods like IForest and KNN perform well, our methods excel, especially on complex datasets like musk and optdigits. Though our methods may not always lead in AUC, the difference is minimal, around 1\%. For faster inference, the MLP model is recommended. The last two lines in the table are the p-values of paired t-test of our two methods (ResMLP and MLP) against each baseline method. The paired t-test is based on 25 pairs  (as there are 25 datasets). Our approach also achieved the highest average F1 score and rank, $94.18\%$ and $1.52$, as detailed in Appendix G\ref{app: f1}.

Compared with many popular UAD methods, our noise evaluation method has the following advantages:
\begin{itemize}
    \item Noise evaluation shows superior performance in the extensive empirical experiments, indicating our method can accurately identify anomalies in tabular data. 
    \item The generalization and anomaly detection ability of noise evaluation can be theoretically guaranteed, whereas many deep learning based methods lack such guarantees \citep{hussain2023reliable}.
    \item The implementation of our method is easier. The perturbed sample generation can be pre-generated without extra training. In addition, no hyper-parameter is tuned in the learning objective.
\end{itemize}

 \begin{table}[H]
\centering
\setlength\tabcolsep{0.9pt}
\resizebox{0.8\linewidth}{!}{
\begin{tabular}{l|ccc} 
\toprule
\textbf{Type} & \textbf{Parameter} & \textbf{Value} & \textbf{Offset}\\ 
\midrule
Gaussian & $\mu, \sigma$ & $\mu=0, \sigma=\hat{\sigma}$ & $0$  \\ 
Laplace & $\mu, \sigma$ & $\mu=0, \sigma=\hat{\sigma}$ & $0$  \\ 
Uniform & $a, b$ & $a=-\sqrt{3}\hat{\sigma}, b=\sqrt{3}\hat{\sigma}$ & $0$  \\ 
Rayleigh & $s$ & $s=\sqrt{\frac{2}{4-\pi}}\hat{\sigma}$ & $-\sqrt{\frac{\pi}{4-\pi}}\hat{\sigma}$  \\ 
Gamma & $\alpha, \beta$ & $\alpha=\sqrt{\frac{1}{\beta}}\hat{\sigma}, \beta=\beta$ & $-\sqrt{\beta}\hat{\sigma}$ \\ 
Poisson & $\lambda$ & $\lambda=\hat{\sigma}$ & $-\hat{\sigma}$ \\ 
\bottomrule
\end{tabular}}
\caption{Eight different noise types are adopted. 
We adjust the parameter to make it $0$ mean value and $\hat{\sigma}$ standard deviation, where $\hat{\sigma}$ is the designated noise level. If the noise is non-negative by default, we offset its mean to $0$. For the Gamma noise, $\beta$ is a non-negative hyper-parameter, where we evaluate $\beta=1$ and $\beta=3$. For Salt\&Pepper and Bernoulli noise, we generate a probability vector based on a uniform distribution. Then, generate a binary vector using the probability vector. If there is noise, we change the value into the maximum or minimum value in the batch or flip the sign of the element of $\boldsymbol{x}$.}
\label{tab: noise type}
\end{table}

\subsection{Ablation Study}
As discussed in the Proposed Method Section, we have several alternatives. In this section, we study how different noise levels, noise ratios, and noise types affect the performance of our method. We utilize the OCC setting to perform the ablation study. Detailed results are shown in Appendix H.

\begin{figure*}[h]
\begin{subfigure}{.249\textwidth}
  \centering
  \includegraphics[width=\linewidth]{ 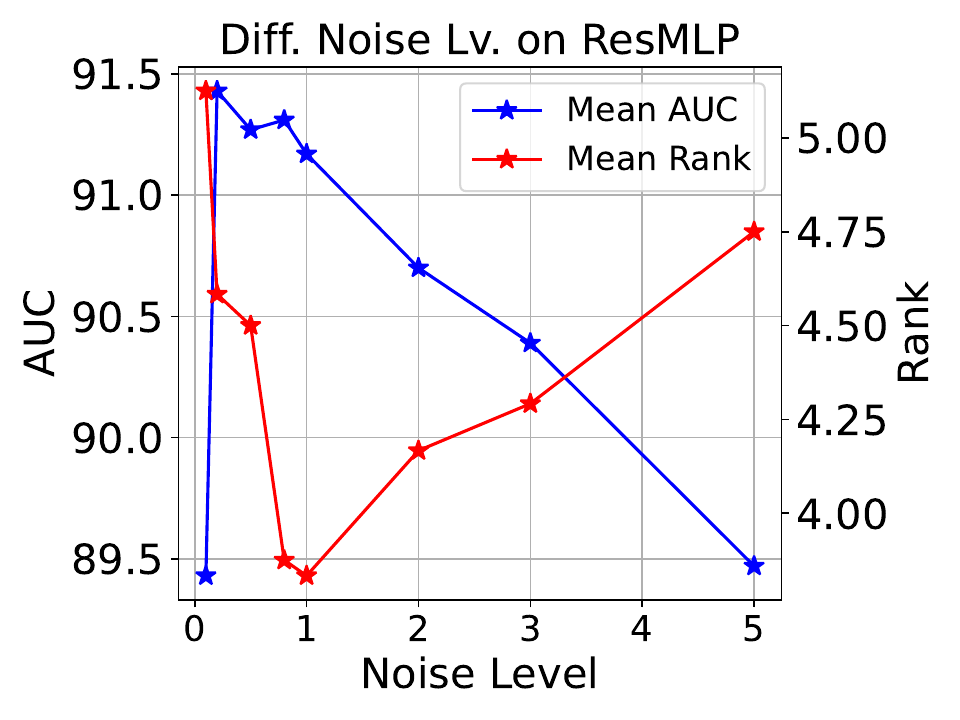}  
  \caption{AUC and Its Rank on ResMLP with Different Noise Levels}
\end{subfigure}
\begin{subfigure}{.249\textwidth}
  \centering
  \includegraphics[width=\linewidth]{ 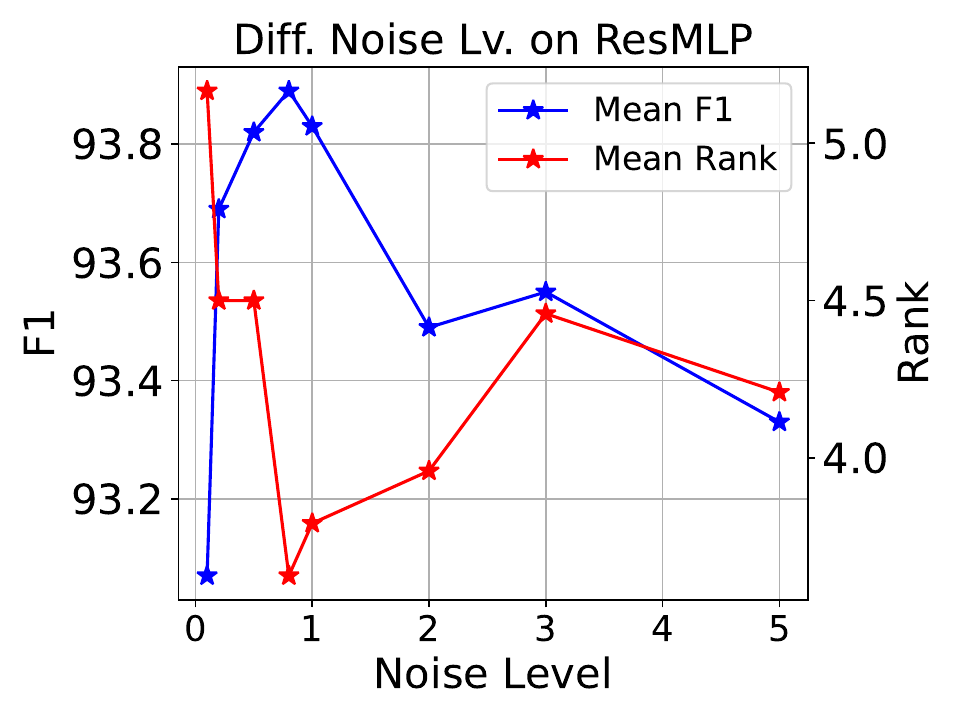}  
  \caption{F1 and Its Rank on ResMLP with Different Noise Levels}
\end{subfigure}
\begin{subfigure}{.243\textwidth}
  \centering
  \includegraphics[width=\linewidth]{ 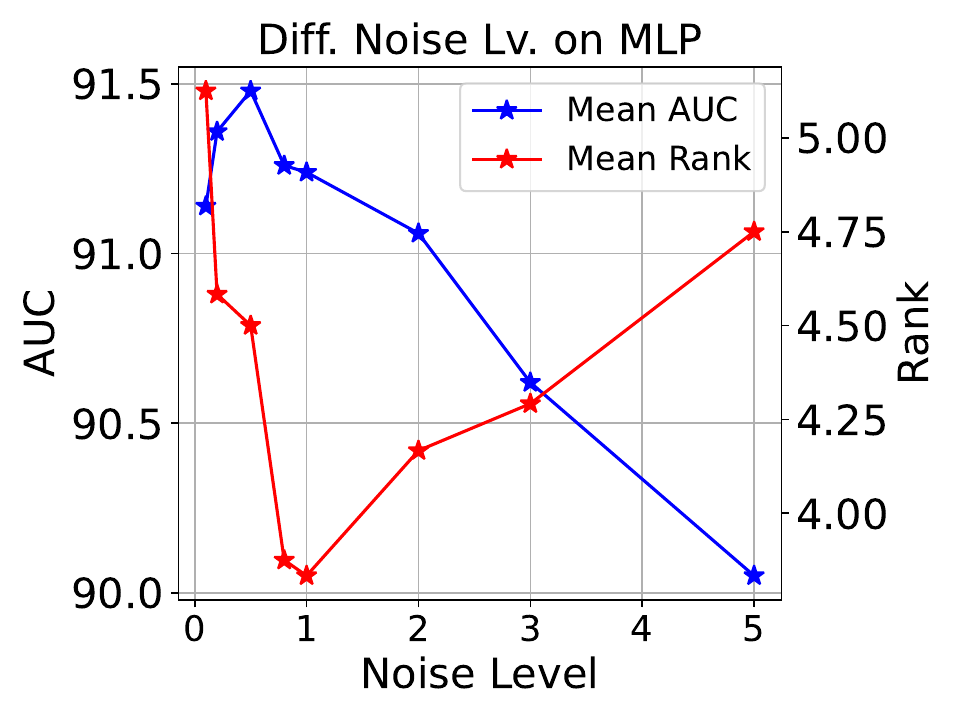}  
  \caption{AUC and Its Rank on MLP with Different Noise Levels}
\end{subfigure}
\begin{subfigure}{.243\textwidth}
  \centering
  \includegraphics[width=\linewidth]{ 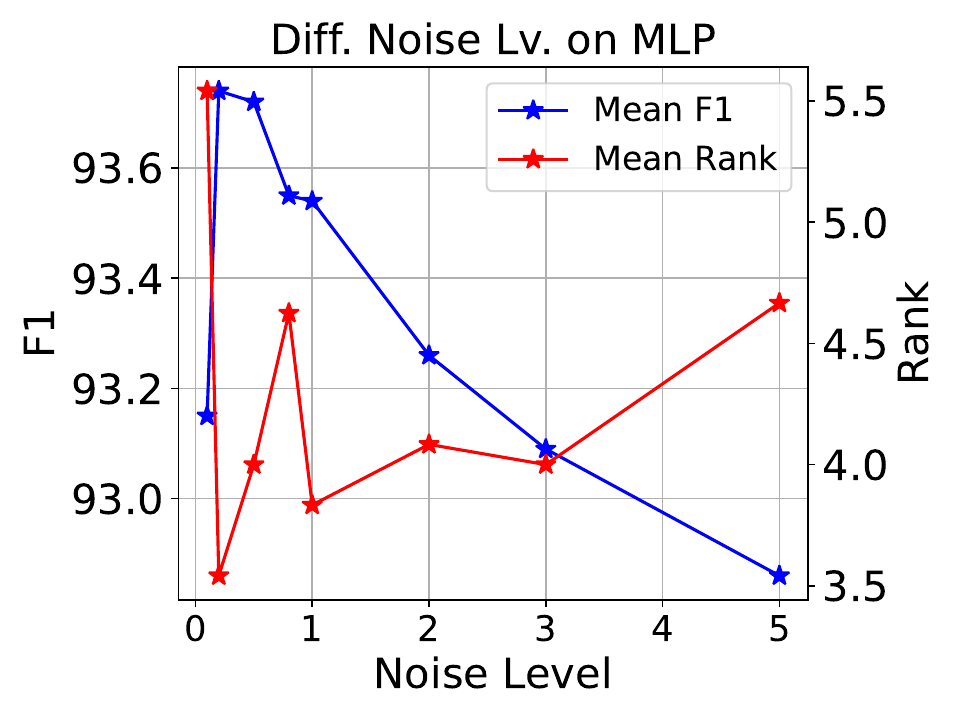}  
  \caption{F1 and Its Rank on MLP with Different Noise Levels}
\end{subfigure}
\caption{Sensitivity of Different Noise Level in $[0.1, 0.2, 0.5, 0.8, 1.0, 2.0, 3.0, 5.0]$. The mean rank (the lower the better) is calculated out of 8 noise levels.}
\label{fig: ablation noise level}
\end{figure*}

\paragraph{\textbf{Sensitivity of Different Noise Levels}} 
To explore how different noise levels affect performance, we use Gaussian noise and generate 3 noised instances per training batch, consistent with previous settings. We test noise levels in $[0.1, 0.2, 0.5, 0.8, 1.0, 2.0, 3.0, 5.0]$. The results are reported in Figure \ref{fig: ablation noise level}. Results show that too small noise levels confuse the model due to the minimal distance between normal samples and anomalies, while too high noise levels expand the output value range and sampling space, reducing effectiveness as theorem \ref{theorem: gap} suggested. Hence, the optimal noise level is around $1.0$.

\paragraph{\textbf{Sensitivity of Different Noise Types}}We explore different noise types with a mean of $0$ and a standard deviation (noise level) of $\sigma$. We use Salt\&Pepper noise, Gaussian, Laplace, Uniform, Rayleigh, Gamma, Poisson, and Bernoulli distributions. For Salt\&Pepper and Bernoulli noise, a probability vector from a uniform distribution generates a binary vector, dictating feature value alterations by changing values or reversing signs. Other distributions are adjusted to have zero mean and $\sigma$ standard deviation. Detailed parameters are in described in Table \ref{tab: noise type}, with results in Appendix H\ref{app: type results}, Figure \ref{fig: ablation noise type}. Results show Salt\&Pepper and Bernoulli noise performs poorly, indicating the effectiveness of our noise generation design. Gaussian, Rayleigh, and Uniform noise maintain stable performance with 92\% AUC and 94\% F1 scores, indicating our method's robustness with various noise types.

\paragraph{\textbf{Qualitative Visualization}} We utilize t-Distributed
Stochastic Neighbor Embedding (t-SNE) visualization \citep{van2008visualizing} to show the allocation of normal, noised, and true anomalous samples in the neural network embedding space qualitatively. In Figure \ref{fig: visualization}, we visualize the penultimate layer of the neural network using KDD-CUP99. It is seen that the introduction of noised instances into the dataset ostensibly aids the model in constructing a discriminative boundary that proficiently segregates the in-distribution data from its out-of-distribution counterparts. Empirical observations reveal that the actual anomalous data predominantly falls outside of this established boundary, a phenomenon consistently manifested in the experimental results for both ResMLP and MLP models. 
\begin{figure}
        \vspace{-15pt}
        \centering
        \begin{subfigure}{.22\textwidth}
  \centering
  \includegraphics[width=\linewidth]{ 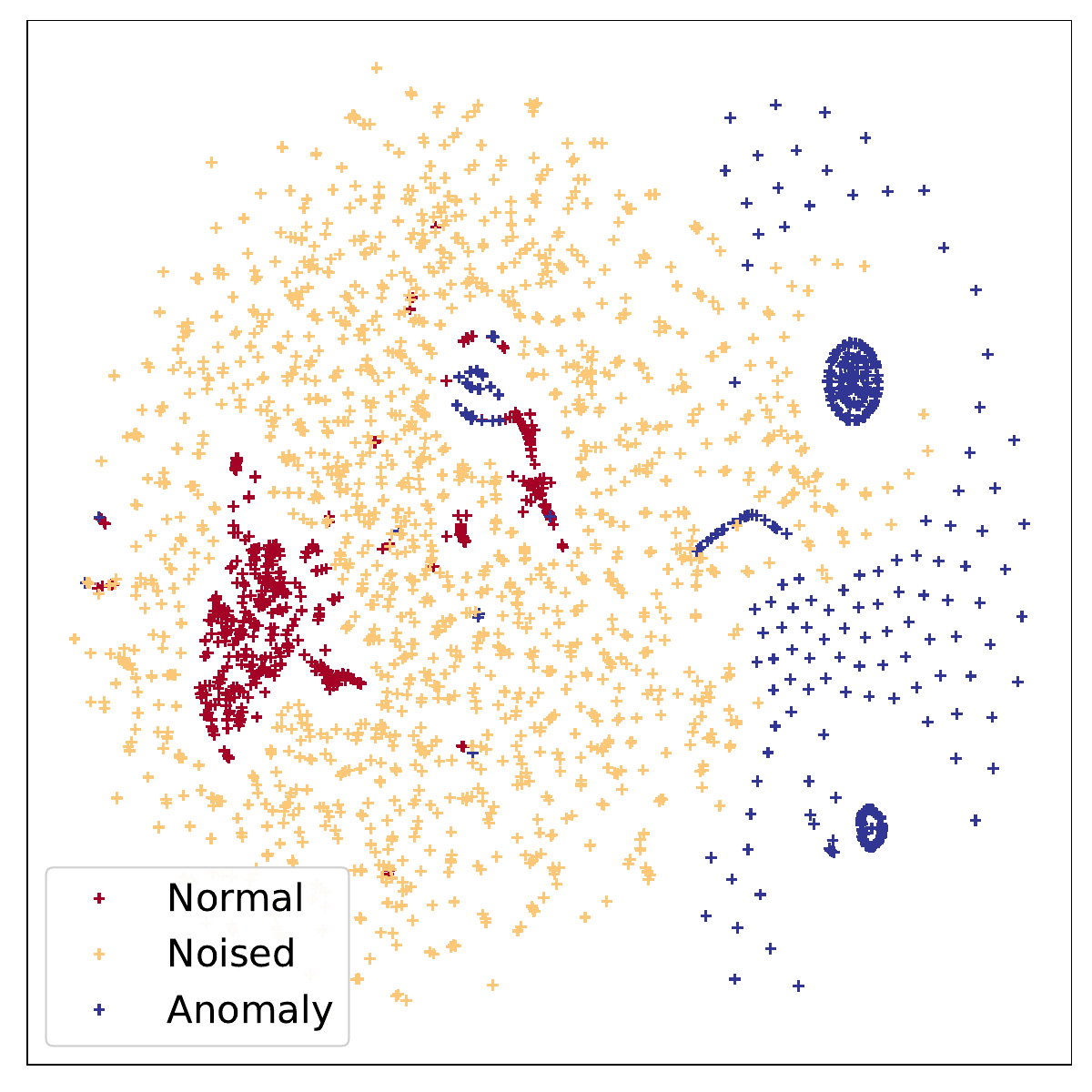}  
  \caption{ResMLP T-SNE Visual.}
\end{subfigure}
\begin{subfigure}{.22\textwidth}
  \centering
  \includegraphics[width=\linewidth]{ 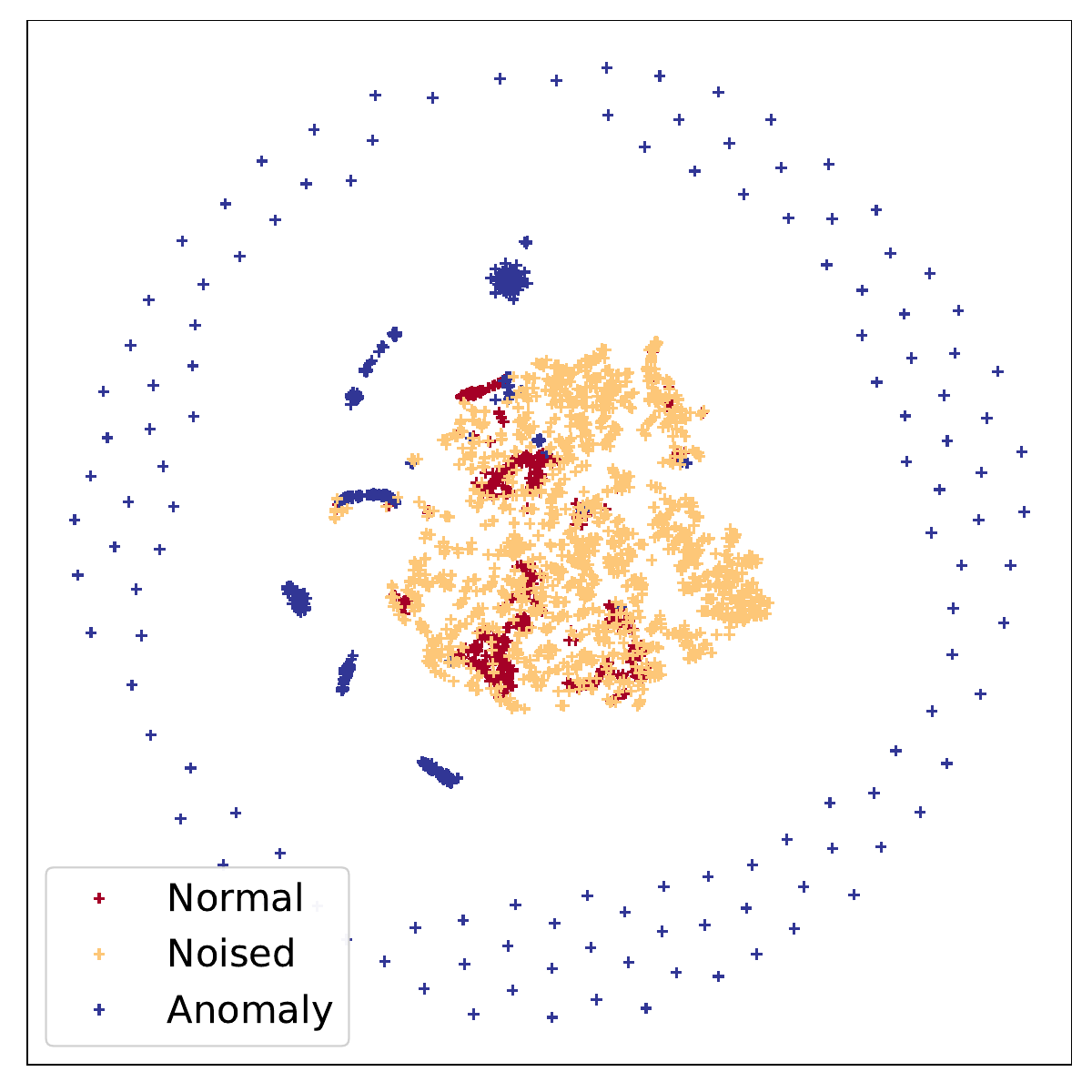}  
  \caption{MLP T-SNE Visualization}
\end{subfigure}
        \caption{
        Qualitative visualization of the penultimate layer on the KDD-CUP99 dataset shows normal (red), noised (yellow), and anomalous (blue) instances. The noised instances help the model establish a decision boundary between in-distribution and out-of-distribution data. Real anomalies consistently fall outside this boundary in both ResMLP and MLP, confirming their effectiveness in anomaly detection.
        }\label{fig: visualization}
\end{figure}

\section{Conclusion}

In conclusion, we presented a novel noise evaluation-based method for unsupervised anomaly detection in tabular data. We assume the model can learn the anomalous pattern from the noised normal data. Predicting the magnitude of the noise shows inspiration on how much and where the abnormality is. We theoretically proved the generalizability and reliability of our method. Extensive experiments demonstrated that our approach outperforms other anomaly detection methods through 47 real tabular datasets in the UAD setting and 25 real tabular datasets in the OCC setting. An ablation study suggests that using Gaussian, Rayleigh, and Uniform noise has a stable performance. One potential limitation of this work is that we focused on tabular data only. Nevertheless, it is possible to extend our method to image data via the following two steps: 1) extract features of input images using a pretrained encoder; 2) apply our method to the extracted image features. This could be an interesting future work.

\section{Acknowledgments}
This work was supported by the Shenzhen Science and Technology Program under the Grant No.JCYJ20210324130208022 (Fundamental Algorithms of Natural Language Understanding for Chinese Medical Text Processing) and the General Program of Guangdong Basic and Applied Basic Research under Grant No.2024A1515011771.

\bibliography{aaai25}







\clearpage

\appendix

\section{Supplementary Material for: \\ Unsupervised Anomaly Detection for Tabular Data Using Noise Evaluation}
\vspace{20pt}
\section{Appendix A: Connection Between Noise Evaluation and Denoising Score Matching}
We first review the denoising score matching \citep{song2019generative, vincent2011connection}. The \textit{score} of a probability density $p(\boldsymbol{x})$ is defined as $\nabla_{\boldsymbol{x}}\log{p(\boldsymbol{x})}$. Score matching train a score network parametrized by $\theta$, $s_{\theta}(\boldsymbol{x})$, to estimate $\nabla_{\boldsymbol{x}}\log{p_{data}(\boldsymbol{x})}$. Denoising score matching is a variant of score matching. It first perturbs the data $\boldsymbol{x}$ with a known noise distribution $q_{\sigma}(\hat{\boldsymbol{x}} | \boldsymbol{x})$ and estimate the score of the perturbed data distribution $q_{\sigma}(\hat{\boldsymbol{x}}) \triangleq \int q_{\sigma}(\hat{\boldsymbol{x}} | \boldsymbol{x}) p_{data}(\boldsymbol{x})\text{d}\boldsymbol{x}$. The learning objective was proved equivalent to the following:
\begin{equation} \label{eq: dsm}
    \min_{\theta} \frac{1}{2} \mathbb{E}_{q_{\sigma}(\hat{\boldsymbol{x}} | \boldsymbol{x})} \|s_{\theta}(\hat{\boldsymbol{x}}) -  \nabla_{\hat{\boldsymbol{x}}}\log{q_{\sigma}(\hat{\boldsymbol{x}} | \boldsymbol{x})}\|_2^2.
\end{equation}

Here we claim optimizing the noise evaluation learning objective is a lower bound of the denoising score matching.

\begin{proof}

Without loss of generality, similar to \citep{vincent2011connection}, we consider the perturbation kernel $q_{\sigma}(\hat{\boldsymbol{x}} | \boldsymbol{x})$ satisfies
\begin{equation*}
    \frac{\partial q_{\sigma}(\hat{\boldsymbol{x}} | \boldsymbol{x})}{\partial \hat{\boldsymbol{x}}} = \Delta (\boldsymbol{x} - \hat{\boldsymbol{x}}), \Delta>0.
\end{equation*}
If the perturbation kernel is Gaussian noise, $\Delta = \frac{1}{\sigma^2}$. 
 
Let us now choose the model $p$ as 
\begin{equation*}
    p_{\theta}(\boldsymbol{x}) = \frac{1}{Z(\theta)} \exp(-f_{\theta}(\boldsymbol{x})),
\end{equation*}
\begin{equation*}
    f_{\theta}(\boldsymbol{x}) = - \Delta\left(\int h_{\theta} (\boldsymbol{x}) \text{d}\boldsymbol{x} \right), h_{\theta} (\boldsymbol{x}) \ge \boldsymbol{0},
\end{equation*}
where $Z(\theta)$ is an intractable partition function and $h$ is our noise evaluation model. 
Then, 
\begin{align*}
    s_{\theta}(\boldsymbol{x}) &= \frac{\partial \log p_{\theta}(\boldsymbol{x})}{\partial \boldsymbol{x}} = -\frac{f_{\theta}(\boldsymbol{x})}{\partial \boldsymbol{x}} = \Delta h_{\theta} (\boldsymbol{x}).
\end{align*}
Since the noised sample in our method is generated as $\hat{\boldsymbol{x}} = \boldsymbol{x} + \boldsymbol{\epsilon}$, then we have.
\begin{align*}
    &\frac{1}{2} \mathbb{E}_{q_{\sigma}(\hat{\boldsymbol{x}} | \boldsymbol{x}) }\|s_{\theta}(\hat{\boldsymbol{x}}) -  \nabla_{\hat{\boldsymbol{x}}}\log{q_{\sigma}(\hat{\boldsymbol{x}} | \boldsymbol{x})}\|_2^2 \\ 
    =&  \frac{1}{2} \Delta \mathbb{E}_{q_{\sigma}(\hat{\boldsymbol{x}} | \boldsymbol{x})} \|h_{\theta}(\hat{\boldsymbol{x}}) - (\boldsymbol{x} - \hat{\boldsymbol{x}}) \|_2^2 \\
    \ge&  \frac{1}{2} \Delta \mathbb{E}_{q_{\sigma}(\hat{\boldsymbol{x}} | \boldsymbol{x})} \|h_{\theta}(\hat{\boldsymbol{x}}) - |\boldsymbol{x} - \hat{\boldsymbol{x}}| \|_2^2 \\
    =&  \frac{1}{2} \Delta \mathbb{E}_{q_{\sigma}(\hat{\boldsymbol{x}} | \boldsymbol{x})} \|h_{\theta}(\hat{\boldsymbol{x}}) - |\epsilon| \|_2^2
\end{align*}
For the input data from the normal class, the noise magnitude is naturally $\boldsymbol{0}$. According to \citep{song2019generative}, setting $\boldsymbol{\epsilon} = \boldsymbol{0}$ in the learning objective makes the model can learn the underlying data distribution, i.e., $s_{\theta}(\boldsymbol{x}) = \nabla_{\boldsymbol{x}}\log{q_{\sigma}(\boldsymbol{x}} ) \approx \nabla_{\boldsymbol{x}}\log{p_{data}(\boldsymbol{x}})$. Hence, we show that the noise evaluation learning
objective is a lower bound of the denoising score matching.
\end{proof}
Since the goal of our method is anomaly detection instead of sample generation, predicting the magnitude of the noise is enough to detect samples far from the $p_{data}(\boldsymbol{x})$ which are usually anomalies. For samples belonging to the normal class, it is natural that the gradient of its probability density is 0. We provide an intuitive example here with Gaussian distribution. Suppose $x \sim \mathcal{N}(0, \sigma)$, its gradient of log density function $\frac{d}{dx}\log{p(x}) = -\frac{1}{\sigma^2} x$. We see if the magnitude of the gradient is large, the sample is far away from the distribution center $0$.  
In our method, we involve diverse noise with different variances because $p_{data}(\boldsymbol{x})$ is unknown and $\frac{\partial q_{\sigma}(\hat{\boldsymbol{x}} | \boldsymbol{x}_i)}{\partial \hat{\boldsymbol{x}}}$ and $\frac{\partial q_{\sigma}(\hat{\boldsymbol{x}} | \boldsymbol{x}_j)}{\partial \hat{\boldsymbol{x}}}$ maybe very different for the same $\hat{\boldsymbol{x}}$ with different $i, j$. Since we want to enlarge such gradients, we use various noise levels. It is similar to \citep{song2019generative, song2020improved} using an annealed noise level.
Therefore, noise evaluation can detect anomaly data from the perspective of score matching.

\section{Appendix B: Noise Evaluation Training Algorithm} \label{app: training alg}
\begin{algorithm}[H]
\caption{Noise Evaluation Training} \label{alg: training}
\begin{algorithmic}[1]
\Require Training dataset $\boldsymbol{\mathcal{X}}$, max. iterations $T$;
\State Normalize $\boldsymbol{\mathcal{X}}$ to have mean of $0$ and standard deviation of $1$;
\State Initialize model $\boldsymbol{\theta}$;
\State $t \gets 1$;
\While{$t \leq T$}
    \For{each batch of data $\boldsymbol{\mathcal{B}} \in \boldsymbol{\mathcal{X}}$}
        \State  \textit{//get batch size and dimension.}
        \State $b, d \gets \boldsymbol{\mathcal{B}}.shape$;
        \State \textit{//Algorithm \ref{alg: noise generation}.}
        \State $\boldsymbol{\mathcal{E}} \gets$ \texttt{NoiseGeneration}$(b, d)$; \label{alg: noise line}
        \State $\hat{\boldsymbol{\mathcal{B}}} \gets \boldsymbol{\mathcal{B}} + \boldsymbol{\mathcal{E}}$; 
        \State $l \gets \sum_{i}\left \| h_{\theta}(\boldsymbol{x}_i)\right \|_{2}^2 + \sum_{i}\left \| h_{\theta}(\hat{\boldsymbol{x}}_i) - \left|\boldsymbol{\epsilon}_i\right| \right \|_{2}^2, \boldsymbol{x}_i\in \boldsymbol{\mathcal{B}}, \hat{\boldsymbol{x}}_i\in \hat{\boldsymbol{\mathcal{B}}}, \boldsymbol{\epsilon}_i \in \boldsymbol{\mathcal{E}}$; \textit{//Equition \eqref{eq: final objective}.}
        \State Optimize $\boldsymbol{\theta}$ by Adam \citep{kingma2014adam, reddi2018convergence} optimizer;
    \EndFor
    \State $t \gets t + 1$;
\EndWhile
\Ensure Optimized model $h_{\boldsymbol{\theta}}$;
\end{algorithmic}
\end{algorithm}
The overall training algorithm for the noise evaluation training is in Algorithm \ref{alg: training}. Notice that for each training epoch, we randomly generate a new noised instance for each training sample. This helps us enlarge the sampling number from the noise distribution for better learning. In line \ref{alg: noise line}, there are some optional noise generation schemes such as using different noise types, different noise levels, and different noise ratios. 

\paragraph{Time and Space Complexity} Let $b$ denote the batch size, $w_{\max}$ represent the maximum width of the hidden layers in an $L$-layer neural network, and $d$ indicate the dimension of the input data. Under these parameters, the time complexity of the proposed methods is at most $\mathcal{O}(bdw_{\max}LT)$, where $T$ is the maximum number of iterations. The space complexity is at most $\mathcal{O}(bd + dw_{\max} + (L-1)w_{\max}^2)$. These complexities scale linearly with the number of samples, which underscores the scalability of the proposed methods to large datasets. Moreover, in the case of high-dimensional data (i.e., large $d$), choosing a smaller $w_{\max}$ can further enhance computational efficiency.

\section{Appendix C: Proof of Theoretical Analysis} \label{app: proof}

\subsection{Proof for Proposition \ref{prop_1}}\label{app: proof_prop1}
\begin{proof}
For convenience, denote the noise variable as $E$, then $\hat{X}=X+E$. We have
\begin{equation}
\begin{aligned} &H(\hat{X} \mid X) \\
=&-\sum_x p(X=x) \sum_{\hat{x}} p(\hat{X}=\hat{x} \mid X=x) \log p(\hat{X}=\hat{x} \mid X=x) \\
=&-\sum_x p(X=x) \\
&\sum_{\hat{x}} p(E=\hat{x}-x \mid X=x) \log p(E=\hat{x}-x \mid X=x) \\ 
=&-\sum_x p(X=x) \sum_{e^{\prime}} p\left(E=e^{\prime} \mid X=x\right) \log p\left(E=e^{\prime} \mid X=x\right) \\
=&H(E \mid X)
\end{aligned}
\end{equation}
Since $E$ and $X$ are independent, we have $H(E \mid X)=H(E)$. On the other hand, we have $H(\hat{X})\geq H(\hat{X}|X)$, which can be proved by the definition of conditional entropy and Jensen's inequality (or log sum inequality more specifically). Then we arrive at
    $$H(\hat{X})\geq H(\hat{X}\mid X)=H(E\mid X)=H(E).$$
Similarly, by symmetry, we have
$$H(\hat{X})\geq H(\hat{X}\mid E)=H(X\mid E)=H(X).$$
This inequality holds for each variable in $\boldsymbol{x}$. Then the total entropy of the $d$ variables increases, which means the data becomes more disordered.
\end{proof}

\subsection{Definition of $\mathcal{H}$-divergence}\label{app_H_div}
\begin{definition}
    (Based on Kifer et al., 2004 \citep{kifer2004detecting}) Given a domain $\mathcal{X}$ with $D$ and $D'$ probability distributions over $\mathcal{X}$, let $\mathcal{H}$ be a hypothesis class on $\mathcal{X}$ and denote by $I(h)$ the set for which $h \in \mathcal{H}$ is the characteristic function; that is, $x \in I(h) \iff h(x) = 1$. The $\mathcal{H}$-divergence between $D$ and $D'$ is
\[
d_{\mathcal{H}}(D, D') = 2 \sup_{h \in \mathcal{H}} \left| \Pr_{D} [I(h)] - \Pr_{D'} [I(h)] \right|.
\]
\end{definition}
The definition is defined in \citep{ben2010theory}. We repeat here to provide reader convenience. In addition, $\mathcal{H}\Delta\mathcal{H}$ is a symmetric difference hypothesis space, where $f\in\mathcal{H}\Delta\mathcal{H}, f(\boldsymbol{x})=I(g(h(\boldsymbol{x}))>\tau)~XOR~I(g(h'(\boldsymbol{x}))>\tau)$ for some $h, h'\in \mathcal{H}$.

\subsection{Proof of Theorem \ref{theorem: gap}}\label{app: proof_eed}
Before proving the theorem, we provide the following necessary lemma according to \citep{ben2010theory, bartlett2019nearly}.
\begin{lemma}
    Let $\mathcal{H}$ be a hypothesis space on $\mathbb{R}^d$ with VC dimension $d_{vc}$. $h\in\mathcal{H}$ is a ReLu-activated deep neural network with $L$ layers, and the number of neurons at each layer is in the order of $p$.  If $\mathcal{X}$ and $\mathcal{X}'$ 
    are samples of size $N$ from two distributions $\mathcal{D}$ and $\mathcal{D}'$ respectively and $\hat{d}_{\mathcal{H}}(\mathcal{X}, \mathcal{X}')$ is the empirical $\mathcal{H}$-divergence between samples, then for any $\delta \in (0, 1)$, with probability at least $1 - \delta$,
$$
d_{\mathcal{H}}(\mathcal{D}, \mathcal{D}') \leq \hat{d}_{\mathcal{H}}(\mathcal{X}, \mathcal{X}') + 4 \sqrt{\frac{d_{vc} \log(2N) + \log(\frac{2}{\delta})}{N}},
$$
where $d_{vc}=\mathcal{O}(p L \log (p L))$.
\end{lemma}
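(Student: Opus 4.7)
The plan is to bound the population $\mathcal{H}$-divergence by the empirical one via two standard VC-type uniform deviation inequalities, one for each sample, glued together with the triangle inequality; then substitute the Bartlett--Harvey--Liaw--Mehrabian VC bound to obtain the stated architectural dependence. The factor of $4$ in the correction term is forced precisely by the factor of $2$ built into the definitions of $d_{\mathcal{H}}$ and $\hat{d}_{\mathcal{H}}$ combined with having to control \emph{both} samples.

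First I would unwind definitions: by definition $d_{\mathcal{H}}(\mathcal{D},\mathcal{D}')=2\sup_{h\in\mathcal{H}}|\Pr_{\mathcal{D}}[I(h)]-\Pr_{\mathcal{D}'}[I(h)]|$ and $\hat{d}_{\mathcal{H}}(\mathcal{X},\mathcal{X}')=2\sup_{h\in\mathcal{H}}|\hat{\Pr}_{\mathcal{X}}[I(h)]-\hat{\Pr}_{\mathcal{X}'}[I(h)]|$. So it suffices to control the unscaled suprema and multiply by $2$ at the end. Next I would invoke the Vapnik--Chervonenkis uniform-deviation inequality (via Sauer's lemma and symmetrization) on the class of indicator functions $\{I(h):h\in\mathcal{H}\}$, whose VC dimension equals $d_{vc}$: for an i.i.d.\ sample of size $N$ from any distribution $\mathcal{P}$, with probability at least $1-\delta/2$,
$$\sup_{h\in\mathcal{H}}\bigl|\Pr_{\mathcal{P}}[I(h)]-\hat{\Pr}[I(h)]\bigr|\;\leq\;\sqrt{\frac{d_{vc}\log(2N)+\log(2/\delta)}{N}}.$$
Apply this once to $\mathcal{X}\sim\mathcal{D}$ and once to $\mathcal{X}'\sim\mathcal{D}'$ and take a union bound, so that both events hold simultaneously with probability $\geq 1-\delta$.

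On that joint event, the triangle inequality applied pointwise in $h$ gives
$$\bigl|\Pr_{\mathcal{D}}[I(h)]-\Pr_{\mathcal{D}'}[I(h)]\bigr|\;\leq\;\bigl|\hat{\Pr}_{\mathcal{X}}[I(h)]-\hat{\Pr}_{\mathcal{X}'}[I(h)]\bigr|+2\sqrt{\frac{d_{vc}\log(2N)+\log(2/\delta)}{N}},$$
where the additive term is independent of $h$. Taking $\sup_{h\in\mathcal{H}}$ on both sides and then multiplying by $2$ yields
$$d_{\mathcal{H}}(\mathcal{D},\mathcal{D}')\;\leq\;\hat{d}_{\mathcal{H}}(\mathcal{X},\mathcal{X}')+4\sqrt{\frac{d_{vc}\log(2N)+\log(2/\delta)}{N}},$$
which is exactly the claimed inequality (up to absorbing $\log(2/\delta)$ into $\log(2/\delta)$). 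Finally, to turn $d_{vc}$ into an architectural quantity, I would cite the tight VC-dimension bound of Bartlett, Harvey, Liaw, and Mehrabian for piecewise-linear (in particular ReLU) networks, which gives $d_{vc}=\mathcal{O}(pL\log(pL))$ for an $L$-layer network with $\Theta(p)$ neurons per layer, and substitute.

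The main obstacle is bookkeeping rather than conceptual: the VC uniform-convergence bound comes in several constant-factor variants (Vapnik's double-sample symmetrization, Pollard, Talagrand), and one must pick a version whose constants match the displayed $4\sqrt{\cdot}$ and the logarithmic term $d_{vc}\log(2N)+\log(2/\delta)$. A secondary subtlety is that the $\mathcal{H}\Delta\mathcal{H}$ version used in Theorem~\ref{theorem: gap} has VC dimension at most $2d_{vc}$, which gets absorbed into the $\mathcal{O}(\cdot)$ for the neural-network bound; for the present lemma, stated only for $\mathcal{H}$, no such doubling is needed. The ReLU VC bound itself is a deep combinatorial result and is invoked strictly as a black box.
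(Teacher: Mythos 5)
Your proposal is correct in substance, but it does genuinely more work than the paper, which offers no self-contained argument at all: the paper's entire ``proof'' is a one-line remark that the lemma is Lemma~1 of \citep{ben2010theory} (itself resting on Theorem~3.4 of \citep{kifer2004detecting}) with the VC dimension specialized to $\mathcal{O}(pL\log(pL))$ for ReLU networks via \citep{bartlett2019nearly}. Your route --- two one-sample VC uniform-deviation bounds, a union bound, and a pointwise triangle inequality, followed by the same Bartlett--Harvey--Liaw--Mehrabian substitution --- is the standard textbook reconstruction and yields an inequality of exactly the claimed form. The one substantive difference is where the constants come from: the displayed factor $4$ and the terms $d_{vc}\log(2N)$ and $\log(2/\delta)$ are inherited verbatim from Kifer et al.'s \emph{direct two-sample} VC deviation bound, whereas gluing two one-sample bounds with off-the-shelf constants (Vapnik's $\sqrt{8(d\log(2eN/d)+\log(4/\delta))/N}$, say) would produce a strictly larger constant in front of the square root. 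You flag this honestly as bookkeeping, and it is --- but it does mean your argument, as written, proves the lemma only up to constants unless you commit to a sufficiently sharp one-sample inequality or switch to the two-sample bound the cited works actually use. Since the paper treats the inequality as a black box anyway, this is a cosmetic rather than a structural discrepancy; your version has the advantage of being self-contained and of making explicit why the factor of $4$ arises (the factor of $2$ in the definition of $d_{\mathcal{H}}$ times the two-sample union), which the paper leaves entirely implicit.
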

\begin{proof}
    We slightly modify the lemma in \citep{ben2010theory} and Theorem 3.4 in \citep{kifer2004detecting} with explicit definition of our neural network and specified VC dimension $\mathcal{O}(pL\log(pL))$.
\end{proof}

Following \citep{ben2010theory}, we have an additional lemma.
\begin{lemma}
For any hypotheses $h, h' \in \mathcal{H}$,
    \[
\left| \varepsilon(h, h') - \varepsilon(h, h') \right| \leq \frac{1}{2} d_{\mathcal{H}\Delta\mathcal{H}}(\mathcal{D}, \mathcal{D}').
\]
\end{lemma}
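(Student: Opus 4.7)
The statement as typeset clearly contains a typographical slip; read in context with the preceding $\mathcal{H}$-divergence lemma, the intended inequality is
$$\bigl|\varepsilon_{\mathcal{D}}(h,h') - \varepsilon_{\mathcal{D}'}(h,h')\bigr| \leq \tfrac{1}{2}\, d_{\mathcal{H}\Delta\mathcal{H}}(\mathcal{D},\mathcal{D}'),$$
where $\varepsilon_{\mathcal{D}}(h,h') := \Pr_{\boldsymbol{x}\sim\mathcal{D}}[I(g(h(\boldsymbol{x}))>\tau) \neq I(g(h'(\boldsymbol{x}))>\tau)]$ is the disagreement probability between the processed hypotheses on $\mathcal{D}$. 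My plan is to reduce this to a single application of the $\mathcal{H}\Delta\mathcal{H}$-divergence definition.

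The plan is as follows. First I would fix an arbitrary pair $h,h'\in\mathcal{H}$ and construct the associated disagreement predicate $f_{h,h'}(\boldsymbol{x}) := I(g(h(\boldsymbol{x}))>\tau)\,\mathrm{XOR}\,I(g(h'(\boldsymbol{x}))>\tau)$. By the definition of the symmetric difference hypothesis class given in Appendix~C\ref{app_H_div}, this predicate belongs to $\mathcal{H}\Delta\mathcal{H}$. Second, I would observe that $\varepsilon_{\mathcal{D}}(h,h')$ coincides with $\Pr_{\mathcal{D}}[I(f_{h,h'})]$, and analogously for $\mathcal{D}'$; this is immediate from the fact that the XOR indicator equals $1$ precisely on the disagreement region.

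Third, I would invoke the definition of the $\mathcal{H}$-divergence applied to the class $\mathcal{H}\Delta\mathcal{H}$:
$$d_{\mathcal{H}\Delta\mathcal{H}}(\mathcal{D},\mathcal{D}') \;=\; 2\sup_{f\in\mathcal{H}\Delta\mathcal{H}} \bigl|\Pr_{\mathcal{D}}[I(f)] - \Pr_{\mathcal{D}'}[I(f)]\bigr|.$$
Since the particular $f_{h,h'}$ constructed above lies in $\mathcal{H}\Delta\mathcal{H}$, it attains at most the supremum, giving
$$\bigl|\Pr_{\mathcal{D}}[I(f_{h,h'})] - \Pr_{\mathcal{D}'}[I(f_{h,h'})]\bigr| \leq \tfrac{1}{2}\, d_{\mathcal{H}\Delta\mathcal{H}}(\mathcal{D},\mathcal{D}').$$
Substituting the identification from the second step yields exactly the claimed bound.

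There is no real obstacle here; the entire argument is a direct unpacking of definitions. The only subtlety worth flagging is ensuring that the predicate formed from the thresholded, aggregated outputs $I(g(h(\cdot))>\tau)$ (rather than raw hypotheses) is genuinely a member of $\mathcal{H}\Delta\mathcal{H}$ as defined in Appendix~C\ref{app_H_div}, which is how that appendix has already set up the symmetric difference class. Given this, the chain $\varepsilon_{\mathcal{D}}(h,h') = \Pr_{\mathcal{D}}[I(f_{h,h'})] \Rightarrow$ supremum bound $\Rightarrow$ $\tfrac{1}{2} d_{\mathcal{H}\Delta\mathcal{H}}$ completes the proof, and the lemma will then feed directly into the derivation of Theorem~\ref{theorem: gap} by combining with the triangle-style decomposition of $\varepsilon_{\tilde{\mathcal{D}}_H}(h)$ in terms of $\varepsilon_{\hat{\mathcal{D}}}(h)$ and a joint ideal hypothesis $h^\ast$.
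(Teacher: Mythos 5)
Your proof is correct and follows essentially the same route as the paper's: both arguments identify $d_{\mathcal{H}\Delta\mathcal{H}}(\mathcal{D},\mathcal{D}')$ with $2\sup_{h,h'}\left|\Pr_{\mathcal{D}}[h\neq h'] - \Pr_{\mathcal{D}'}[h\neq h']\right|$ and then bound the particular pair by the supremum. You also correctly diagnose the typographical slip in the statement (the second term should be $\varepsilon_{\mathcal{D}'}(h,h')$, written $\varepsilon'(h,h')$ in the paper's own proof), and your explicit check that the thresholded XOR predicate lies in $\mathcal{H}\Delta\mathcal{H}$ is a small but welcome addition of rigor over the paper's one-line derivation.
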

\begin{proof}
    By the definition of $\mathcal{H}\Delta\mathcal{H}$-distance,

\begin{align*}
&d_{\mathcal{H}\Delta\mathcal{H}}(\mathcal{D}, \mathcal{D}') \\
&= 2 \sup_{h,h' \in \mathcal{H}} \left| \Pr_{x \sim \mathcal{D}} [h(x) \neq h'(x)] - \Pr_{x \sim \mathcal{D}'} [h(x) \neq h'(x)] \right| \\
&= 2 \sup_{h,h' \in \mathcal{H}} \left| \varepsilon(h, h') - \varepsilon'(h, h') \right| \geq 2\left| \varepsilon(h, h') - \varepsilon'(h, h') \right|.
\end{align*}
\end{proof}

Then, we have the proof.
\begin{proof}
Define 
$$\varepsilon_{\mathcal{D}}(h, h^*) := \mathbb{E}_{\boldsymbol{x}\sim\mathcal{D}}\left[\left|I\left(g\left(h_{\theta}(x)\right) >\tau\right) - I\left(g\left(h^*_{\theta}(x)\right) >\tau\right)\right|\right].$$
We have
\begin{align*}
    \varepsilon_{\tilde{\mathcal{D}}_{H}}(h) &\le \varepsilon_{\tilde{\mathcal{D}}_{H}}(h^*) + \varepsilon_{\tilde{\mathcal{D}}_{H}}(h, h^*) \\
    &\le \varepsilon_{\tilde{\mathcal{D}}_{H}}(h^*) + \varepsilon_{\hat{\mathcal{D}}}(h, h^*) + |\varepsilon_{\tilde{\mathcal{D}}_{H}}(h, h^*) - \varepsilon_{\hat{\mathcal{D}}}(h, h^*)| \\
    &\le \varepsilon_{\tilde{\mathcal{D}}_{H}}(h^*) + \varepsilon_{\hat{\mathcal{D}}}(h, h^*) + \frac{1}{2}d_{\mathcal{H}\Delta\mathcal{H}}(\hat{\mathcal{D}}, \tilde{\mathcal{D}}_{H})\\
    &\le \varepsilon_{\tilde{\mathcal{D}}_{H}}(h^*) + \varepsilon_{\hat{\mathcal{D}}}(h) + \varepsilon_{\hat{\mathcal{D}}}(h^*) + \frac{1}{2}d_{\mathcal{H}\Delta\mathcal{H}}(\hat{\mathcal{D}}, \tilde{\mathcal{D}}_{H})\\
    &= \varepsilon_{\hat{\mathcal{D}}}(h) + \frac{1}{2}d_{\mathcal{H}\Delta\mathcal{H}}(\hat{\mathcal{D}}, \tilde{\mathcal{D}}_{H}) + \lambda \\
    &\le \varepsilon_{\hat{\mathcal{D}}}(h) + \frac{1}{2} \hat{d}_{\mathcal{H}\Delta\mathcal{H}}(\tilde{\mathcal{X}}_{H}, \hat{\mathcal{X}}) + 4 \sqrt{\frac{2d_{vc} \log(2N) + \log(\frac{2}{\delta})}{N}} + \lambda.
\end{align*}
The VD dimension of $\mathcal{H}\Delta\mathcal{H}$ is at most twice the VC dimension of $\mathcal{H}$ \citep{anthony1999neural}.
\end{proof}

\subsection{Proof of Theorem \ref{theorem_dmin}}\label{app: proof_dmin}

\begin{proof}
Assumption \ref{ass: gap} indicates that
$$c\|{\boldsymbol{x}} - \tilde{\boldsymbol{x}} \| \le | g(h_{\boldsymbol{\theta}}({\boldsymbol{x}}))- g(h_{\boldsymbol{\theta}}(\tilde{\boldsymbol{x}}))|.$$
According to the definition of $d_{\min}$, we have
\begin{equation}\label{eq_ghcxx_0}
g(h_{\boldsymbol{\theta}}(\tilde{\boldsymbol{x}}))\geq cd_{\min}+g(h_{\boldsymbol{\theta}}({\boldsymbol{x}}))
\end{equation}
or 
\begin{equation}\label{eq_ghcxx}
g(h_{\boldsymbol{\theta}}(\tilde{\boldsymbol{x}}))\leq g(h_{\boldsymbol{\theta}}({\boldsymbol{x}}))-cd_{\min}.
\end{equation}
Since $\epsilon<cd_{\min}$, we have $g(h_{\boldsymbol{\theta}}(\tilde{\boldsymbol{x}}))<0$, which contradicts the fact that $g(h_{\boldsymbol{\theta}}(\tilde{\boldsymbol{x}}))$ is always positive. Therefore, \eqref{eq_ghcxx} will not happen and we will only have \eqref{eq_ghcxx_0}. It follows that
\begin{equation}\label{eq_ghcxx_1}
g(h_{\boldsymbol{\theta}}(\tilde{\boldsymbol{x}}))\geq cd_{\min}.
\end{equation}
This means if $cd_{\min}>\tau$, $\tilde{\boldsymbol{x}}$ can be detected successfully.
Putting the conditions $\epsilon<cd_{\min}$ and $cd_{\min}>\tau$, $\tilde{\boldsymbol{x}}$ together, we complete the proof.
\end{proof}

\begin{table}[H]
\centering
\begin{tabular}{l|cccc} 
\toprule
\textbf{Dataset} & \textbf{\# Sample} & \textbf{Dims.} & \textbf{\# Class} & \textbf{\% Anomaly} \\ 
\midrule
abalone & 4177 & 8 & 28 & 94.19 \\ 
arrhythmia & 452 & 280 & 2 & 45.80 \\ 
breastw & 699 & 10 & 2 & 34.48 \\ 
cardio & 2126 & 22 & 2 & 22.15 \\ 
ecoli & 336 & 8 & 8 & 77.16 \\ 
glass & 214 & 10 & 6 & 72.74 \\ 
ionosphere & 351 & 35 & 2 & 35.90 \\ 
kdd & 5209460 & 39 & 2 & 80.52 \\ 
letter & 20000 & 17 & 26 & 96.15 \\ 
lympho & 142 & 19 & 2 & 42.96 \\ 
mammography & 11183 & 7 & 2 & 2.32 \\ 
mulcross & 262144 & 5 & 2 & 10.00 \\ 
musk & 7074 & 167 & 2 & 17.30 \\
optdigits & 5620 & 65 & 10 & 90.00 \\ 
pendigits & 10992 & 17 & 10 & 90.00 \\ 
pima & 768 & 9 & 2 & 34.90 \\ 
satimage & 6435 & 37 & 6 & 83.33 \\ 
seismic & 2584 & 15 & 2 & 6.58 \\ 
shuttle & 58000 & 10 & 7 & 75.03 \\ 
speech & 3686 & 401 & 2 & 1.65 \\ 
thyroid & 7200 & 7 & 2 & 7.29 \\ 
vertebral & 310 & 7 & 2 & 67.74 \\ 
vowels & 1456 & 13 & 2 & 3.43 \\ 
wbc & 569 & 31 & 2 & 37.26 \\ 
wine & 178 & 14 & 3 & 66.67 \\
\bottomrule
\end{tabular}
\caption{25 real-world tabular datasets tested in this paper under OCC setting. For datasets with more than 2 labeled classes, an average anomaly ratio is recorded. Kdd refers to KDD-CUP99 \citep{kdd}.}
\label{tab: dataset}
\end{table}

\section{Appendix D: Dataset Summary} \label{app: dataset}
We conducted our experiment under the UAD dataset setting with 47 benchmark datasets commonly used in UAD research. The dataset information is shown in Table \ref{tab: adbench dataset}. As for the OCC dataset setting. The detailed dataset information including the number of samples, dimensionality, the number of classes, and the percent of anomaly is summarized in Table \ref{tab: dataset}. Note that there are datasets with the same name. However, due to the different dataset processing and training data splitting. They are completely different under OCC setting from the UAD setting.

\begin{table}[H]
\centering
\begin{tabular}{l|ccc}
\toprule
\textbf{Dataset} & \textbf{\# Sample} & \textbf{Dim.} & \textbf{\% Anomaly} \\
\midrule
ALOI & 49534 & 27 & 3.04 \\
anthyroid & 7200 & 6 & 7.42 \\
backdoor & 95329 & 196 & 2.44 \\
breast & 683 & 9 & 34.99 \\
campaign & 41188 & 62 & 11.27 \\
cardio & 1831 & 21 & 9.61 \\
Cardiotocography & 2114 & 21 & 22.04 \\
celeba & 202599 & 39 & 2.24 \\
census & 299285 & 500 & 0.60 \\
cover & 286048 & 10 & 6.20 \\
donors & 619326 & 10 & 5.93 \\
fault & 1941 & 27 & 34.67 \\
fraud & 284807 & 29 & 0.17 \\
glass & 214 & 9 & 42.21 \\
Hepatitis & 80 & 17 & 16.25 \\
http & 567498 & 3 & 0.39 \\
InternetAds & 1966 & 1555 & 18.72 \\
Ionosphere & 351 & 32 & 35.90 \\
landsat & 6435 & 36 & 20.71 \\
letter & 1600 & 32 & 15.10 \\
Lymphography & 148 & 18 & 4.05 \\
magic & 19020 & 10 & 35.16 \\
mammography & 11183 & 6 & 2.32 \\
mnist & 7603 & 100 & 9.21 \\
musk & 3062 & 166 & 3.17 \\
optdigits & 5216 & 64 & 2.88 \\
PageBlocks & 5393 & 10 & 9.46 \\
pendigits & 6870 & 16 & 2.27 \\
Pima & 768 & 8 & 34.90 \\
satellite & 6435 & 36 & 31.64 \\
satimage-2 & 5803 & 36 & 1.22 \\
shuttle & 49097 & 9 & 7.15 \\
skin & 245057 & 3 & 20.75 \\
smtp & 95156 & 3 & 0.03 \\
SpamBase & 4207 & 57 & 39.91 \\
speech & 3686 & 400 & 1.65 \\
Stamps & 340 & 9 & 9.12 \\
thyroid & 3772 & 6 & 2.47 \\
vertebral & 240 & 6 & 12.50 \\
vowels & 1456 & 12 & 3.43 \\
Waveform & 3443 & 21 & 2.90 \\
WBC & 223 & 9 & 4.48 \\
WDBC & 367 & 30 & 2.72 \\
Wilt & 4819 & 5 & 5.33 \\
wine & 129 & 13 & 7.75 \\
WPBC & 198 & 33 & 23.74 \\
yeast & 1484 & 8 & 34.16 \\
\bottomrule
\end{tabular}
\caption{47 real-world tabular datasets tested under UAD setting.}
\label{tab: adbench dataset}
\end{table}

\section{Appendix E: Neural Network Architecture} \label{app: network}
\subsection{VanillaMLP}
The VanillaMLP is a plain Relu-activated feed-forward neural network with 4 fully connected layers. The architecture is shown in Figure \ref{fig: mlp arch}. The input dimension is $d$. If $d\le64$, the size of the hidden layer is set to $64$. If $d>64$, the size of the hidden layer is set to $256$. In the experiments, datasets, arrhythmia, musk, and speech, are with $d>64$.

\begin{figure}[h]
\begin{subfigure}{.37\linewidth}
  \centering
  \includegraphics[width=\linewidth]{ 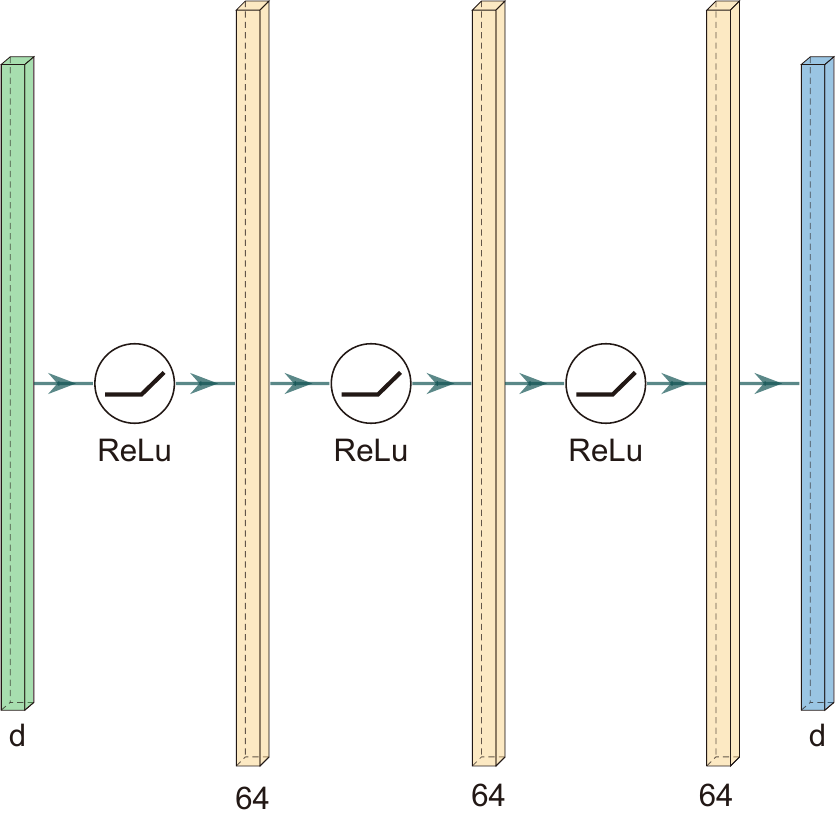}  
  \caption{MLP Architecture}\label{fig: mlp arch}
\end{subfigure}
\quad\quad\quad
\begin{subfigure}{.47\linewidth}
  \centering
  \includegraphics[width=\linewidth]{ 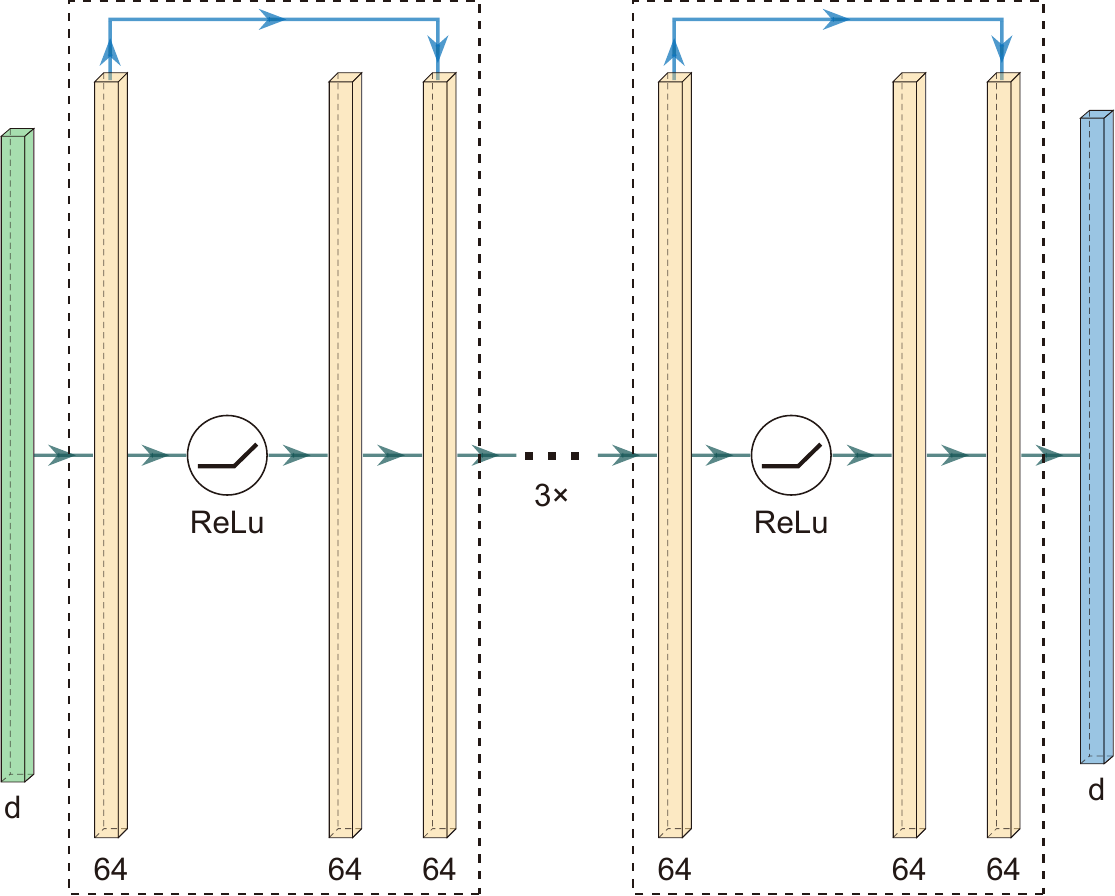}  
  \caption{ResMLP Architecture}\label{fig: resmlp arch}
\end{subfigure}
\caption{Deep Neural Network Architectures.}
\label{fig: arch}
\end{figure}

\subsection{ResMLP}
The ResMLP is a Relu-activated feed-forward neural network with 5 residual blocks. The architecture is shown in Figure \ref{fig: resmlp arch}. The input dimension is $d$. If $d\le64$, the size of the hidden layer is set to $64$. If $d>64$, the size of the hidden layer is set to $256$. In the experiments, datasets, arrhythmia, musk, and speech, are with $d>64$.


\begin{table*}[ht]
\centering
\setlength\tabcolsep{1.5pt}
\resizebox{\textwidth}{!}{
\begin{tabular}{l|ccccccccccc|cc}
\toprule\hline
         \textbf{Methods} &   \textbf{DPAD} &  \textbf{PLAD}&  \textbf{NeuTraLAD}    &  \textbf{SCAD} &           \textbf{AE} &        \textbf{COPOD} &     \textbf{DeepSVDD} &         \textbf{ECOD} &      \textbf{IForest} &          \textbf{KNN} &         \textbf{LOF} &       \makecell[c]{\textbf{Ours-} \\\textbf{ResMLP}} & \makecell[c]{\textbf{Ours-} \\\textbf{MLP}} \\
\midrule
            ALOI &  64.48$\pm$0.4 & 53.16$\pm$1.4  &    53.54$\pm$0.9 & 53.97$\pm$0.5 &   54.96$\pm$0.1 &   51.51$\pm$0.1 &   51.44$\pm$1.3 &   53.07$\pm$0.1 &   53.78$\pm$0.4 &   \underline{66.72}$\pm$0.2 &  \textbf{74.69}$\pm$0.2 &   60.59$\pm$0.7 &   59.95$\pm$0.7 \\
      annthyroid &  90.91$\pm$2.0 & 95.09$\pm$2.8    &  81.18$\pm$2.6 & 86.52$\pm$2.2 &   84.66$\pm$1.5 &   77.59$\pm$0.2 &   56.97$\pm$2.6 &   79.11$\pm$0.2 &   91.39$\pm$1.4 &    93.6$\pm$0.4 &  \underline{94.78}$\pm$0.3 &   \textbf{97.02}$\pm$0.5 &   \textbf{96.78}$\pm$0.3 \\
        backdoor & 95.43$\pm$0.7 & 91.18$\pm$13.3   &   94.20$\pm$0.7 & 93.37$\pm$0.5 &   90.92$\pm$0.0 &   78.93$\pm$0.1 &   93.04$\pm$1.9 &   84.54$\pm$0.1 &   76.77$\pm$1.9 &   95.65$\pm$0.0 &  \underline{96.45}$\pm$0.1 &   96.18$\pm$0.9 &   \textbf{97.12}$\pm$0.4 \\
         breastw &  99.11$\pm$0.7  &74.02$\pm$2.7   &   78.93$\pm$3.9 &98.04$\pm$0.8 &   98.91$\pm$0.4 &   \textbf{99.56}$\pm$0.1 &   96.11$\pm$0.7 &   99.15$\pm$0.2 &   \underline{99.47}$\pm$0.1 &   98.96$\pm$0.3 &  96.05$\pm$1.1 &   98.99$\pm$0.4 &   99.14$\pm$0.5 \\
        campaign &  75.07$\pm$1.7&  72.31$\pm$3.3  &    70.88$\pm$4.2 &\textbf{79.31}$\pm$0.7 &   76.94$\pm$0.1 &   78.32$\pm$0.1 &  56.27$\pm$11.7 &   76.94$\pm$0.2 &   73.54$\pm$0.8 &    \underline{78.4}$\pm$0.1 &  70.37$\pm$0.8 &   69.33$\pm$1.4 &    74.6$\pm$1.8 \\
          cardio & 91.80$\pm$3.1 & 61.16$\pm$7.2   &   72.38$\pm$2.9 & 89.57$\pm$1.3 &   \underline{96.25}$\pm$0.4 &   92.39$\pm$0.4 &   59.95$\pm$5.2 &   93.44$\pm$0.2 &    94.6$\pm$1.0 &   92.53$\pm$0.9 &  92.49$\pm$1.3 &   \textbf{97.06}$\pm$0.9 &   \textbf{96.66}$\pm$0.7 \\
Cardio. &  76.92$\pm$2.2 & 65.39$\pm$6.9   &   54.98$\pm$4.6 & 69.54$\pm$2.2 &   \underline{83.35}$\pm$1.0 &   66.37$\pm$0.9 &  49.21$\pm$11.4 &   78.49$\pm$0.7 &   80.16$\pm$1.2 &    72.3$\pm$1.3 &  76.96$\pm$1.2 &   \textbf{85.04}$\pm$3.2 &   \underline{84.81}$\pm$3.1 \\
          celeba & 63.50$\pm$2.7 & \textbf{83.47}$\pm$2.5&      58.15$\pm$9.2 & 72.91$\pm$1.9 &    \underline{79.8}$\pm$0.1 &    {75.1}$\pm$0.0 &  54.24$\pm$22.4 &    75.7$\pm$0.1 &   70.97$\pm$1.4 &   68.26$\pm$0.2 &  45.77$\pm$0.3 &   59.21$\pm$2.8 &   67.25$\pm$2.7 \\
          census &  50.03$\pm$0.0 & 55.20$\pm$4.1  &    53.46$\pm$5.6 & 70.54$\pm$0.5 &   \underline{70.74}$\pm$0.1 &    67.4$\pm$0.1 &   52.08$\pm$7.8 &   65.92$\pm$0.1 &   63.43$\pm$2.2 &   \textbf{72.06}$\pm$0.1 &  60.49$\pm$0.2 &   65.63$\pm$2.7 &   68.45$\pm$2.7 \\
           cover &  60.42$\pm$10.8 &  87.66$\pm$0.5    &  71.61$\pm$10.4 & 96.68$\pm$1.2 &   95.53$\pm$0.9 &   88.41$\pm$0.0 &   36.57$\pm$5.8 &   92.01$\pm$0.1 &   85.07$\pm$3.1 &    98.6$\pm$0.1 &  \underline{98.92}$\pm$0.2 &   \textbf{99.69}$\pm$0.1 &   \textbf{99.64}$\pm$0.1 \\
          donors &   95.07$\pm$6.6 & 93.46$\pm$2.9   &   \textbf{99.95}$\pm$0.0 & \textbf{99.95}$\pm$0.0 &   85.79$\pm$3.0 &   81.49$\pm$0.0 &  63.88$\pm$25.7 &   88.85$\pm$0.0 &   89.62$\pm$2.7 &    \underline{99.9}$\pm$0.0 &  98.42$\pm$0.0 &   99.78$\pm$0.0 &    99.6$\pm$0.2 \\
           fault & 73.06$\pm$1.0 & 61.51$\pm$4.8 &     69.39$\pm$1.3 & \underline{79.27}$\pm$1.0 &   55.26$\pm$1.2 &   45.13$\pm$0.8 &    50.6$\pm$6.2 &   46.79$\pm$0.6 &   65.84$\pm$1.6 &   \textbf{79.83}$\pm$1.1 &  65.91$\pm$1.7 &   72.27$\pm$1.3 &   73.49$\pm$1.2 \\
           fraud &  89.73$\pm$5.6  & 96.29$\pm$0.7  &    91.59$\pm$3.5 & 94.71$\pm$0.4 &   \underline{95.38}$\pm$0.0 &   94.74$\pm$0.0 &   85.15$\pm$6.7 &   94.97$\pm$0.0 &   94.88$\pm$0.4 &   \textbf{96.17}$\pm$0.1 &  77.85$\pm$3.4 &   94.44$\pm$0.3 &   95.11$\pm$0.3 \\
           glass &  86.34 $\pm$2.6 & 85.76$\pm$8.1 &     89.17$\pm$5.0 & \underline{92.23}$\pm$2.5 &   74.21$\pm$1.7 &   74.18$\pm$1.0 &   87.86$\pm$3.3 &   71.87$\pm$4.0 &   79.79$\pm$3.9 &   86.27$\pm$3.3 &  75.53$\pm$4.7 &   88.48$\pm$3.4 &   \textbf{94.07}$\pm$1.9 \\
       Hepatitis &  77.67$\pm$8.9 & 73.08$\pm$3.9   &   51.70$\pm$11.2  & 69.7$\pm$6.2 &   79.63$\pm$4.6 &   80.89$\pm$2.4 &   68.44$\pm$6.0 &   74.22$\pm$2.6 &   79.02$\pm$5.8 &   \underline{82.44}$\pm$3.1 &  80.93$\pm$3.2 &   81.73$\pm$3.0 &   \textbf{87.75}$\pm$2.3 \\
            http &   \underline{99.98}$\pm$0.0  & \textbf{99.99}$\pm$0.2  &    79.82$\pm$32.1 & 99.86$\pm$0.3 &   {99.94}$\pm$0.0 &   99.15$\pm$0.0 &  21.43$\pm$43.1 &   97.86$\pm$0.0 &   99.14$\pm$0.6 &   99.96$\pm$0.0 &   93.8$\pm$0.4 &   \textbf{99.99}$\pm$0.0 &   \textbf{99.99}$\pm$0.0 \\
     InternetAds & \underline{89.99}$\pm$0.9 &  87.44$\pm$1.5 &     88.94$\pm$1.0 & 88.76$\pm$1.4 &   87.84$\pm$0.3 &   67.92$\pm$0.7 &   89.91$\pm$0.4 &   67.63$\pm$0.5 &   44.39$\pm$2.1 &   88.41$\pm$0.7 &  {88.79}$\pm$0.6 &   \textbf{92.49}$\pm$0.7 &   \textbf{92.36}$\pm$0.7 \\
      Ionosphere &94.55$\pm$1.6 &  52.05$\pm$9.3 &    85.68$\pm$2.5 &  96.62$\pm$0.6 &   90.99$\pm$1.1 &    78.6$\pm$1.8 &   97.21$\pm$0.5 &   72.61$\pm$1.4 &   88.49$\pm$2.3 &   \underline{97.42}$\pm$0.7 &  94.86$\pm$1.7 &   \textbf{97.51}$\pm$0.5 &   \textbf{97.56}$\pm$0.4 \\
         landsat &  \underline{74.67}$\pm$2.2 &  68.35$\pm$5.6   &   72.21$\pm$7.2 & 73.75$\pm$0.7 &   41.89$\pm$0.7 &   42.19$\pm$0.4 &   65.49$\pm$2.3 &   36.86$\pm$0.6 &   62.43$\pm$2.8 &   \textbf{76.56}$\pm$0.2 &  {74.33}$\pm$1.1 &   58.79$\pm$3.3 &   62.16$\pm$6.4 \\
          letter &  72.34$\pm$2.2 & 70.19$\pm$5.5  &    \underline{91.73}$\pm$1.3 & \textbf{94.36}$\pm$0.8 &   52.36$\pm$0.4 &   55.53$\pm$0.9 &   62.58$\pm$2.8 &   57.18$\pm$0.7 &   62.87$\pm$2.4 &   88.12$\pm$0.9 &  86.85$\pm$1.2 &   {90.87}$\pm$0.8 &   90.71$\pm$1.2 \\
    Lympho. & 98.71$\pm$0.2 & 71.43$\pm$38.5 &     44.48$\pm$9.6 & 99.37$\pm$0.9 &   98.36$\pm$2.4 &   \textbf{99.73}$\pm$0.4 &   97.95$\pm$1.5 &   \underline{99.62}$\pm$0.4 &    99.2$\pm$0.9 &   98.41$\pm$2.5 &  99.19$\pm$0.7 &   98.95$\pm$1.9 &   98.64$\pm$1.9 \\
     magic. &  77.94$\pm$4.6  & 78.46$\pm$2.7  &   71.09$\pm$2.0 & 80.05$\pm$0.3 &   70.18$\pm$0.9 &   68.24$\pm$0.3 &   63.49$\pm$0.6 &   63.55$\pm$0.2 &   77.16$\pm$1.1 &   \underline{83.31}$\pm$0.3 &  83.29$\pm$0.2 &   \textbf{88.05}$\pm$0.4 &   \textbf{88.59}$\pm$0.4 \\
     mammo. &  87.06$\pm$2.1 &  80.92$\pm$2.4  &    67.43$\pm$1.9 & 78.79$\pm$1.6 &   85.45$\pm$1.0 &   \underline{90.61}$\pm$0.1 &  64.47$\pm$11.1 &   \textbf{90.68}$\pm$0.1 &   87.99$\pm$0.7 &   87.66$\pm$0.2 &   82.9$\pm$1.0 &   89.94$\pm$0.4 &   90.25$\pm$0.8 \\
           mnist &  85.55$\pm$4.6 &  81.5$\pm$3.5  &    84.10$\pm$2.0 & 90.89$\pm$0.8 &   90.69$\pm$0.2 &   77.46$\pm$0.4 &   57.21$\pm$7.3 &   74.52$\pm$0.4 &   86.48$\pm$1.9 &   \textbf{94.08}$\pm$0.1 &  \underline{92.83}$\pm$0.3 &   91.74$\pm$0.7 &   91.36$\pm$0.8 \\
            musk &  99.99$\pm$0.0 & 91.51$\pm$7.8  &   100.00$\pm$0.0 &100.0$\pm$0.0 &   100.0$\pm$0.0 &   94.56$\pm$0.2 &   100.0$\pm$0.0 &   95.74$\pm$0.3 &    95.6$\pm$3.0 &   100.0$\pm$0.0 &  100.0$\pm$0.0 &   100.0$\pm$0.0 &   100.0$\pm$0.0 \\
       optdigits &   76.52$\pm$6.0  & 93.61$\pm$1.8   &   95.34$\pm$3.0 &\underline{96.49}$\pm$0.9 &   56.76$\pm$0.5 &   68.09$\pm$0.5 &  50.31$\pm$15.1 &   60.29$\pm$0.4 &   82.09$\pm$4.1 &   94.69$\pm$0.8 &  \underline{97.33}$\pm$0.4 &   92.44$\pm$3.0 &   89.43$\pm$2.9 \\
      PageBlocks &  96.12$\pm$0.4 & 85.77$\pm$3.0  &    90.79$\pm$0.6 & 94.52$\pm$0.7 &   94.75$\pm$0.2 &   87.45$\pm$0.3 &   75.82$\pm$9.9 &   91.32$\pm$0.2 &   92.35$\pm$0.5 &    96.3$\pm$0.2 &  \underline{96.88}$\pm$0.2 &   \textbf{97.78}$\pm$0.2 &   \textbf{97.76}$\pm$0.4 \\
       pendigits & 92.57$\pm$4.3 &  77.81$\pm$16.3 &     92.44$\pm$5.9 & 99.25$\pm$0.4 &   94.33$\pm$0.2 &   90.67$\pm$0.2 &  45.35$\pm$19.3 &   92.73$\pm$0.2 &   96.91$\pm$0.7 &   \textbf{99.89}$\pm$0.0 &  99.14$\pm$0.3 &   \underline{99.64}$\pm$0.1 &   99.48$\pm$0.2 \\
            Pima & 69.02$\pm$2.3  &63.54$\pm$2.7&      49.61$\pm$3.8& 67.61$\pm$1.3 &   69.93$\pm$0.7 &   64.69$\pm$1.4 &   55.99$\pm$1.8 &   59.29$\pm$0.6 &   73.53$\pm$1.3 &    \textbf{74.1}$\pm$1.0 &  70.79$\pm$1.7 &    \underline{71.2}$\pm$1.7 &    73.9$\pm$1.8 \\
       satellite & 85.43$\pm$1.1  &50.90$\pm$4.3   &   76.87$\pm$2.2 & \textbf{87.74}$\pm$0.5 &   67.21$\pm$0.6 &   63.53$\pm$0.4 &    73.5$\pm$2.2 &   58.36$\pm$0.3 &    80.6$\pm$1.0 &   \underline{87.59}$\pm$0.2 &   84.6$\pm$0.4 &   81.22$\pm$0.8 &   82.06$\pm$0.7 \\
        satimage & 99.81$\pm$1.0 & 71.72$\pm$16.7 &91.76$\pm$5.3 &  99.81$\pm$0.0 &   98.04$\pm$0.2 &   97.41$\pm$0.0 &   95.96$\pm$0.7 &   96.45$\pm$0.1 &   99.35$\pm$0.2 &   \textbf{99.88}$\pm$0.0 &   99.6$\pm$0.1 &   \underline{99.87}$\pm$0.0 &   99.86$\pm$0.0 \\
         shuttle &  99.96$\pm$0.0 & 99.89$\pm$0.1  &    99.92$\pm$0.1& \textbf{99.99}$\pm$0.0 &   99.43$\pm$0.1 &   99.45$\pm$0.0 &   99.66$\pm$0.0 &    99.3$\pm$0.0 &   99.58$\pm$0.1 &   99.89$\pm$0.0 &  \underline{99.97}$\pm$0.0 &   \textbf{99.99}$\pm$0.0 &   \textbf{99.98}$\pm$0.0 \\
            skin & \underline{99.48}$\pm$0.2 & 98.43$\pm$0.8   &   89.65$\pm$2.6 &70.11$\pm$17.0 &   66.47$\pm$2.6 &   47.12$\pm$0.1 &   62.49$\pm$4.6 &   48.91$\pm$0.1 &   89.09$\pm$0.8 &   \textbf{99.81}$\pm$0.0 &  91.83$\pm$0.7 &   97.74$\pm$0.2 &   99.22$\pm$0.0 \\
            smtp &  \underline{96.00}$\pm$2.0 & 87.24$\pm$8.4  &    94.09$\pm$2.6 &\textbf{96.81}$\pm$1.8 &   83.26$\pm$0.8 &    91.2$\pm$0.0 &   89.52$\pm$5.2 &   88.01$\pm$0.1 &    90.7$\pm$0.5 &    93.1$\pm$0.2 &  93.05$\pm$0.8 &   92.37$\pm$1.4 &   93.96$\pm$2.0 \\
        SpamBase & 81.51$\pm$0.9 & 77.41$\pm$3.3   &   78.76$\pm$1.7 & \textbf{86.78}$\pm$1.2 &   79.92$\pm$1.0 &   68.73$\pm$0.5 &   74.43$\pm$4.1 &   65.56$\pm$0.4 &   83.32$\pm$1.7 &    83.3$\pm$0.7 &  80.61$\pm$1.0 &   84.44$\pm$0.8 &   \underline{84.84}$\pm$1.2 \\
          speech & \underline{57.12}$\pm$2.3 & 58.68$\pm$2.2    &  55.43$\pm$5.1  &56.55$\pm$2.0 &   47.29$\pm$0.4 &   49.18$\pm$0.3 &   50.83$\pm$1.4 &   47.29$\pm$0.3 &   48.18$\pm$1.5 &   48.62$\pm$0.9 &  49.64$\pm$0.6 &   55.98$\pm$1.9 &   \textbf{58.13}$\pm$2.2 \\
          Stamps & \underline{95.06}$\pm$1.2  &54.08$\pm$11.2 &     67.03$\pm$11.1 & 88.76$\pm$3.2 &   94.03$\pm$1.5 &   93.18$\pm$0.7 &   79.19$\pm$5.1 &   88.11$\pm$2.1 &   92.93$\pm$1.7 &   93.83$\pm$1.5 &  93.12$\pm$1.3 &   {94.49}$\pm$1.2 &   \textbf{95.25}$\pm$1.2 \\
         thyroid &  97.95$\pm$1.5  &97.39$\pm$2.9   &   74.34$\pm$7.5 &94.95$\pm$1.6 &   \textbf{98.71}$\pm$0.1 &   93.93$\pm$0.2 &  80.64$\pm$10.1 &   97.73$\pm$0.2 &   98.97$\pm$0.2 &   \underline{98.61}$\pm$0.1 &  96.31$\pm$1.1 &   97.74$\pm$0.5 &   98.49$\pm$0.4 \\
       vertebral & 55.21$\pm$5.6 & \underline{58.72}$\pm$13.8  &    54.54$\pm$8.6 & {51.34}$\pm$3.4 &   51.15$\pm$4.1 &   33.32$\pm$2.3 &   34.57$\pm$4.0 &   39.88$\pm$1.4 &   42.07$\pm$3.0 &   43.17$\pm$3.4 &  39.77$\pm$2.3 &   \textbf{71.24}$\pm$5.3 &    \textbf{69.8}$\pm$3.9 \\
          vowels &  85.20$\pm$4.5 & 75.42$\pm$7.7  &    96.97$\pm$1.3& \textbf{99.53}$\pm$0.2 &   62.91$\pm$1.0 &   49.88$\pm$0.9 &    74.1$\pm$5.1 &   58.97$\pm$0.7 &   79.15$\pm$2.2 &   96.93$\pm$0.2 &  95.51$\pm$0.9 &    99.0$\pm$0.2 &   \underline{99.03}$\pm$0.4 \\
        Waveform & 68.80$\pm$5.4 & 83.53$\pm$1.2   &   72.88$\pm$1.2 & 68.54$\pm$3.3 &   65.22$\pm$0.5 &   73.42$\pm$0.7 &   60.59$\pm$5.5 &   60.04$\pm$0.3 &    73.0$\pm$2.0 &   \underline{75.54}$\pm$0.7 &  \textbf{75.34}$\pm$0.8 &   75.29$\pm$1.6 &   72.38$\pm$3.0 \\
             WBC & 98.66$\pm$0.3  &48.87$\pm$8.2  &    63.30$\pm$8.5 & 95.61$\pm$0.3 &   99.04$\pm$0.6 &   \textbf{99.58}$\pm$0.3 &   93.32$\pm$1.9 &   99.53$\pm$0.3 &   \underline{99.53}$\pm$0.2 &   98.98$\pm$0.5 &   97.3$\pm$1.5 &   98.41$\pm$0.8 &   99.17$\pm$0.7 \\
            WDBC &  97.67$\pm$2.4 & 36.12$\pm$18.6   &   44.22$\pm$11.9& 98.46$\pm$1.5 &    99.0$\pm$0.6 &    \underline{99.3}$\pm$0.3 &   98.54$\pm$0.6 &   96.76$\pm$0.8 &   99.11$\pm$0.4 &   99.04$\pm$0.3 &  99.12$\pm$0.5 &   \textbf{99.97}$\pm$0.1 &   \textbf{99.76}$\pm$0.4 \\
            Wilt & 62.79$\pm$3.2 & \textbf{99.03}$\pm$3.1   &   65.44$\pm$2.4 & {74.16}$\pm$4.6 &    37.9$\pm$2.0 &   34.43$\pm$0.4 &    45.2$\pm$2.0 &   39.31$\pm$0.4 &   48.08$\pm$2.2 &   64.24$\pm$1.8 &  69.54$\pm$3.0 &   \underline{92.75}$\pm$1.2 &    \underline{94.3}$\pm$0.9 \\
            wine &  96.07$\pm$4.5 & 72.54$\pm$2.9  &    64.58$\pm$10.8 & 95.72$\pm$3.0 &   90.39$\pm$1.5 &   87.16$\pm$1.8 &   97.08$\pm$3.3 &   73.83$\pm$1.9 &   89.87$\pm$4.0 &   96.75$\pm$2.4 &  \underline{96.75}$\pm$1.4 &   \textbf{98.62}$\pm$2.1 &   \textbf{99.42}$\pm$0.9 \\
            WPBC &  \underline{56.61}$\pm$2.0 & 62.95$\pm$6.8   &   53.51$\pm$3.5 &46.67$\pm$3.3 &   49.65$\pm$3.2 &   50.32$\pm$2.8 &    49.9$\pm$2.3 &   48.21$\pm$1.4 &   50.83$\pm$3.1 &   52.99$\pm$2.8 &  {52.54}$\pm$2.8 &   \textbf{59.79}$\pm$1.8 &   \textbf{61.25}$\pm$2.4 \\
           yeast &  51.35$\pm$3.1 & \underline{59.59}$\pm$3.2  &    52.10$\pm$3.4 & 50.95$\pm$2.2 &   42.18$\pm$0.8 &   38.28$\pm$0.5 &   45.06$\pm$3.1 &   44.15$\pm$1.5 &   40.66$\pm$1.0 &   45.06$\pm$1.3 &  {46.92}$\pm$1.4 &   \textbf{58.54}$\pm$1.8 &   \textbf{57.53}$\pm$2.0 \\ \hline
       mean rank &  5.55  & 7.46& 7.55&        5.00 &  7.00 &  8.21 &  9.43&  8.55 &  6.68 &            3.76 & 5.234 &     \textbf{3.31} &     \textbf{2.80} \\
        mean &82.75$\pm$15.6 &75.40$\pm$18.3 &74.47$\pm$17.9 & 84.44$\pm$18.0 & 78.46$\pm$18.7 & 74.60$\pm$19.6 & 68.37$\pm$22.7 & 74.15$\pm$19.3 & 79.82$\pm$17.5 & 85.91$\pm$15.6 & 83.58$\pm$16.5 & \textbf{87.07$\pm$14.3} & \textbf{87.89$\pm$13.69} \\ \hline
        p-value &0.0020   &0.0000& 0.0000&  0.0325 &  0.0001 &  0.0000 &  0.0000 & 0.0000 &  0.0001 &    0.3128 & 0.0030 &     - &     0.0088 \\
        p-value & 0.0002  & 0.0000& 0.0000&  0.0039 &  0.0000 &  0.0000 &  0.0000 & 0.0000 &  0.0000 &    0.0716 & 0.0006 &     0.0088 &    -\\\hline
\bottomrule
\end{tabular}}
\caption{AUC (\%) score of the proposed method compared with 8 baselines on 47 benchmark datasets.
Each experiment is repeated 10 times with seed from 0 to 9, and mean value and standard deviation are reported. Mean is the
average AUC score under all experiments. Mean rank is calculated out of 10 tested methods. Cardio. refers to Cardiotocography. Mammo. refers to mammography. Lympho. refers to Lymphography.}
\label{tab: adbench auc}
\end{table*}

\begin{table*}[ht]
\centering
\setlength\tabcolsep{1.5pt}
\resizebox{\textwidth}{!}{
\begin{tabular}{l|ccccccccccc|cc}
\toprule\hline
         \textbf{Methods} &   \textbf{DPAD} &  \textbf{PLAD}&  \textbf{NeuTraLAD}    &       \textbf{SCAD} &           \textbf{AE} &        \textbf{COPOD} &     \textbf{DeepSVDD} &         \textbf{ECOD} &      \textbf{IForest} &          \textbf{KNN} &         \textbf{LOF} &       \makecell[c]{\textbf{Ours-} \\\textbf{ResMLP}} & \makecell[c]{\textbf{Ours-} \\\textbf{MLP}} \\
\midrule
ALOI       & 14.15$\pm$0.2 & 9.40$\pm$0.8 & 7.79$\pm$0.6      &              11.79$\pm$0.6 &              8.02$\pm$0.1 &               5.26$\pm$0.2 &     6.92$\pm$0.5 &                5.42$\pm$0.1 &           6.72$\pm$0.6 &  \underline{15.87$\pm$0.3} &      \textbf{20.16$\pm$0.8} &               14.32$\pm$0.6 &                13.53$\pm$0.7 \\
annthyroid   & 63.97$\pm$3.6 & \textbf{76.22}$\pm$6.8 & 42.06$\pm$5.0    &              54.68$\pm$0.9 &             51.02$\pm$1.7 &               31.6$\pm$0.3 &    21.99$\pm$3.0 &               39.03$\pm$0.3 &          56.69$\pm$2.3 &              63.73$\pm$1.6 &              {67.87}$\pm$1.6 &       \underline{75.59$\pm$2.6} &    {74.74$\pm$1.6} \\
backdoor  & 85.62$\pm$0.7 & 80.68$\pm$14.1 & \textbf{87.35}$\pm$0.2       &      \underline{87.25$\pm$0.2} &             85.14$\pm$0.0 &              13.67$\pm$0.3 &    84.56$\pm$2.6 &               16.87$\pm$0.2 &           3.24$\pm$1.1 &               85.3$\pm$0.0 &  {86.28$\pm$0.2} &               85.64$\pm$0.8 &                85.47$\pm$0.2 \\
breastw     & 96.9$\pm$0.2 & 73.64$\pm$2.8 & 72.38$\pm$3.2     &              94.77$\pm$0.7 &             95.94$\pm$0.6 &  \underline{97.12$\pm$0.3} &    91.72$\pm$1.5 &               95.31$\pm$0.5 &  \textbf{97.28$\pm$0.5} &              96.37$\pm$0.6 &              91.97$\pm$1.4 &                96.3$\pm$0.9 &                96.57$\pm$1.1 \\
campaign   & 46.22$\pm$1.9 & 46.29$\pm$2.9 & 42.22$\pm$3.9      &      \textbf{50.63$\pm$0.6} &             49.15$\pm$0.2 &               49.6$\pm$0.2 &   31.62$\pm$11.6 &               48.84$\pm$0.3 &           43.3$\pm$0.7 &  \underline{50.56$\pm$0.5} &               42.1$\pm$0.9 &               44.79$\pm$2.4 &                49.94$\pm$1.7 \\
cardio    & 70.91$\pm$8.6 & 44.32$\pm$5.9 & 39.66$\pm$5.4       &              66.05$\pm$1.5 &             \underline{78.33$\pm$1.9} &              65.77$\pm$1.7 &    43.41$\pm$6.3 &                65.8$\pm$0.9 &           70.1$\pm$3.5 &              67.71$\pm$3.0 &               65.0$\pm$4.1 &       \textbf{81.63$\pm$2.8} &    \textbf{80.28$\pm$1.8} \\
Cardiotocography & 62.15$\pm$2.7 & 56.22$\pm$5.4 & 41.07$\pm$3.8 &              53.86$\pm$1.6 &             \underline{65.14$\pm$1.3} &              48.42$\pm$0.9 &    37.47$\pm$7.7 &               62.66$\pm$0.8 &          62.82$\pm$1.8 &              56.12$\pm$1.7 &              61.24$\pm$1.1 &       \textbf{69.42$\pm$4.6} &    \textbf{69.23$\pm$3.5} \\
celeba    & 11.09$\pm$1.4 & \underline{24.98}$\pm$1.8 & 7.0$\pm$4.6       &              14.03$\pm$2.4 &      \textbf{27.3$\pm$0.3} &              23.39$\pm$0.2 &     8.87$\pm$8.0 &   {23.54$\pm$0.2} &          17.63$\pm$1.7 &              15.45$\pm$0.4 &               1.41$\pm$0.2 &                 8.2$\pm$0.7 &                10.51$\pm$1.8 \\
census    & 12.36$\pm$0.0 & 20.52$\pm$2.7 & 15.88$\pm$2.7       &              \underline{23.32}$\pm$0.6 &             21.46$\pm$0.1 &              12.86$\pm$0.1 &    17.56$\pm$2.4 &               12.22$\pm$0.1 &          11.24$\pm$1.2 &               22.3$\pm$0.2 &              13.72$\pm$0.3 &       \textbf{26.86$\pm$1.5} &    \textbf{23.44$\pm$3.1} \\
cover      & 25.37$\pm$24.1 & 33.63$\pm$2.3 & 19.27$\pm$16.1      &              65.25$\pm$9.7 &             20.16$\pm$5.0 &              18.17$\pm$0.2 &     2.72$\pm$2.9 &               23.54$\pm$0.2 &          11.33$\pm$1.9 &              74.89$\pm$0.3 &              \underline{79.03}$\pm$2.1 &       \textbf{91.38$\pm$0.3} &    \textbf{89.97$\pm$1.0} \\
donors    & 84.49$\pm$10.8 & 84.12$\pm$15.7 & 97.82$\pm$0.9       &  \underline{97.87$\pm$0.6} &             32.06$\pm$4.8 &               41.8$\pm$0.1 &   28.41$\pm$33.1 &               44.89$\pm$0.1 &          43.97$\pm$5.5 &      \textbf{99.14$\pm$0.1} &              86.23$\pm$0.3 &               94.51$\pm$1.2 &                92.56$\pm$2.4 \\
fault    & 67.7$\pm$0.7 & 60.22$\pm$3.2 & 67.19$\pm$1.1        &  \underline{72.62$\pm$0.9} &             54.87$\pm$0.6 &              47.77$\pm$0.7 &    50.88$\pm$4.9 &               48.65$\pm$0.1 &          61.68$\pm$1.7 &      \textbf{73.13$\pm$1.0} &              64.19$\pm$0.6 &               68.03$\pm$0.9 &                 68.3$\pm$1.0 \\
fraud     & 51.91$\pm$10.9 & 78.66$\pm$6.0 & 45.05$\pm$21.9       &  \underline{65.13$\pm$2.4} &             32.44$\pm$0.6 &              44.72$\pm$0.5 &   52.48$\pm$14.7 &               38.86$\pm$0.6 &          30.77$\pm$4.5 &              47.12$\pm$2.6 &                0.0$\pm$0.0 &               60.19$\pm$9.2 &        \textbf{78.68$\pm$1.6} \\
glass    & 31.11$\pm$4.4 & \underline{46.67}$\pm$9.3 & 33.33$\pm$24.3        &               38.1$\pm$8.7 &              12.5$\pm$3.9 &              15.28$\pm$8.3 &    {28.89}$\pm$9.9 &               15.56$\pm$6.1 &           12.5$\pm$7.1 &              20.37$\pm$8.4 &              15.56$\pm$6.1 &   {40.28$\pm$8.3} &        \textbf{56.57$\pm$7.8} \\
Hepatitis   & \underline{60.0}$\pm$12.3 & 51.92$\pm$3.8 & 29.23$\pm$10.2     &              36.26$\pm$8.6 &             51.92$\pm$6.8 &              53.85$\pm$0.0 &   44.62$\pm$10.0 &                40.0$\pm$3.4 &         50.96$\pm$10.0 &              {55.13}$\pm$5.8 &              52.31$\pm$3.4 &   \textbf{65.38$\pm$4.1} &        \textbf{69.23$\pm$4.9} \\
http     & \textbf{99.68}$\pm$0.1 & 99.22$\pm$0.8 & 1.9$\pm$0.1        &             84.57$\pm$31.3 &             92.09$\pm$0.8 &               1.99$\pm$0.0 &    9.91$\pm$17.7 &                 2.2$\pm$0.0 &          13.9$\pm$15.3 &  {99.56$\pm$0.1} &               0.05$\pm$0.0 &               99.25$\pm$0.5 &        \underline{99.57$\pm$0.2} \\
InternetAds   & \underline{82.34}$\pm$0.7 & 79.46$\pm$1.8 & 81.63$\pm$1.2   &              81.17$\pm$2.1 &             79.93$\pm$1.3 &              50.75$\pm$0.5 &     {81.3$\pm$1.5} &               50.43$\pm$0.3 &          24.12$\pm$2.6 &              80.12$\pm$2.1 &              80.65$\pm$0.7 &       \textbf{84.24$\pm$1.4} &    \textbf{84.19$\pm$0.8} \\
Ionosphere & 88.57$\pm$2.5 & 56.51$\pm$8.7 & 80.63$\pm$1.3      &              89.55$\pm$0.9 &             81.65$\pm$1.6 &              69.44$\pm$1.4 &    91.59$\pm$1.2 &               65.71$\pm$0.7 &          79.17$\pm$2.6 &              \underline{92.06$\pm$1.4} &              87.14$\pm$3.3 &   \textbf{94.33$\pm$1.9} &        \textbf{94.88$\pm$0.7} \\
landsat   & 56.55$\pm$2.2 & 52.17$\pm$4.0 & 53.67$\pm$8.5       &       \textbf{60.4$\pm$0.4} &             31.85$\pm$0.5 &              28.81$\pm$0.4 &    49.21$\pm$2.2 &               25.25$\pm$0.6 &           44.7$\pm$1.1 &              58.73$\pm$0.4 &  \underline{59.01$\pm$1.1} &               42.69$\pm$2.4 &                45.82$\pm$4.1 \\
letter   & 35.2$\pm$5.3 & 44.50$\pm$6.6 & 58.6$\pm$6.5        &              \underline{63.17$\pm$3.9} &             13.88$\pm$0.8 &               14.0$\pm$1.4 &     26.0$\pm$2.5 &                16.0$\pm$1.2 &          16.88$\pm$3.4 &              46.83$\pm$3.9 &               48.4$\pm$3.2 &       \textbf{67.79$\pm$2.0} &    \textbf{65.45$\pm$2.8} \\
Lymphography   & 83.33$\pm$0.0 & 70.83$\pm$23.3 & 0.0$\pm$0.0  &             88.89$\pm$13.6 &            79.17$\pm$24.8 &      \textbf{93.75$\pm$8.6} &    70.0$\pm$18.3 &    \underline{90.0$\pm$9.1} &          83.33$\pm$8.9 &             80.56$\pm$24.5 &             83.33$\pm$16.7 &               86.9$\pm$17.9 &               75.76$\pm$21.6 \\
magic   & 71.52$\pm$3.7 & 72.46$\pm$2.6 & 66.77$\pm$1.1   &               73.5$\pm$0.2 &             64.71$\pm$0.8 &               63.1$\pm$0.2 &    60.36$\pm$0.3 &               59.94$\pm$0.2 &          70.12$\pm$1.2 &              \underline{76.02$\pm$0.3} &              75.99$\pm$0.3 &   \textbf{81.02$\pm$0.3} &        \textbf{81.74$\pm$0.3} \\
mammography   & 43.54$\pm$3.8 & 27.4$\pm$2.4 & 7.85$\pm$1.3   &              26.47$\pm$1.6 &             43.41$\pm$2.1 &      \textbf{53.32$\pm$0.8} &   27.54$\pm$12.8 &   \underline{53.31$\pm$1.2} &          38.27$\pm$5.3 &              42.31$\pm$0.8 &              36.54$\pm$0.8 &               50.71$\pm$3.0 &                49.83$\pm$1.9 \\
mnist   & 59.46$\pm$5.8 & 58.50$\pm$5.5 & 55.0$\pm$1.9         &              67.62$\pm$2.0 &              65.8$\pm$1.0 &              38.57$\pm$0.6 &    34.94$\pm$5.6 &               33.71$\pm$1.2 &          53.55$\pm$4.4 &      \textbf{72.26$\pm$0.7} &              69.89$\pm$1.4 &   \underline{70.22$\pm$1.9} &                69.66$\pm$1.4 \\
musk       & 98.76$\pm$2.0 & 75.26$\pm$22.1 & 100.0$\pm$0.      &              \textbf{100.0$\pm$0.0} &             100.0$\pm$0.0 &              48.97$\pm$1.1 &    \textbf{100.0$\pm$0.0} &               55.05$\pm$1.6 &         54.12$\pm$16.7 &              \textbf{100.0$\pm$0.0} &              \textbf{100.0$\pm$0.0} &   \textbf{100.0$\pm$0.0} &        \textbf{100.0$\pm$0.0} \\
optdigits  & 25.33$\pm$8.0 & 52.33$\pm$8.0 & 53.47$\pm$10.9      &              50.44$\pm$7.6 &               0.1$\pm$0.3 &               3.08$\pm$0.3 &      2.8$\pm$5.5 &                2.67$\pm$0.0 &           14.0$\pm$6.1 &               32.0$\pm$7.2 &      \textbf{58.27$\pm$5.0} &  \underline{54.33$\pm$11.5} &                37.93$\pm$4.9 \\
PageBlocks  & 80.63$\pm$1.6 & 68.24$\pm$4.4 & 62.67$\pm$1.8     &              75.42$\pm$2.0 &             68.96$\pm$0.5 &              48.97$\pm$0.9 &    60.31$\pm$9.5 &               57.22$\pm$0.7 &          63.46$\pm$1.8 &              78.99$\pm$0.9 &              \underline{81.29$\pm$0.5} &   \textbf{83.37$\pm$1.0} &        \textbf{83.94$\pm$1.4} \\
pendigits  & 49.36$\pm$19.0 & 44.39$\pm$18.8 & 36.54$\pm$23.2      &              76.92$\pm$7.9 &             44.23$\pm$1.3 &              35.18$\pm$0.2 &     10.9$\pm$9.0 &               42.18$\pm$0.3 &          55.93$\pm$3.9 &       \textbf{93.7$\pm$1.3} &              75.13$\pm$4.2 &   \underline{86.97$\pm$2.2} &                83.72$\pm$3.6 \\
Pima   & 66.19$\pm$1.9 & 62.97$\pm$1.8 & 51.72$\pm$1.8          &              64.99$\pm$2.3 &             67.96$\pm$0.5 &              62.03$\pm$1.5 &    54.93$\pm$1.1 &               58.21$\pm$0.8 &          68.61$\pm$1.6 &       \textbf{69.9$\pm$1.4} &              67.24$\pm$1.6 &               68.41$\pm$1.8 &    \underline{69.85$\pm$1.7} \\
satellite  & 75.63$\pm$1.1 & 51.45$\pm$4.0 & 68.26$\pm$1.1      &      \textbf{77.72$\pm$0.3} &             62.53$\pm$0.6 &               56.8$\pm$0.4 &    65.11$\pm$2.2 &               53.94$\pm$0.1 &          69.17$\pm$0.5 &  \underline{76.35$\pm$0.3} &              75.83$\pm$0.5 &               73.01$\pm$0.8 &                73.87$\pm$0.5 \\
satimage   & 89.3$\pm$3.0 & 32.04$\pm$3.5 & 11.83$\pm$4.0      &  \underline{93.66$\pm$2.1} &             85.51$\pm$0.7 &              77.29$\pm$1.2 &    87.89$\pm$1.9 &               71.55$\pm$0.6 &          86.09$\pm$1.9 &              92.25$\pm$1.9 &              85.35$\pm$3.2 &       \textbf{94.77$\pm$0.7} &                 90.7$\pm$2.5 \\
shuttle   & 98.17$\pm$0.2 & 98.63$\pm$0.3 & 98.15$\pm$1.0       &      \textbf{99.01$\pm$0.1} &             95.95$\pm$0.2 &              95.96$\pm$0.1 &    97.56$\pm$0.5 &               91.55$\pm$0.3 &          94.72$\pm$1.8 &              98.03$\pm$0.2 &              98.26$\pm$0.2 &   \underline{98.56$\pm$0.2} &                98.48$\pm$0.2 \\
skin     & \underline{96.05}$\pm$1.5 & 93.16$\pm$2.4 & 71.7$\pm$3.8         &             54.57$\pm$17.3 &             38.65$\pm$4.1 &              19.63$\pm$0.1 &    44.62$\pm$3.2 &               21.88$\pm$0.0 &          77.11$\pm$1.6 &      \textbf{97.72$\pm$0.1} &              80.39$\pm$1.7 &               91.07$\pm$0.3 &    {94.35$\pm$0.5} \\
smtp     & \underline{66.67}$\pm$0.0 & 52.50$\pm$16.0 & 48.67$\pm$10.9        &             42.78$\pm$14.7 &             66.67$\pm$0.0 &                0.0$\pm$0.0 &   55.33$\pm$23.5 &               66.67$\pm$0.0 &            0.0$\pm$0.0 &  \underline{66.67$\pm$0.0} &              38.0$\pm$21.0 &               63.89$\pm$6.8 &        \textbf{66.67$\pm$0.0} \\
SpamBase   & 78.13$\pm$0.7 & 74.14$\pm$3.0 & 75.84$\pm$1.6      &      \textbf{81.68$\pm$1.3} &             76.53$\pm$1.0 &              69.59$\pm$0.3 &    71.82$\pm$3.3 &               67.29$\pm$0.1 &          79.37$\pm$1.6 &              79.66$\pm$0.7 &              77.52$\pm$1.3 &               80.76$\pm$0.9 &    \underline{80.87$\pm$1.1} \\
speech   & {8.85}$\pm$2.5 & \underline{10.25}$\pm$4.5 & 7.54$\pm$4.2        &               5.25$\pm$2.7 &              4.92$\pm$0.0 &               3.28$\pm$0.0 &     3.93$\pm$1.9 &                4.92$\pm$0.0 &           5.33$\pm$2.1 &               4.92$\pm$1.0 &               {6.23}$\pm$0.7 &   \textbf{12.02$\pm$2.0} &        \textbf{13.11$\pm$1.5} \\
Stamps   & 75.48$\pm$4.8 & 30.65$\pm$10.4 & 24.52$\pm$12.0        &             54.19$\pm$11.9 &             61.75$\pm$8.2 &              \underline{67.74}$\pm$3.9 &    37.42$\pm$9.6 &               48.39$\pm$6.5 &          61.29$\pm$5.7 &              64.52$\pm$8.2 &              62.58$\pm$8.1 &   \textbf{73.66$\pm$2.4} &        \textbf{74.84$\pm$5.6} \\
thyroid   & \underline{76.77}$\pm$5.5 & 74.19$\pm$6.5 & 7.96$\pm$7.0       &              61.72$\pm$7.2 &  {76.5$\pm$1.1} &              30.24$\pm$0.4 &   47.96$\pm$17.6 &               61.94$\pm$1.4 &  \textbf{81.32$\pm$2.9} &              73.84$\pm$1.6 &              58.06$\pm$5.0 &               70.25$\pm$5.9 &                76.24$\pm$5.2 \\
vertebral  & \underline{31.33}$\pm$9.6 & \underline{31.33}$\pm$14.5 & 28.67$\pm$9.6      &              {20.67}$\pm$2.8 &             17.14$\pm$1.3 &               0.42$\pm$1.2 &     9.33$\pm$4.3 &               13.33$\pm$0.0 &           15.0$\pm$3.6 &              13.33$\pm$3.0 &              16.67$\pm$4.1 &       \textbf{42.22$\pm$8.3} &     \textbf{41.0$\pm$3.9} \\
vowels     & 43.2$\pm$6.9 & 39.2$\pm$11.2 & 65.2$\pm$8.7       &       \textbf{88.0$\pm$3.3} &             19.71$\pm$0.8 &               4.75$\pm$1.0 &     33.2$\pm$9.4 &                22.0$\pm$0.0 &          28.25$\pm$4.8 &              65.33$\pm$4.3 &               56.8$\pm$4.1 &   \underline{86.67$\pm$2.4} &                 85.4$\pm$2.5 \\
Waveform   & 20.6$\pm$2.9 & 57.00$\pm$3.4 & 47.8$\pm$1.6      &              19.75$\pm$4.4 &              9.14$\pm$0.7 &                9.5$\pm$1.1 &     15.8$\pm$2.5 &                 7.4$\pm$0.5 &          10.75$\pm$1.8 &       \textbf{29.5$\pm$2.7} &   \underline{28.8$\pm$1.1} &                27.0$\pm$3.3 &                 20.2$\pm$2.5 \\
WBC   & 82.0$\pm$4.0 & 30.00$\pm$33.9 & 12.0$\pm$7.5           &               57.5$\pm$5.0 &             85.71$\pm$5.3 &   \underline{87.5$\pm$7.1} &    54.0$\pm$11.4 &        \textbf{88.0$\pm$4.5} &          86.25$\pm$5.2 &              83.33$\pm$8.2 &              68.0$\pm$11.0 &               81.67$\pm$7.5 &                 83.0$\pm$9.5 \\
WDBC    & 72.0$\pm$11.7 & 20.00$\pm$4.5 & 0.0$\pm$0.0         &               70.0$\pm$8.2 &             78.57$\pm$9.0 &              \underline{81.25}$\pm$3.5 &     72.0$\pm$8.4 &                50.0$\pm$7.1 &           72.5$\pm$8.9 &              76.67$\pm$5.2 &              80.0$\pm$10.0 &       \textbf{98.33$\pm$4.1} &     \textbf{92.0$\pm$7.9} \\
Wilt   & 14.86$\pm$5.4 & \textbf{89.88}$\pm$3.4 & 28.87$\pm$5.4          &             {31.23}$\pm$5.5 &              2.95$\pm$0.7 &               1.61$\pm$0.1 &     3.97$\pm$1.4 &                5.68$\pm$0.2 &            1.9$\pm$0.4 &               2.01$\pm$0.5 &              17.98$\pm$4.0 &   \underline{62.52$\pm$5.7} &         \underline{68.6$\pm$2.9} \\
wine    & 82.0$\pm$17.2 & 46.0$\pm$38.5 & 24.0$\pm$13.6         &              72.5$\pm$15.0 &              60.0$\pm$8.2 &              58.75$\pm$3.5 &    \underline{78.0}$\pm$13.0 &                34.0$\pm$8.9 &         63.75$\pm$13.0 &             71.67$\pm$11.7 &               74.0$\pm$8.9 &   \textbf{90.0$\pm$12.6} &         \textbf{93.0$\pm$6.7} \\
WPBC      & \underline{48.09}$\pm$3.2 & 48.94$\pm$5.3 & 41.7$\pm$5.1        &              32.98$\pm$2.1 &             36.17$\pm$5.2 &              31.38$\pm$4.5 &    37.02$\pm$2.4 &               36.17$\pm$0.0 &          35.64$\pm$2.2 &              35.46$\pm$2.6 &               {38.3}$\pm$3.4 &   \textbf{50.35$\pm$2.6} &        \textbf{51.06$\pm$2.0} \\
yeast   & 52.54$\pm$2.7 & \textbf{58.54}$\pm$2.3 & 52.24$\pm$3.2         &              {52.07}$\pm$1.9 &             44.24$\pm$0.8 &              41.72$\pm$0.6 &    47.61$\pm$1.2 &               46.43$\pm$1.2 &          43.44$\pm$1.0 &               47.4$\pm$1.1 &              49.03$\pm$1.1 &       \underline{57.66$\pm$1.3} &    \underline{57.22$\pm$1.6} \\
 \hline
       mean rank & 4.53 & 6.38& 7.63& 4.91 &  6.97 &      9.00 &        8.60 &  8.74 &   7.61 &      4.36 & 5.57 &     \textbf{2.85} &    \textbf{ 2.70} \\
            mean &60.34$\pm$27.4& 55.1$\pm$25.1& 44.1$\pm$29.1& 60.64$\pm$26.2& 51.95$\pm$29.0& 40.82$\pm$28.1& 44.56$\pm$29.5& 42.22$\pm$25.1& 46.22$\pm$29.1& 62.66$\pm$27.8& 57.06$\pm$28.4& \textbf{69.09}$\pm$24.7& \textbf{69.4}$\pm$24.5 \\\hline
                    p-value &  0.0002& 0.0001& 0.0000&   0.0000 &  0.0000 &  0.0000 &  0.0000 & 0.0000 &  0.0000 &    0.0011 & 0.0000 &     - &     0.68699 \\
         p-value &  0.0001 & 0.0001& 0.0000&  0.0000 &  0.0000 &  0.0000 &  0.0000 & 0.0000 &  0.0000 &    0.0017 & 0.0002 &     0.68699 &     - \\\hline
\bottomrule
\end{tabular}
}
\caption{F1 (\%) score of the proposed method compared with 8 baselines on 47 benchmark datasets.
Each experiment is repeated 10 times with seed from 0 to 9, and mean value and standard deviation are reported. Mean is the
average F1 score under all experiments. Mean rank is calculated out of 10 tested methods. Cardio. refers to Cardiotocography. Mammo. refers to mammography. Lympho. refers to Lymphography.}
\label{tab: adbench f1}
\end{table*}

\section{Appendix F: Detailed Anomaly Detection Results} \label{app: ad results}
The detailed results in terms of AUC score and F1 score under the UAD setting are reported here in Table \ref{tab: adbench auc} and \ref{tab: adbench f1}. We do not include the results for AnoGAN baseline because it is not efficient to run. So, we only compare 11 baselines for the UAD setting. The paired t-test is conducted based on the average AUC/F1 score through 10 runs in Tables 2, 5, 6, and 7. For example, in Table 2, every AD method has 25 values (as there are 25 datasets). Thus for any two AD methods, we have 25 pairs of values, on which we performed the paired t-test. The last two lines in the table are the p-values of our two methods (ResMLP and MLP) against each baseline method.
Since the anomaly in the test set is rare, F1 score metric is more important to show the significance of our method. It is seen that our method has the highest mean F1 score (69.40\%) and average rank (2.70). Compared with other baseline methods, our method has statistically significant improvement in terms of pair t-test p-value.

\section{Appendix G: F1 Score Results on OCC setting} \label{app: f1}
The performance in terms of F1 score under the OCC setting is presented in Table \ref{tab: main result f1}. Our approach also obtains the highest mean F1 score and mean rank out of 12 baselines.
\begin{table*}[h]
\setlength\tabcolsep{1.5pt}
\centering
\resizebox{\textwidth}{!}{
\begin{tabular}{l|cccccccccccc|cc}
\toprule
\hline
\textbf{Methods}& \textbf{DPAD} & \textbf{PLAD} & \textbf{NeuTraLAD} & \textbf{SCAD} & \textbf{AE} & \textbf{AnoGAN} & \textbf{COPOD} & \textbf{DeepSVDD} & \textbf{ECOD} & \textbf{IForest} & \textbf{KNN} & \textbf{LOF} & \makecell[c]{\textbf{Ours-} \\\textbf{ResMLP}} & \makecell[c]{\textbf{Ours-} \\\textbf{MLP}} \\
\midrule
abalone     & \underline{97.23}$\pm$2.7 & 97.12$\pm$2.8 & 96.96$\pm$2.9 & 97.05$\pm$2.8 & 97.17$\pm$2.6 & 97.13$\pm$2.7 & 97.01$\pm$2.7 & 96.99$\pm$2.9  & 97.00$\pm$2.7  & 97.22$\pm$2.6 & 97.09$\pm$2.8  & 97.15$\pm$2.7  & \textbf{97.38}$\pm$2.5  & \textbf{97.39}$\pm$2.5  \\ 
arrhythmia  & \underline{80.10}$\pm$0.8 & 70.53$\pm$5.0 & 66.18$\pm$0.0 & 74.01$\pm$0.7 & 78.03$\pm$1.0 & 73.14$\pm$1.8 & 73.75$\pm$1.1 & 76.37$\pm$1.4  & 74.59$\pm$1.5 & 75.74$\pm$1.4  & 78.08$\pm$1.2  & 78.23$\pm$1.2 & \textbf{80.12}$\pm$1.4  & \textbf{79.16}$\pm$1.4  \\ 
breastw     & 95.6$\pm$0.6 & 80.50$\pm$8.7 & 71.78$\pm$3.0 & 93.73$\pm$0.9 & 95.75$\pm$0.6 & 96.18$\pm$0.9 & 96.35$\pm$0.8 & 91.10$\pm$1.5   & 94.27$\pm$0.7 & \textbf{96.83}$\pm$0.7 & \underline{96.15}$\pm$0.8  & 91.74$\pm$2.0  & 95.97$\pm$0.8  & 95.79$\pm$1.0  \\ 
cardio      &\underline{67.09}$\pm$4.0 & 62.85$\pm$9.6 & 42.38$\pm$2.1 & 54.63$\pm$2.4 & 65.36$\pm$0.9 & 58.28$\pm$15.7 & 47.82$\pm$1.0 & 39.66$\pm$6.1  & 61.54$\pm$0.6 & 65.03$\pm$2.1  & 59.01$\pm$1.2  & 62.44$\pm$2.0  & \textbf{70.15}$\pm$3.0  & \textbf{71.22}$\pm$3.8  \\ 
ecoli       & \underline{96.09}$\pm$1.7 & 89.38$\pm$10.5 & 86.32$\pm$9.4 & 94.05$\pm$2.2 & 95.77$\pm$2.1 & 95.10$\pm$2.1 & 88.91$\pm$4.2 & 93.42$\pm$2.5  & 87.55$\pm$6.2 & 95.71$\pm$1.7  & 95.87$\pm$1.8 & 95.73$\pm$1.8  & \textbf{96.47}$\pm$1.5  & \textbf{96.61}$\pm$1.5  \\ 
glass       & \underline{90.00}$\pm$5.9 & 84.75$\pm$7.2 & 87.66$\pm$4.1 & 89.87$\pm$5.3 & 87.60$\pm$8.0  & 88.81$\pm$7.2 & 84.95$\pm$6.7 & 88.60$\pm$6.5   & 84.96$\pm$7.3 & 89.10$\pm$6.8   & 89.11$\pm$6.4  & 86.70$\pm$6.3   & \textbf{91.31}$\pm$5.8  & \textbf{91.76}$\pm$5.6  \\ 
ionosphere  & 91.11$\pm$0.8 & 64.29$\pm$10.4 & 85.71$\pm$2.2 & 90.32$\pm$0.7 & 79.98$\pm$1.9 & 78.10$\pm$6.1 & 70.57$\pm$1.5 & 89.75$\pm$1.1  & 66.37$\pm$1.5 & 80.58$\pm$4.5  & \underline{91.61}$\pm$1.9 & 87.91$\pm$2.4  & \textbf{95.24}$\pm$0.6  & \textbf{92.63}$\pm$2.0  \\ 
kdd         & 88.04$\pm$4.6 & 93.34$\pm$0.5 & 94.2$\pm$1.1 &93.81$\pm$0.8&93.74$\pm$2.9 & 91.25$\pm$2.0 & 90.93$\pm$0.8 & 87.06$\pm$8.5 & 91.63$\pm$0.3 & 94.26$\pm$0.6 & 94.69$\pm$0.1 & \underline{95.58}$\pm$0.1 & \textbf{96.50}$\pm$0.4 & \textbf{96.72}$\pm$0.8      \\
letter       & 98.86$\pm$0.3 & 98.01$\pm$0.1 & 98.99$\pm$0.2 & \textbf{99.60}$\pm$0.1  & 98.67$\pm$0.3  & 98.47$\pm$0.3  & 98.05$\pm$0.1  & 98.77$\pm$0.2  & 98.05$\pm$0.1  & 98.98$\pm$0.2  & 99.4$\pm$0.2   & 98.86$\pm$0.3  & \underline{99.55}$\pm$0.1  & \underline{99.57}$\pm$0.1  \\ 
lympho       & \underline{77.05}$\pm$2.7 & 68.85$\pm$1.6 & 58.36$\pm$3.4 & 74.59$\pm$4.4  & 75.91$\pm$3.0 & 71.80$\pm$4.8 & 61.61$\pm$2.2  & 73.77$\pm$4.6  & 62.76$\pm$2.7  & 73.36$\pm$3.8  & 75.78$\pm$3.3  & 75.55$\pm$3.2  & \textbf{80.56}$\pm$3.3  & \textbf{81.73}$\pm$2.9  \\ 
mammo.  & \underline{50.08}$\pm$4.0 & 31.28$\pm$7.5 & 7.08$\pm$1.1 & 25.42$\pm$3.2  & 44.09$\pm$2.7  & 36.41$\pm$11.9 & 53.37$\pm$0.9  & 33.40$\pm$12.4  & \textbf{53.65}$\pm$0.8 & 38.81$\pm$3.6  & 41.36$\pm$2.0  & 37.23$\pm$1.4  & \underline{49.92}$\pm$5.0  & 45.69$\pm$5.1  \\ 
mulcross     & 99.95$\pm$0.1 & 99.67$\pm$0.6 & 25.41$\pm$26.5 & 99.95$\pm$0.1  & 100.0$\pm$0.0  & 98.52$\pm$2.0 & 66.00$\pm$0.1   & 100.0$\pm$0.0 & 74.58$\pm$0.2  & 99.11$\pm$0.6  & 100.0$\pm$0.0  & 100.0$\pm$0.0  & \textbf{100.0}$\pm$0.0  & \textbf{100.0}$\pm$0.0  \\ 
musk         & 50.29$\pm$4.0 & 53.84$\pm$4.5 & 62.73$\pm$0.8 & \underline{66.90}$\pm$0.6  & 6.99$\pm$0.5   & 24.29$\pm$10.3 & 11.68$\pm$0.3  & 44.30$\pm$6.3   & 15.59$\pm$0.4  & 26.65$\pm$5.0  & 63.64$\pm$0.7  & 66.58$\pm$1.0  & \textbf{71.84}$\pm$1.9  & \textbf{68.78}$\pm$1.3  \\ 
optdigits    & 96.66$\pm$1.7 & 91.73$\pm$2.5 & 92.83$\pm$2.2 & \underline{98.64}$\pm$0.9 & 97.95$\pm$1.1  & 95.84$\pm$1.7             & 92.18$\pm$1.4  & 96.95$\pm$1.4  & 92.04$\pm$1.4  & 98.13$\pm$1.1  & 98.60$\pm$0.8   & 98.63$\pm$0.8  & \textbf{98.80}$\pm$0.8   & \textbf{98.91}$\pm$0.8  \\
pendigits    & 98.81$\pm$0.7 & 96.28$\pm$1.2 & 98.76$\pm$0.6 & \underline{99.64}$\pm$0.3 & 98.48$\pm$1.1  & 98.15$\pm$1.2  & 94.82$\pm$1.0  & 97.53$\pm$1.1  & 94.77$\pm$0.5  & 99.04$\pm$0.5  & 99.59$\pm$0.3  & 99.38$\pm$0.5  & \textbf{99.77}$\pm$0.2  & \textbf{99.78}$\pm$0.1  \\ 
pima         & 67.31$\pm$1.3 & 64.55$\pm$3.3 & 51.34$\pm$3.2 & 65.07$\pm$1.1  & 67.72$\pm$1.2  & 67.01$\pm$3.1 & 62.69$\pm$1.2  & 56.96$\pm$2.6  & 58.12$\pm$0.9  & 68.94$\pm$1.4  & \textbf{70.06}$\pm$1.0 & 67.24$\pm$1.2  & \underline{69.03}$\pm$2.0  & \underline{69.78}$\pm$2.1  \\ 
satimage     & 94.76$\pm$2.3 & 83.49$\pm$7.0 & 83.19$\pm$7.0 & 94.76$\pm$1.9 & 95.05$\pm$2.4 & 94.75$\pm$2.6            & 89.43$\pm$2.9 & 86.66$\pm$5.2  & 87.21$\pm$3.5 & 96.05$\pm$2.0 & \textbf{95.65}$\pm$2.6 & 93.89$\pm$3.0 & \underline{95.57}$\pm$2.1 & \underline{95.50}$\pm$2.2  \\ 
seismic      & \underline{34.12}$\pm$3.0 & \textbf{35.29}$\pm$4.1 & 17.18$\pm$1.9& 29.06$\pm$2.9 & 28.12$\pm$1.4 & 29.56$\pm$1.7 & 30.07$\pm$1.5 & 24.89$\pm$6.2  & 29.9$\pm$0.3  & 29.36$\pm$1.5 & {30.22}$\pm$2.0 & 16.12$\pm$1.8 & {31.18}$\pm$1.3 & {34.03}$\pm$1.7 \\ 
shuttle      & 99.30$\pm$1.1 & 98.15$\pm$3.4 & 98.6$\pm$2.6& \underline{99.38}$\pm$0.8 & 92.72$\pm$13.2 & 91.74$\pm$16.2           & 87.8$\pm$14.3 & 96.67$\pm$11.0 & 88.32$\pm$14.4 & 93.28$\pm$12.0 & 99.24$\pm$1.1 & 98.98$\pm$1.3 & \textbf{99.41}$\pm$0.9 & \textbf{99.41}$\pm$0.9 \\ 
speech       & 7.54$\pm$2.5 & \underline{9.84}$\pm$3.8 & 4.92$\pm$4.1 & 5.41$\pm$2.2 & 4.64$\pm$0.6 & 3.28$\pm$2.3 & 3.28$\pm$0.0 & 4.31$\pm$2.3 & 4.92$\pm$0.0 & 4.37$\pm$2.3 & 5.76$\pm$0.9 & {6.84}$\pm$1.0 & \textbf{13.35}$\pm$1.8 & \textbf{11.94}$\pm$1.6 \\
thyroid      & 55.52$\pm$3.9 & \textbf{69.20}$\pm$9.9 & 27.6$\pm$7.2 & {42.48}$\pm$2.6 & 42.64$\pm$0.8 & 37.20$\pm$1.2           & 24.40$\pm$0.0 & 18.31$\pm$4.8 & 30.80$\pm$0.0 & 42.53$\pm$3.2 & 49.20$\pm$0.0 & 30.80$\pm$0.0 & \underline{65.04}$\pm$1.0 & 64.48$\pm$1.3 \\ 
vertebral    & \underline{90.19}$\pm$1.1 & 81.27$\pm$6.0 & 81.14$\pm$0.8 & 86.33$\pm$1.5 & {89.97}$\pm$0.9 & 89.62$\pm$2.3 & 87.24$\pm$0.8 & 85.17$\pm$1.5 & 82.63$\pm$0.9 & 89.38$\pm$1.1 & 88.93$\pm$0.7 & 89.51$\pm$1.1 & \textbf{92.04}$\pm$0.7 & \textbf{91.50}$\pm$0.7  \\ 
vowels       & 40.4$\pm$8.5 & 48.67$\pm$34.5 & 62.91$\pm$8.0& \underline{86.20}$\pm$2.6 & 19.94$\pm$0.8 & 9.60$\pm$7.4 & 4.44$\pm$0.8 & 33.54$\pm$8.8 & 21.20$\pm$1.0 & 28.33$\pm$4.1 & 64.86$\pm$4.2 & 57.49$\pm$3.5 & \textbf{87.71}$\pm$3.5 & \textbf{86.57}$\pm$1.9 \\ 
wbc          & \underline{89.25}$\pm$1.7 & 66.67$\pm$12.1 & 57.26$\pm$8.7& 86.89$\pm$1.4 & {88.97}$\pm$0.9 & 87.45$\pm$3.6 & 79.53$\pm$1.0 & 83.61$\pm$4.0 & 62.74$\pm$1.1 & 89.9$\pm$1.1 & 88.54$\pm$1.0 & 88.96$\pm$0.9 & \textbf{92.65}$\pm$1.3 & 92.05$\pm$1.6 \\
wine         & 95.44$\pm$3.6 & 87.88$\pm$5.3 & 80.16$\pm$4.4 & 90.77$\pm$4.4 & 96.33$\pm$4.0 & 95.54$\pm$4.7 & 80.69$\pm$2.8 & 95.25$\pm$3.8 & 80.83$\pm$4.1 & 95.77$\pm$2.5 & \underline{97.10}$\pm$2.6 & 96.85$\pm$2.4 & \textbf{97.98}$\pm$2.0 & \textbf{97.71}$\pm$2.3 \\ \hline
mean         & 92.67$\pm$15.1 &89.00$\pm$17.4 & 87.95$\pm$20.5 & 92.8$\pm$15.7 & 91.37$\pm$18.3 & 88.58$\pm$21.4 & 87.83$\pm$19.4 & 90.39$\pm$18.5 & 87.87$\pm$18.3 & 91.85$\pm$17.3 & 93.43$\pm$14.8 & 92.6$\pm$16.1 & \textbf{94.23}$\pm$14.0 & \textbf{94.18}$\pm$13.8 \\ 
mean rank    &4.36 & 8.88 & 10.16 & 6.12 & 6.20 & 8.32 & 10.12 & 9.08 & 10.32 & 5.76 & 3.92 & 5.56 & \textbf{1.52} & \textbf{1.56} \\ \hline
p-value &0.0292& 0.0001& 0.0000& 0.0006 & 0.0190 & 0.0068 & 0.0008 & 0.0007 & 0.0002 & 0.0106 & 0.0023 & 0.0024 & - & 0.22 \\
p-value &0.0363& 0.0000& 0.0000&  0.0006 & 0.0198 & 0.0069 & 0.0008 & 0.0008 & 0.0002 & 0.0107 & 0.0030 & 0.0037 & 0.22 & -\\ \hline
\bottomrule
\end{tabular}}
\caption{F1 (\%) score of the proposed method compared with 12 baselines on 25 benchmark datasets. Each experiment is repeated 10 times with seed from 0 to 9, and mean value and standard deviation are reported. Mean is the average F1 score under all experiments. Mean rank is calculated out of 10 tested methods. Mammo. refers to mammography.}
\label{tab: main result f1}
\end{table*}

\section{Appendix H: Ablation Experiment Results}
\subsection{Sensitivity of Different Noise Levels} \label{app: lv results}
To explore how different noise levels contribute to the performance, we keep the noise type as Gaussian noise. In each training batch, we also generate $3$ noised instances which is consistent with the former setting. We search through different noise levels in $[0.1, 0.2, 0.5, 0.8, 1.0, 2.0, 3.0, 5.0]$. We report the average AUC, F1 score, and rank (out of $8$ different noise levels) in Figure \ref{fig: ablation noise level}. We observe that in all results if the noise level is too small, the average performance is low. This indicates noise with a small level would confuse the model because the distance between the normal samples and anomalies is very small. On the other hand, excessively high noise levels result in an expanded output value range, thereby enlarging the sampling space. This enlargement impacts the expressiveness of the model, reducing its effectiveness. 
However, when the noise level grows too large the score is also decreased. This means a large noise level has a large learning target value and a large sampling space, which may result in under-fitting training. The best choice of noise level appears around $1.0$. From the perspective of denoising score matching\citep{song2019generative}, a higher noise level will lead to inaccurate score estimation while a lower noise level will make it hard to learn data distribution in a relatively low-density region. Hence, a diverse noise level selection is preferred.

\begin{figure*}[t]
\begin{subfigure}{.249\textwidth}
  \centering
  \includegraphics[width=\linewidth]{ 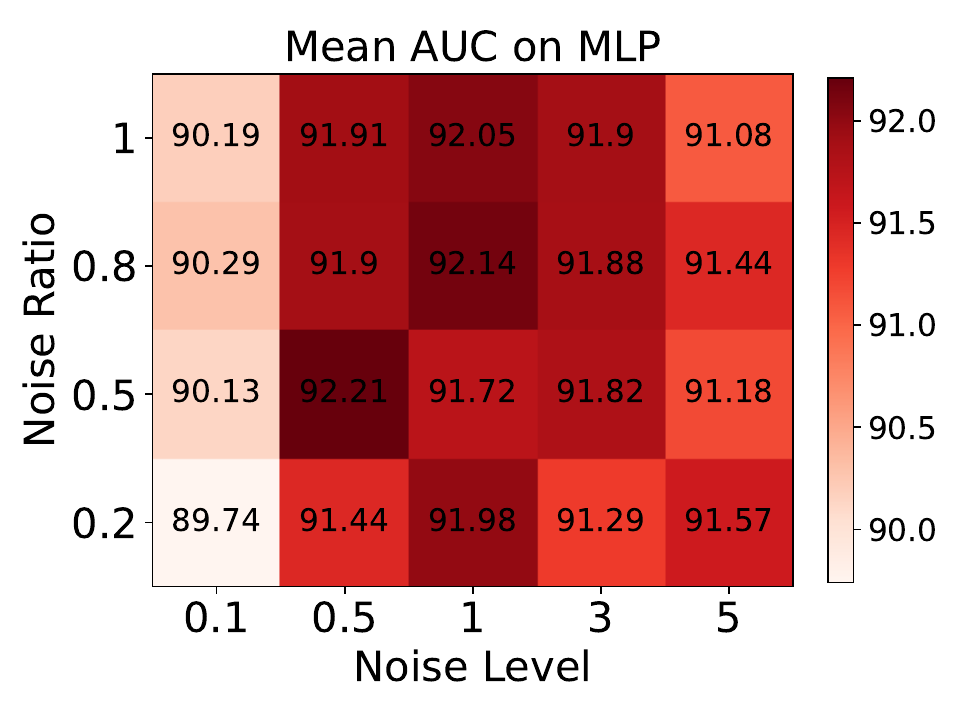}  
  \caption{AUC on MLP with Different Noise Levels and Ratios}
\end{subfigure}
\begin{subfigure}{.249\textwidth}
  \centering
  \includegraphics[width=\linewidth]{ 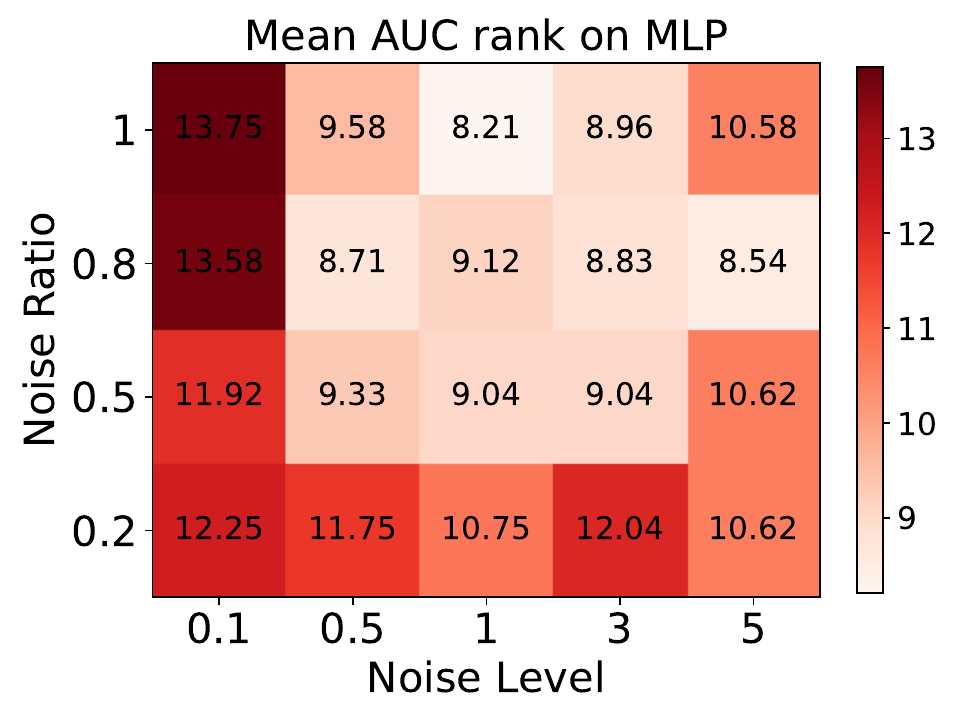}  
  \caption{Rank by AUC on MLP with Different Noise Levels and Ratios}
\end{subfigure}
\begin{subfigure}{.243\textwidth}
  \centering
  \includegraphics[width=\linewidth]{ 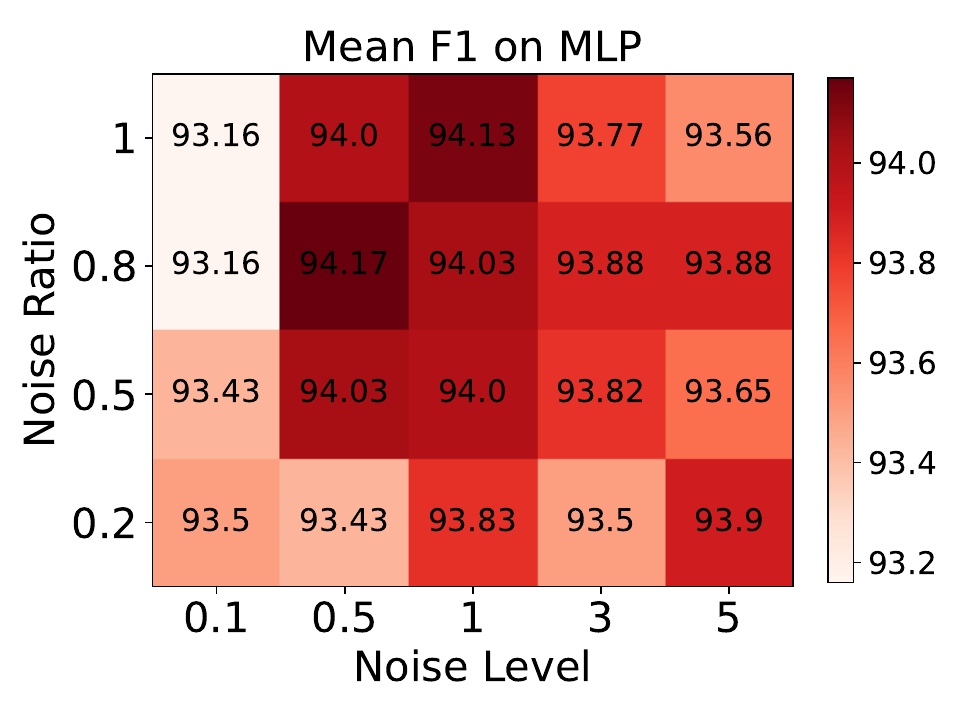}  
  \caption{F1 on MLP with Different Noise Levels and Ratios}
\end{subfigure}
\begin{subfigure}{.243\textwidth}
  \centering
  \includegraphics[width=\linewidth]{ 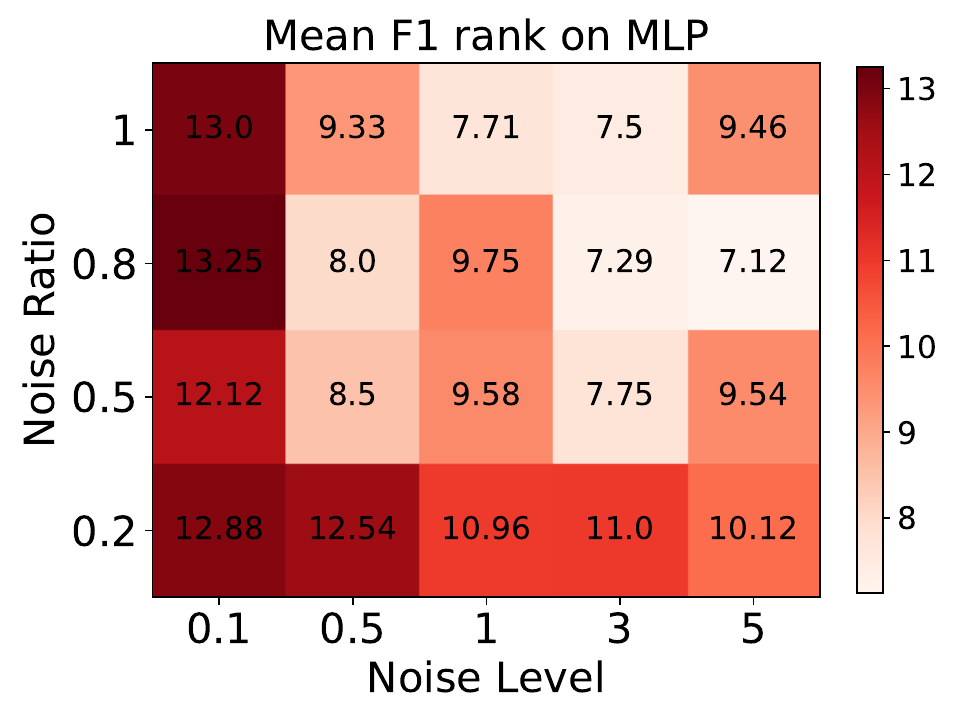}  
  \caption{Rank by F1 on MLP with Different Noise Levels and Ratios}
\end{subfigure}
\caption{Sensitivity of Different Noise Levels and Ratio on MLP. We test different noise levels in $[0.1, 0.5, 1.0, 3.0, 5.0]$, and noise ratios in $[0.2, 0.8, 0.5, 1.0]$. The mean rank (the lower the better) is calculated out of $20$ different settings.}
\label{fig: ablation noise ratio}
\end{figure*}

\begin{figure*}[t]
\begin{subfigure}{.249\textwidth}
  \centering
  \includegraphics[width=\linewidth]{ 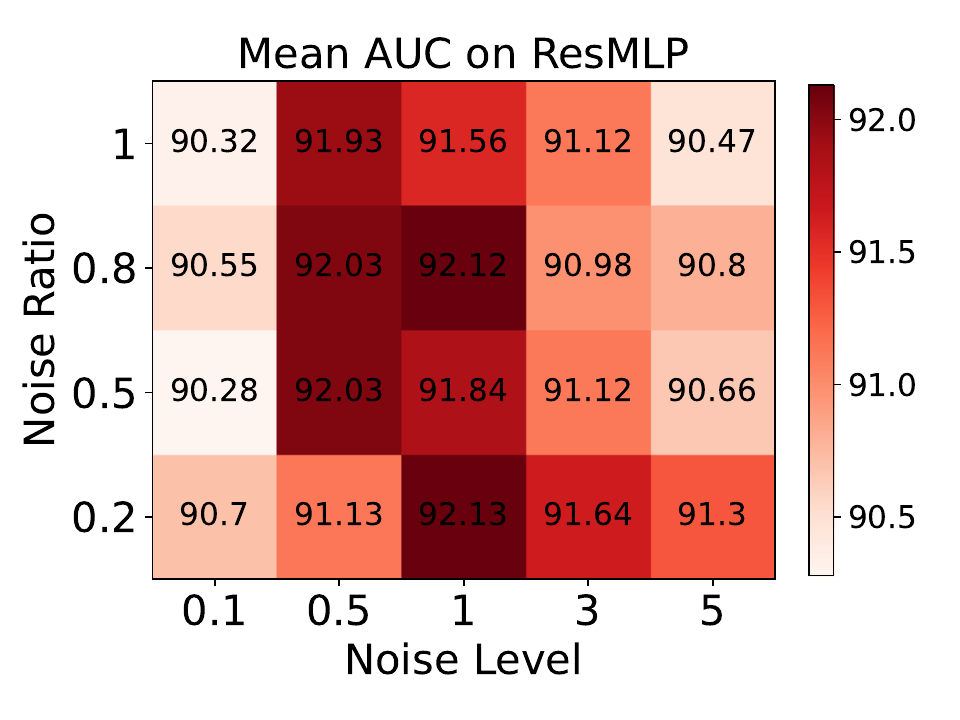}  
  \caption{AUC on ResMLP with Different Noise Levels and Ratios}
\end{subfigure}
\begin{subfigure}{.249\textwidth}
  \centering
  \includegraphics[width=\linewidth]{ 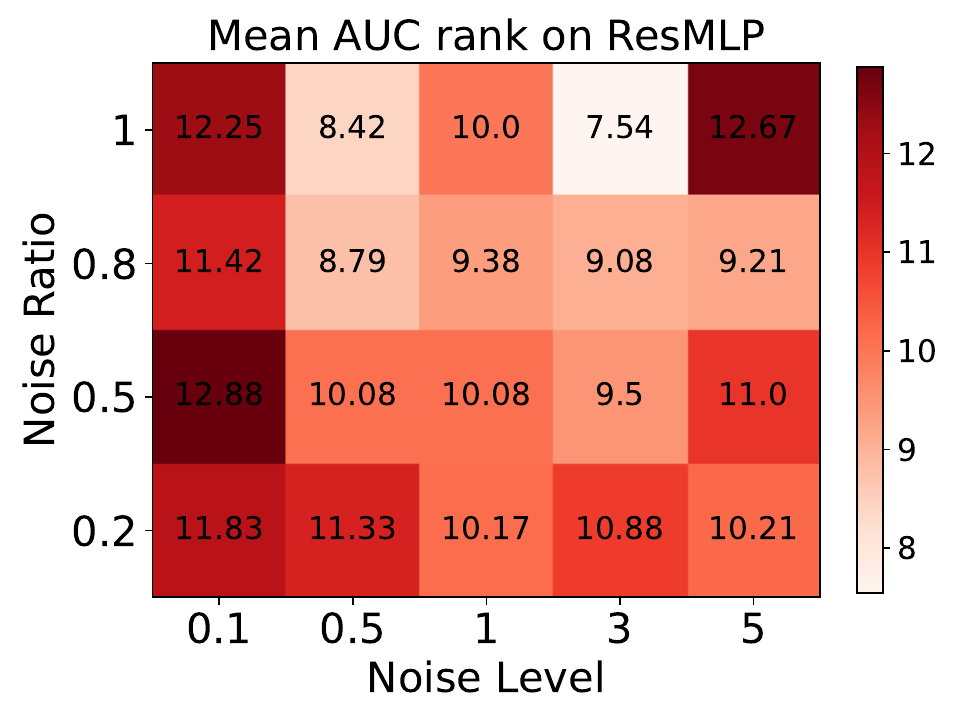}  
  \caption{Rank by AUC on ResMLP with Different Noise Levels and Ratios}
\end{subfigure}
\begin{subfigure}{.243\textwidth}
  \centering
  \includegraphics[width=\linewidth]{ 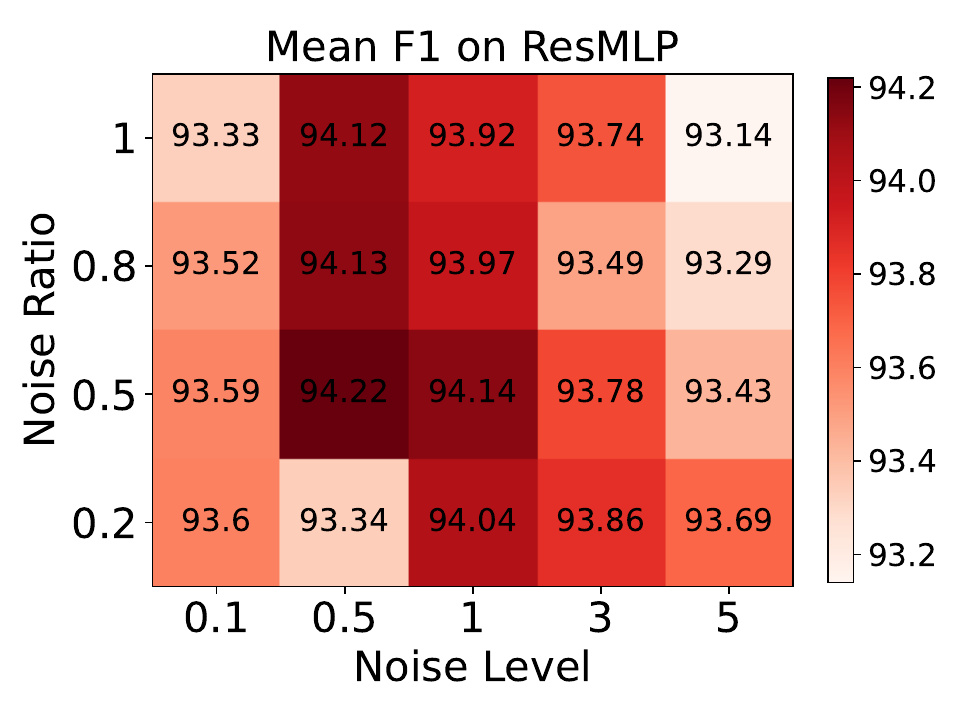}  
  \caption{F1 on ResMLP with Different Noise Levels and Ratios}
\end{subfigure}
\begin{subfigure}{.243\textwidth}
  \centering
  \includegraphics[width=\linewidth]{ 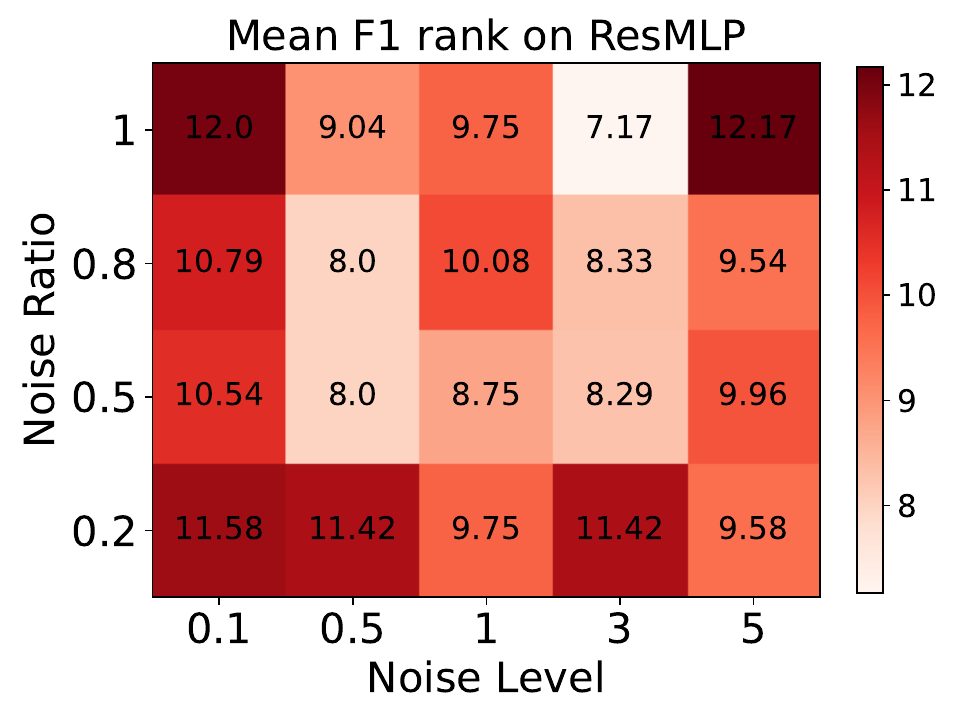}  
  \caption{Rank by F1 on ResMLP with Different Noise Levels and Ratios}
\end{subfigure}
\caption{Sensitivity of Different Noise Levels and Ratio on ResMLP. We test different noise levels in $[0.1, 0.5, 1.0, 3.0, 5.0]$, and noise ratios in $[0.2, 0.8, 0.5, 1.0]$. The mean rank (the lower the better) is calculated out of $20$ different settings.}
\label{fig: ablation noise ratio resmlp}
\end{figure*}


\subsection{\textbf{Sensitivity of Different Noise Ratios}} \label{app: lv_p results}
The noise ratio means the percentage of noise appearing the feature in a sample. For different noise ratios, we keep utilizing Gaussian noise and the same noise level generation as the default. In each training batch, we generate $1$ noised instances with the specific noise ratio instead. We vary the noise ratio in $[0.2, 0.5, 0.8, 1.0]$. We plot a heatmap jointly with different noise levels in Figure and \ref{fig: ablation noise ratio} and \ref{fig: ablation noise ratio resmlp}. We observe an augmentation in model accuracy concomitant with an increase in the noise ratio across the spectrum of noise levels tested. This trend suggests that our model exhibits an affinity for a higher noise ratio in its training regimen, implying that a moderated presence of noise may be beneficial to the model's training efficacy and predictive accuracy. In ResMLP result, similar results are observed that a higher noise ratio is beneficial to the model performance. Noise levels either too high or too low can lead to a bad performance.

\begin{figure*}[t]
\begin{subfigure}{.249\textwidth}
  \centering
  \includegraphics[width=\linewidth]{ 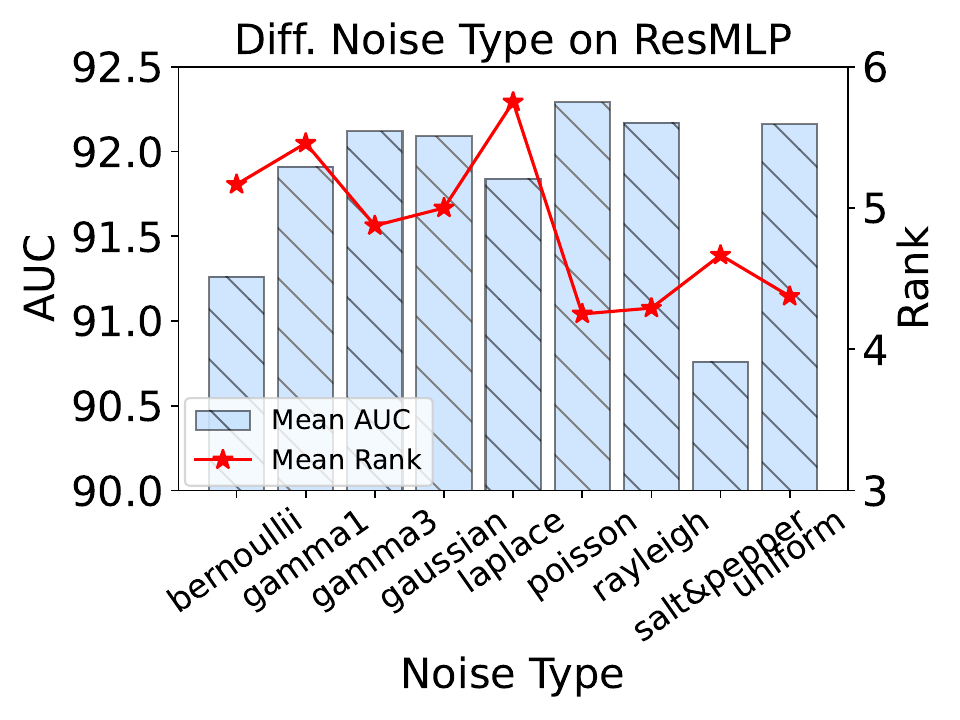}  
  \caption{AUC and Rank on ResMLP with Different Noise Type}
\end{subfigure}
\begin{subfigure}{.249\textwidth}
  \centering
  \includegraphics[width=\linewidth]{ 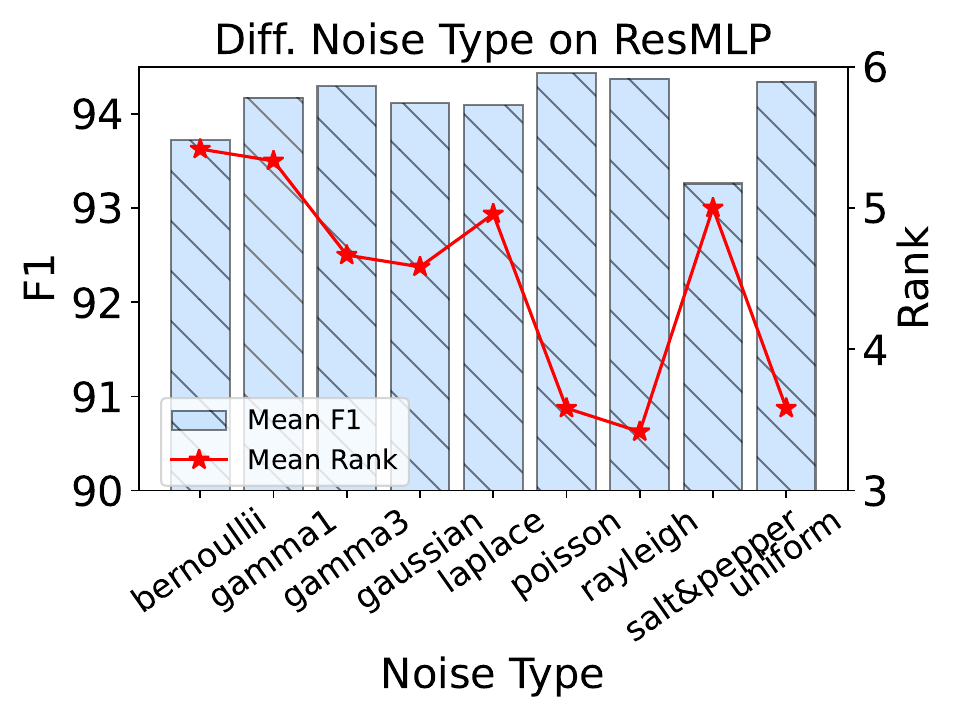}  
  \caption{F1 and Rank on ResMLP with Different Noise Type}
\end{subfigure}
\begin{subfigure}{.243\textwidth}
  \centering
  \includegraphics[width=\linewidth]{ 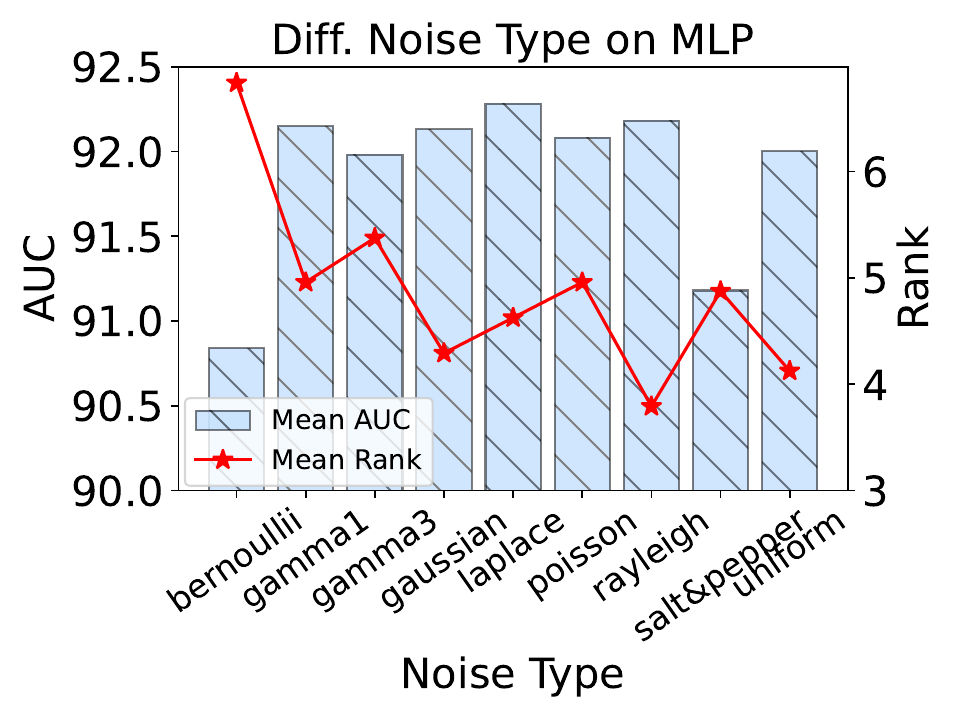}  
  \caption{AUC and Rank on MLP with Different Noise Type}
\end{subfigure}
\begin{subfigure}{.243\textwidth}
  \centering
  \includegraphics[width=\linewidth]{ 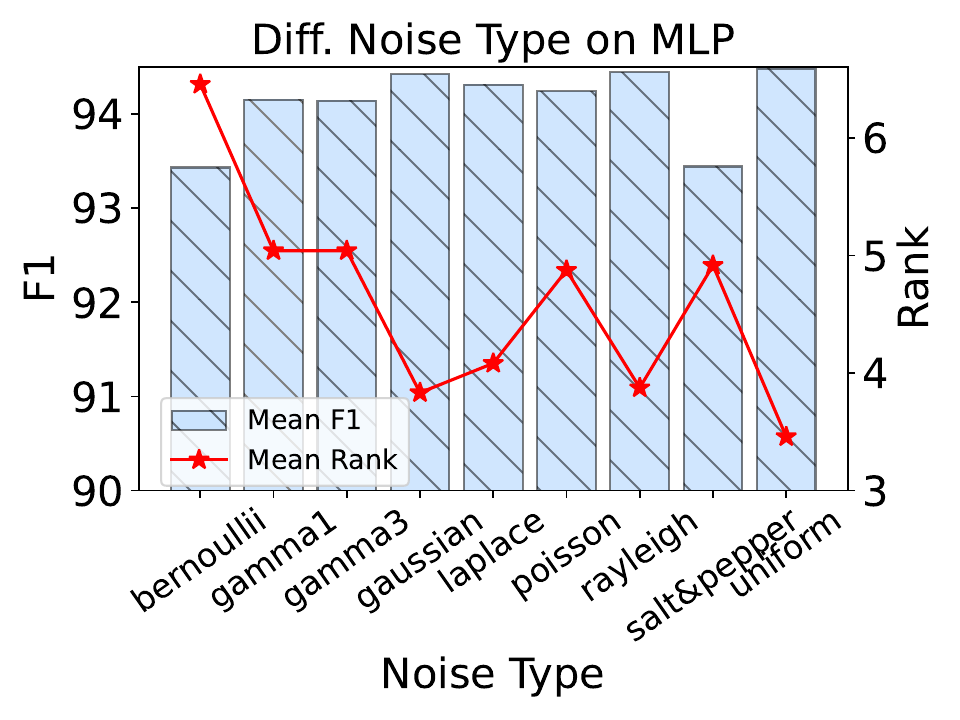}  
  \caption{F1 and Rank on MLP with Different Noise Type}
\end{subfigure}
\caption{Sensitivity of Different Noise Types. We test different noise types including Salt\&Pepper, Gaussian, Laplace, Uniform, Rayleigh, Gamma, Poisson, and Bernoulli distributions. The mean rank (the lower the better) is calculated out of $9$ different noise types. The results show that Gaussian, Rayleigh, and Uniform distributions have relatively stable performance.}
\label{fig: ablation noise type}
\end{figure*}

\subsection{Sensitivity of Different Noise Type} \label{app: type results}
We study utilizing different noise types to generate the noise. For a fair comparison, we sample all noise with $0$ mean and $\sigma$ standard deviation (noise level). We adopt several common noise distributions, including Salt\&Pepper, Gaussian, Laplace, Uniform, Rayleigh, Gamma, Poisson, and Bernoulli. For incorporating noise of a Bernoulli nature, the initial step involves the generation of a probability vector, which is derived from a uniform distribution. This vector is then utilized to produce a corresponding binary vector. The binary vector dictates the positions at which the original feature values will be altered—should the noise be present, a reversal of the feature's sign is executed. For Salt\&Pepper, similar to Bernoulli, we replace the value with the maximum or minimum value randomly in a batch instead of flipping the sign. For the other distributions, we adjust the parameter to make the generated noise having zero mean and $\sigma$ standard deviation. The detailed generation parameter is shown in Table \ref{tab: noise type}. The result is illustrated in Figure \ref{fig: ablation noise type}. Note that gamma1 and gamma3 represent gamma distribution with hyper-parameter $b=1$, and $3$, respectively. In the results, we observe that Bernoulli and Salt\&Pepper noise has a low performance score. It indicates the effectiveness of involving different noise levels in our noise generation design. Gaussian, Rayleigh, and Uniform noise have relatively stable performance with $92\%$ AUC score and $94\%$ F1 score. It reveals that our method is robust with multiple types of noise.

 \begin{table}[H]
\centering
\begin{tabular}{l|c} 
\toprule
\textbf{Methods} & \textbf{Time (s)} \\ 
\midrule
Ours & $5.32\pm0.014$   \\ 
PLAD & $6.22\pm0.382$\\
\bottomrule
\end{tabular}
\caption{Comparative study on the time cost for noise generation. Our method takes less time to generate the noised sample in an epoch (3800 batches). In addition, our method can be implemented in a pre-generation manner to further save time.}
\label{tab: noise generation time}
\end{table}

\begin{table}[H]
\setlength\tabcolsep{0.9pt}
    \centering
\begin{tabular}{l|cc} 
\toprule
\textbf{Methods} & \textbf{Training Time (s)} & \textbf{Inference Time (s)}\\ 
\midrule
Ours & $37.19\pm1.33$  &  $6.41\pm0.27$\\ 
PLAD (2022) & $40.54\pm0.38$ & $4.33\pm0.08$ \\
NeuTraLAD (2021) & $39.72\pm1.14$ & $6.79\pm0.24$ \\
DPAD (2024) & $26.84\pm0.24$ & $287.6\pm25.1$ \\
SCAD (2022) & $37.17\pm0.61$ & $6.46\pm0.28$ \\
\bottomrule
\end{tabular}
\caption{Comparative study of the time cost for model training and inference on KDDCUP-99 dataset. All methods use the same 5-layer backbone MLP. Our method takes less time to train than PLAD and NeuTraLAD. DPAD takes the least time for training because it is a density estimation method, but it takes unacceptable time for inference. Note that our method can reach a faster training speed by utilizing the pre-generated noisy sample.}
\label{tab: train eval time}
\end{table}

\section{Appendix I: Time Cost Analysis}
\subsection{Time Cost for Noise Generation}
Suppose the batch size is $b$ and the data dimensionality is $d$, the noise generation time complexity is $\mathcal{O}(bd)$ according to Algorithm \ref{alg: noise generation}. In contrast, other methods involving perturbation \cite{cai2022perturbation, qiu2021neural} and adversarial sample \cite{goyal2020drocc} have time complexity with $\mathcal{O}(bdW)$, where $W$ is workload related to a neural network module. Hence, our method for processing the generated negative sample is more efficient. We report the average time cost for generating the noise sample per epoch on KDDCUP-99 dataset in Table \ref{tab: noise generation time}. The time cost for our noise generation is $5.69\pm0.014$ seconds while the nagative sample processing time in PLAD is $6.22\pm0.38$ seconds. In addition, our noised sample can be generated before the training started. Hence, the generation time can be negligible.

\subsection{Time Cost for Training and Inference}
We also compare the training and inference time with other recent deep learning based methods, PLAD, NeuTraLAD, DPAD, and SCAD, on KDDCUP-99 dataset. The results are reported in Table \ref{tab: train eval time}.  All methods use the same 5-layer backbone MLP. Our method takes less time to train than PLAD and NeuTraLAD. DPAD takes the least time for training because it is a density estimation method, but it takes unacceptable time for inference. Note that our method can reach a faster training speed by utilizing the pre-generated noisy sample. Hence, our method is more efficient at the training stage.

\begin{table}[H]
    \centering
    \resizebox{0.95\linewidth}{!}{
\begin{tabular}{l|cc} 
\toprule
\textbf{Methods} & \textbf{Hyper-param.} & \textbf{\#} \\ 
\midrule
Ours & Noise Level, Ratio, and Type  &  3\\ 
\makecell[l]{PLAD \citep{cai2022perturbation}} & $\lambda$, Types of perturbator  & 2 \\
\makecell[l]{NeuTraLAD  \citep{qiu2021neural}} & $\tau, K, m$ & 3 \\
\makecell[l]{DPAD \\ \cite{fu2024dense}} & $m, \gamma, \lambda, k$ & 4 \\
\makecell[l]{SCAD \\ \citep{shenkar2022anomaly}} & $k, u, \tau, r$ & 4 \\ 
\makecell[l]{DROCC \citep{goyal2020drocc}} & $r, \lambda, \mu, \eta$ & 4 \\
\makecell[l]{DOHSC \citep{zhang2024deep}} & $k, k', \lambda, \mu, \nu$& 5\\
\bottomrule
\end{tabular}
}
\caption{The comparison of the number of hyper-parameters. Although PLAD has fewer hyper-parameters, the choice of different types of perturbator would lead to a completely different implementation and would result in more new hyper-parameters.}
\label{tab: hyper para}
\end{table}

\section{Appendix J: Number of Hyper-parameters }
We compare the number of hyper-parameters with other recent deep learning based UAD methods. The comparison is listed in Table \ref{tab: hyper para}. Note that all hyper-parameters of our method are solely related to noise generation, and no hyper-parameter is involved in the learning objective except for the network training design (which exists in all deep learning based methods). Although PLAD has fewer hyper-parameters, the choice of different types of perturbator would lead to a completely different implementation and would result in more new hyper-parameters. Hence, our method is easy to implement.

\section{Discussions}
\paragraph{\textbf{Interpretability}}
For those deep learning-based methods \citep{golan2018deep, hendrycks2019using, hendrycks2018deep} that only output an anomaly score, it is hard to explain the forwarding non-linear process of the neural network black box. In our scheme, we can explain a little in the output. Similar to \citep{liznerski2020explainable}, our method outputs a vector with the same size as the input data where each dimension tells how much the noise is. Hence, if one dimension has a higher value, we can say the abnormality corresponds to this feature. We leave the detailed interpretability analysis in future research.

\paragraph{\textbf{Extension to other data types }}
Here, we provide insights for adapting noise evaluation to other data types like sequential, textual, and graph data. Adapting our approach to \textbf{images}, \textbf{time series}, \textbf{text}, and \textbf{graphs} is non-trivial but realizable, as each data type has distinct characteristics.

\begin{itemize}
    \item \textbf{Image data} Images exhibit local smoothness or piecewise linear patterns across pixel values, so directly adding random noise (e.g., Gaussian) to pixel values is suboptimal. A better approach is to segment images into patches, sample contextual noise for each patch, and predict the average noise magnitude per segment.

    \item \textbf{Time series} Time series data, such as voice recordings, exhibit natural dependencies between adjacent samples, and anomalies may represent meaningful sequences. Pre-processing (e.g., ADbench\cite{han2022adbench} for speech) can extract frequency features, transforming the series into a tabular format compatible with our model.

    \item \textbf{Textual data} Defining "noise" in textual data is non-trivial. A practical approach leverages recent large language models to extract latent features from text tokens. Using the CLS token or aggregation methods can yield sentence- or document-level embeddings, allowing our noise evaluation model to operate in this latent space.

    \item \textbf{Graph data} Arbitrarily adding noise to the adjacency matrix is not feasible. One can add an extra virtual node\cite{cai2023connection} to capture the latent feature of the whole graph. Then, graph-level AD with our noise evaluation can be performed.
\end{itemize}
For all data types, instead of noise on raw data, we can apply diverse noise (e.g., Gaussian) to the latent features extracted by some (pre-trained) feature extractors (e.g. encoders) and then predict the noise amplitude. However, there is a risk that the feature extractor may fail to preserve the information of anomalies, which will lead to failure in detecting anomalies. A promising direction for further research involves training the feature extractor, a decoder, and the noise detector jointly, directly in the data space rather than the feature space.



\paragraph{Effectiveness on Dataset with Multiple Dense Regions}
Figure 1 is just for illustration. Real-world normal data can come from several dense regions, which can be illustrated by a larger picture combining multiple patterns like Figure 1. Our method and theory are applicable to datasets spanning multiple clusters. For instance, $\tilde{\mathcal{D}}_H$ is the union of multiple hard-anomaly regions corresponding to different clusters.

We observe from real-world examples that our noise evaluation model remains effective when the dataset spans multiple clusters. 
For instance, in ADBench Letter and Vowels datasets, a t-SNE visualization in Figure \ref{fig: dataset visual} shows that Letter is composed of 5 clusters and Vowels has 3 dense regions. In Tables 5 and 6 of Appendix F, our method can obtain a high-performance score in terms of AUC (90.87\%/99.03\%) and F1 (67.79\%/86.67\%) on these datasets.

\begin{figure}[h]
\begin{subfigure}{.45\linewidth}
  \centering
  \includegraphics[width=\linewidth]{ 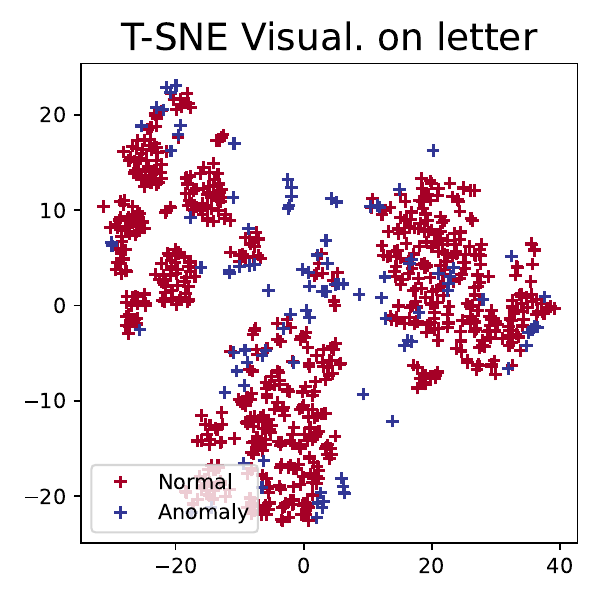}  
\end{subfigure}
\quad\quad\quad
\begin{subfigure}{.45\linewidth}
  \centering
  \includegraphics[width=\linewidth]{ 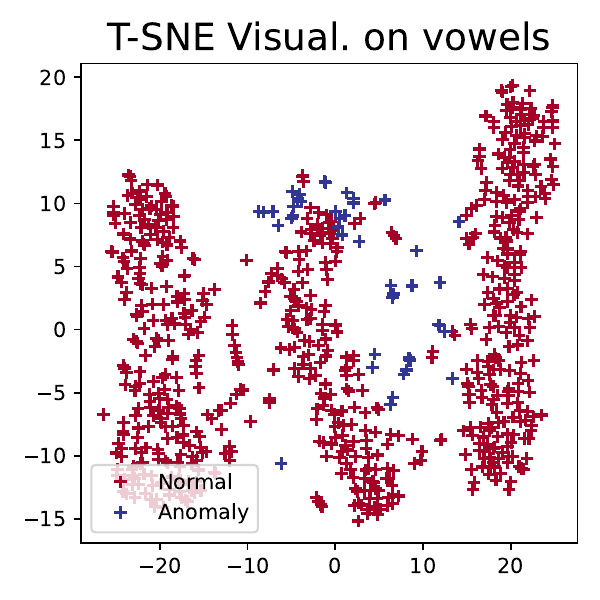}  
\end{subfigure}
\caption{T-SNE Visual. of Letter and Vowels}
\label{fig: dataset visual}
\end{figure}

\end{document}